\definecolor{backcolour}{rgb}{0.95,0.95,0.92}
\DeclareMathOperator*{\argmin}{arg\,min}
\definecolor{mycolor}{rgb}{0.122, 0.435, 0.698}
\newcommand{\mybox}[1]{%
  \setbox0=\hbox{#1}%
  \setlength{\@tempdima}{\dimexpr\wd0+13pt}%
  \begin{tcolorbox}[colframe=mycolor,boxrule=0.5pt,arc=4pt,
      left=6pt,right=6pt,top=6pt,bottom=6pt,boxsep=0pt,width=\@tempdima]
    #1
  \end{tcolorbox}
}
\newtheorem*{rep@theorem}{\rep@title}
\newcommand{\newreptheorem}[2]{%
\newenvironment{rep#1}[1]{%
 \def\rep@title{#2 \ref{##1}}%
 \begin{rep@theorem}}%
 {\end{rep@theorem}}}
\newtheorem{theorem}{Theorem}
\newtheorem{lemma}{Lemma}
\newtheorem{prop}{Proposition}
\newtheorem{corollary}{Corollary}
\newtheorem{definition}{Definition}
\newtheorem{remark}{Remark}
\DeclareMathOperator{\arcsinh}{arcsinh}
\newcolumntype{?}[1]{!{\vrule width #1}}
\pgfplotsset{compat=1.16}
\DeclareMathOperator{\sign}{sign}
\begin{document}

\title{A Tunable Loss Function for Robust Classification: Calibration, Landscape, and Generalization\footnote{This paper was presented in part at the $2019$ and $2020$ IEEE International Symposium on Information Theory~\cite{sypherd2019tunable,sypherd2020alpha}.}}
\author{%
  Tyler Sypherd*, Mario Diaz$\dagger$, John Kevin Cava*, \\ Gautam Dasarathy*, Peter Kairouz$\ddagger$, and Lalitha Sankar*  \\
*Arizona State University, \texttt{\{tsypherd,jcava,gautamd,lsankar\}@asu.edu} \\
$\dagger$ Universidad Nacional Aut\'{o}noma de M\'{e}xico, \texttt{mario.diaz@sigma.iimas.unam.mx} \\
$\ddagger$ Google AI, \texttt{kairouz@google.com}\\
}

\date{}
\maketitle


\begin{abstract}
We introduce a tunable loss function called $\alpha$-loss, parameterized by $\alpha \in (0,\infty]$, which interpolates between the exponential loss ($\alpha = 1/2$), the log-loss ($\alpha = 1$), and the 0-1 loss ($\alpha = \infty$),
for the machine learning setting of classification.
Theoretically, we illustrate a fundamental connection between $\alpha$-loss and Arimoto conditional entropy, verify the classification-calibration of $\alpha$-loss in order to demonstrate asymptotic optimality via Rademacher complexity generalization techniques, and build-upon a notion called strictly local quasi-convexity in order to quantitatively characterize the optimization landscape of $\alpha$-loss.
Practically, we perform class imbalance, robustness, and classification experiments on benchmark image datasets using convolutional-neural-networks. 
Our main practical conclusion is that certain tasks may benefit from tuning $\alpha$-loss away from log-loss ($\alpha = 1$), and to this end we provide simple heuristics for the practitioner.
In particular, navigating the $\alpha$ hyperparameter can readily provide superior model robustness to label flips ($\alpha > 1$) and sensitivity to imbalanced classes ($\alpha < 1$).
\end{abstract}
\newpage
\begin{IEEEkeywords}
Arimoto conditional entropy, classification-calibration, $\alpha$-loss, robustness, generalization, strictly local quasi-convexity.
\end{IEEEkeywords}

\section{Introduction}
In the context of machine learning, the performance of a classification algorithm, in terms of accuracy, tractability, and convergence guarantees crucially depends on the choice of the loss function during training~\cite{friedman2001elements,shalev2014understanding}. Consider a feature vector $X \in \mathcal{X}$, an unknown finite-valued label $Y \in \mathcal{Y}$, and a hypothesis $h:\mathcal{X} \rightarrow \mathcal{Y} $. The canonical $0$-$1$ loss, given by $\mathbbm{1}[h(X) \neq Y]$, is considered an ideal loss function in the classification setting that captures the probability of incorrectly guessing the true label $Y$ using $h(X)$. However, since the $0$-$1$ loss is neither continuous nor differentiable, its applicability in state-of-the-art learning algorithms is highly restricted \cite{ben2003difficulty}.
As a consequence, \textit{surrogate} loss functions that approximate the $0$-$1$ loss such as log-loss, exponential loss, sigmoid loss, etc. have garnered much interest \cite{bartlett2006convexity,masnadi2009design,lin2004note,nguyen2009,rosasco2004loss,nguyen2013algorithms,singh2010loss,tewari2007consistency,zhao2010convex,barron2019general,lin2017focal}. 

In the field of information-theoretic privacy, Liao \textit{et al.} recently introduced a tunable loss function called $\alpha$-loss for $\alpha \in [1,\infty]$ to model the inferential capacity of an adversary to obtain private attributes~\cite{liao2018tunable,liao2019robustness,liao2020maximal}.
For $\alpha = 1$, $\alpha$-loss reduces to log-loss which models a belief-refining adversary; for $\alpha = \infty$, $\alpha$-loss reduces to the probability of error which models an adversary that makes hard decisions. 
Using $\alpha$-loss, Liao \textit{et al.} in~\cite{liao2018tunable} derived a new privacy measure called \textit{$\alpha$-leakage} which continuously interpolates between Shannon's mutual information~\cite{shannon2001mathematical} and maximal leakage introduced by Isaa \textit{et al.}~\cite{issa2019operational}; indeed, Liao \textit{et al.} showed that $\alpha$-leakage is equivalent to the Arimoto mutual information~\cite{alphamutualinfo}.
%
In this paper, we extend $\alpha$-loss to the range $\alpha \in (0,\infty]$ and propose it as a tunable \textit{surrogate} loss function for the ideal 0-1 loss in the machine learning setting of classification. 
Through our extensive analysis, we argue that: 1)~since $\alpha$-loss continuously interpolates between the exponential ($\alpha = 1/2$), log ($\alpha = 1$), and $0$-$1$ ($\alpha = \infty$) losses and is intimately related to the Arimoto conditional entropy, it is theoretically an object of interest in its own right; 2)~navigating the convexity/robustness trade-offs inherent in the $\alpha$ hyperparameter offers significant practical improvements over log-loss, which is a canonical loss function in machine learning, and can be done quickly and effectively.


\subsection{Related Work}
The study and implementation of tunable utility (or loss) metrics which continuously interpolate between useful quantities is a persistent theme in information theory, networking, and machine learning. 
In information theory, Rényi entropy generalized the Shannon entropy~\cite{renyi1961}, and Arimoto extended the Rényi entropy to conditional distributions~\cite{ARIMOTO1971181}. 
This led to the $\alpha$-mutual information~\cite{alphamutualinfo,sason2017arimoto}, which is directly related to a recently introduced privacy measure called $\alpha$-leakage~\cite{liao2018tunable}. 
More recently in networking, Mo~\textit{et al.} introduced $\alpha$-fairness in~\cite{mo2000fair}, which is a tunable utility metric that alters the value of different edge users; similar ideas have recently been studied in the federated learning setting~\cite{li2019fair}.
Even more recently in machine learning, Barron in~\cite{barron2019general} presented a tunable extension of the $l_{2}$ loss function, which interpolates between several known $l_{2}$-type losses and has similar convexity$/$robustness themes as this work.
Presently, there is a need in the machine learning setting of \textit{classification} for alternative losses to the cross-entropy loss (one-hot encoded log-loss)~\cite{janocha2017loss}. 
We propose $\alpha$-loss, which continuously interpolates between the exponential, log, and $0$-$1$ losses, as a viable solution. 

In order to evaluate the statistical efficacy of loss functions in the learning setting of classification, Bartlett \textit{et al.} proposed the notion of \textit{classification-calibration} in a seminal paper~\cite{bartlett2006convexity}. 
\textit{Classification-calibration} is analogous to point-wise Fisher consistency in that it requires that the minimizer of the conditional expectation of a loss function agrees in sign with the Bayes predictor for every value of the feature vector.
A more restrictive notion called \textit{properness} requires that the minimizer of the conditional expectation of a loss function exactly replicates the true posterior~\cite{nock2020supervised,walder2020all,reid2010composite}.
\textit{Properness} of a loss function is a necessary condition for efficacy in the class probability estimation setting (see, e.g.,~\cite{reid2010composite}), but for the classification setting which is the focus of this work, the notion of \textit{classification-calibration} is sufficient. 
In the sequel, we find that the margin-based form of $\alpha$-loss is classification-calibrated for all $\alpha \in (0,\infty]$ and thus satisfies this necessary condition for efficacy in binary classification.

%
%
%

While early research was predominantly focused on convex losses~\cite{bartlett2006convexity,rosasco2004loss,nguyen2009,lin2004note}, more recent works propose the use of non-convex losses as a means to moderate the behavior of an algorithm~\cite{mei2018landscape,nguyen2013algorithms,masnadi2009design,barron2019general}. This is due to the increased robustness non-convex losses offer over convex losses \cite{long2010random,mei2018landscape,barron2019general} and the fact that modern learning models (e.g., deep learning) are inherently non-convex as they involve vast functional compositions~\cite{goodfellow2016deep}. 
There have been numerous theoretical attempts to capture the non-convexity of the optimization landscape
which is the loss surface induced by the learning model, underlying distribution, and the surrogate loss function itself~\cite{mei2018landscape,hazan2015beyond,li2018visualizing,nguyen2017loss,fu2018guaranteed,liang2018understanding,engstrom2019exploring,chaudhari2018deep}.
To this end, Hazan \textit{et al.}~\cite{hazan2015beyond} introduce the notion of \textit{strictly local quasi-convexity} (SLQC) to parametrically quantify approximately quasi-convex functions, and provide convergence guarantees for the Normalized Gradient Descent (NGD) algorithm (originally introduced in~\cite{nesterov1984minimization}) for such functions.
Through a quantification of the SLQC parameters of the expected $\alpha$-loss, we provide some estimates that strongly suggest that the degree of convexity increases as $\alpha$ decreases less than $1$ (log-loss); conversely, the degree of convexity decreases as $\alpha$ increases greater than $1$.
Thus, we find that there exists a trade-off inherent in the choice of $\alpha \in (0,\infty]$, i.e., trade convexity (and hence optimization speed) for robustness and vice-versa. 
Since increasing the degree of convexity of the optimization landscape is conducive to faster optimization, our approach could serve as an alternative to other approaches whose objective is to accelerate the optimization process, e.g., the activation function tuning in~\cite{benigni2019eigenvalue,xiao2018dynamical,pennington2017nonlinear} and references therein.

Understanding the generalization capabilities of learning algorithms stands as one of the key problems in theoretical machine learning. A classical approach to this problem consists in deriving algorithm independent generalization bounds, mainly relying on the notion of Rademacher complexity \cite[Ch.~26]{shalev2014understanding}. A recent line of research, initiated by the works of Russo and Zou \cite{russo2020how} and Xu and Raginsky \cite{xu2017information}, aims to improve generalization bounds by considering the statistical dependency between the input and the output of a given learning algorithm. While there are many extensions and refinements, e.g., \cite{lopez2018generalization,wang2019information,bu2020tightening,steinke2020reasoning,esposito2021generalization,galvez2021tighter,neu2021information}, these results are inherently algorithm dependent which makes them hard to instantiate and obfuscates the role of the loss function. Hence, in this work we rely on classical Rademacher complexity tools to provide algorithm independent generalization bounds that lead to the asymptotic optimality of $\alpha$-loss w.r.t.\ the 0-1 loss.

There are a few proposed tunable loss functions for the classification setting in the literature~\cite{wang2019symmetric,amid2019robust,nguyen2013algorithms,li2021tilted}. 
Notably, the symmetric cross entropy loss introduced by Wang \textit{et al.} in~\cite{wang2019symmetric} proposes the tunable linear combination of the usual cross entropy loss with the so-called reverse cross entropy loss, which essentially reverses the roles of the one-hot encoded labels and soft prediction of the model. 
Wang \textit{et al.} report gains under symmetric and asymmetric noisy labels, particularly in the very high noise regime. 
Another approach introduced by Amid \textit{et al.} in~\cite{amid2019robust} is a bi-tempered logistic loss, which is based on Bregman divergences. 
As the name suggests, the bi-tempered logistic loss depends on two temperature hyperparameters, which Amid \textit{et al.} show improvements over vanilla cross-entropy loss again on noisy data. 
Recently, Li \textit{et al.} introduced tilted empirical risk minimization~\cite{li2021tilted}, a framework which parametrically generalizes empirical risk minimization using a log-exponential transformation to induce fairness or robustness in the model.
Contrasting with this work, we note that our study is exclusively focused on $\alpha$-loss acting within empirical risk minimization.

Summing up, the main distinctions that differentiate this work from related work are that $\alpha$-loss has a fundamental relationship to the Arimoto conditional entropy, continuously interpolates between the exponential, log, and $0$-$1$ losses, and provides robustness to noisy labels \textit{and} sensitivity to imbalanced classes.
Lastly, we remark that $\alpha$-loss has also been recently investigated in the context of generative adversarial networks~\cite{kurri2021realizing} and boosting~\cite{nock2021properly}.

%
%

%

\subsection{Contributions}
%
%
%
The following are the main contributions of this paper: 
\begin{itemize}
    \item We formulate $\alpha$-loss in the classification setting, extending it to $\alpha \in (0,1)$, and we thereby extend the result of Liao \textit{et al.} in~\cite{liao2018tunable} which characterizes the relationship between $\alpha$-loss and the Arimoto conditional entropy.
    \item For binary classification, we define a margin-based form of $\alpha$-loss and demonstrate its equivalence to $\alpha$-loss for all $\alpha \in (0,\infty]$. 
    We then characterize convexity and verify statistical calibration of the margin-based $\alpha$-loss for $\alpha \in (0,\infty]$. 
    We next derive the minimum conditional risk of the margin-based $\alpha$-loss, which we show recovers the relationship between $\alpha$-loss and the Arimoto conditional entropy for all $\alpha \in (0,\infty]$. 
    Lastly, we provide synthetic experiments on a two-dimensional Gaussian mixture model with asymmetric label flips and class imbalances, where we train linear predictors with $\alpha$-loss for several values of $\alpha$. 
    \item For the logistic model in binary classification, we show that the expected $\alpha$-loss is convex in the logistic parameter for $\alpha \leq 1$ (strongly-convex when the covariance matrix is positive definite), and we show that it retains convexity as $\alpha$ increases greater than $1$ provided that the radius of the parameter space is small enough. We provide a point-wise extension of \textit{strictly local quasi-convexity} (SLQC) by Hazan \textit{et al.}, and we reformulate SLQC into a more tractable inequality using a geometric inequality which may be of independent interest. 
    Using a bootstrapping technique which also may be of independent interest, 
    we provide bounds in order to quantify the evolution of the SLQC parameters as $\alpha$ increases.
    \item Also for the logistic model in binary classification, we characterize the generalization capabilities of $\alpha$-loss. To this end, we employ standard Rademacher complexity generalization techniques to derive a uniform generalization bound for the logistic model trained with $\alpha$-loss for $\alpha \in (0,\infty]$. We then combine a result by Bartlett \textit{et al.} and our uniform generalization bound to show (under standard distributional assumptions) that the minimizer of the empirical $\alpha$-loss is asymptotically optimal with respect to the expected $0$-$1$ loss (probability of error), which is the ideal metric in classification problems. 
    \item Finally, we perform symmetric noisy label and class imbalance experiments on MNIST, FMNIST, and CIFAR-10 using convolutional-neural-networks. We show that models trained with $\alpha$-loss can either be more robust or sensitive to outliers (depending on the application) over models trained with log-loss ($\alpha = 1$).
    Following some of our theoretical intuitions, we demonstrate the "Goldilocks zone" of $\alpha \in (0,\infty]$, i.e., for most applications $\alpha^{*} \in [.8,8]$.
    Thus, we argue that $\alpha$-loss is an effective generalization of log-loss (cross-entropy loss) for classification problems in modern machine learning. 
\end{itemize}

Different subsets of the authors published portions of this paper as conference proceedings in~\cite{sypherd2019tunable} and~\cite{sypherd2020alpha}. 
Specifically, results provided in~\cite{sypherd2019tunable} primarily comprise a subset of the second bullet in the list above, however, this work extends those published results to $\alpha \in (0,1)$, clarifies the relationship to Arimoto conditional entropy, and provides synthetic experiments; in addition, results in~\cite{sypherd2020alpha} primarily comprise a subset of the third bullet in the list above, however, this work provides a new convexity result for $\alpha > 1$, provides SLQC background material including a point-wise statement and proof of Lemma~\ref{lemma:SLQCreformulation}, and utilizes a bootstrapping argument which significantly improves the bounds in~\cite{sypherd2020alpha}.
%
The remaining three bullets are all comprised of unpublished work.
\section{Information-Theoretic Motivations of $\alpha$-loss} \label{sec:Prelim}
Consider a pair of discrete random variables $(X,Y) \sim P_{X,Y}$. Observing $X$, one can construct an estimate $\hat{Y}$ of $Y$ such that $Y - X - \hat{Y}$ form a Markov chain. In this context, it is possible to evaluate the fitness of a given estimate $\hat{Y}$ using a loss function $\ell:\mathcal{Y}\times\mathcal{P}(\mathcal{Y})\to\mathbb{R}_{+}$ via the expectation
\begin{equation}
    \mathbb{E}_{X,Y}\left[\ell(Y,P_{\hat{Y} \vert X})\right],
\end{equation}
where $\hat{Y}|X \sim P_{\hat{Y} \vert X}$ is the \textit{learner's} posterior estimate of $Y$ given knowledge of $X$; for simplicity we sometimes abbreviate $P_{\hat{Y} \vert X=x}$ as $\hat{P}$ when the context is clear. 
In \cite{liao2018tunable}, Liao \textit{et al.} proposed the definition of $\alpha$-loss for $\alpha \in [1,\infty]$ in order to quantify adversarial action in the information leakage context. We adapt and extend the definition of $\alpha$-loss to $\alpha \in (0,\infty]$ in order to study the efficacy of the loss function in the machine learning setting. 
\begin{definition}
\label{def:alphaloss}
Let $\mathcal{P}(\mathcal{Y})$ be the set of probability distributions over $\mathcal{Y}$. We define $\alpha$-loss for $\alpha \in (0,1) \cup (1,\infty)$, $l^{\alpha}:\mathcal{Y} \times \mathcal{P}(\mathcal{Y}) \rightarrow \mathbb{R}_{+}$ as
\begin{equation} \label{eq:alphaloss_prob}
l^{\alpha}(y,\hat{P}) := \frac{\alpha}{\alpha - 1}\left(1 - \hat{P}(y)^{1 - 1/\alpha}\right),
\end{equation} 
and, by continuous extension, $l^{1}(y,\hat{P}) := -\log{\hat{P}(y)}$ and $l^{\infty}(y,\hat{P}) := 1 - \hat{P}(y)$.
\end{definition}
Observe that for $(y,\hat{P})$ fixed, $l^{\alpha}(y,\hat{P})$ is continuous and monotonically decreasing in $\alpha$. 
Also note that $l^{1}$ recovers log-loss, and plugging in $\alpha = 1/2$ yields $l^{1/2}(y,\hat{P}) := \hat{P}^{-1}(y) - 1$.
\begin{figure}[h] 
    \centering
    \centerline{\includegraphics[scale=.25]{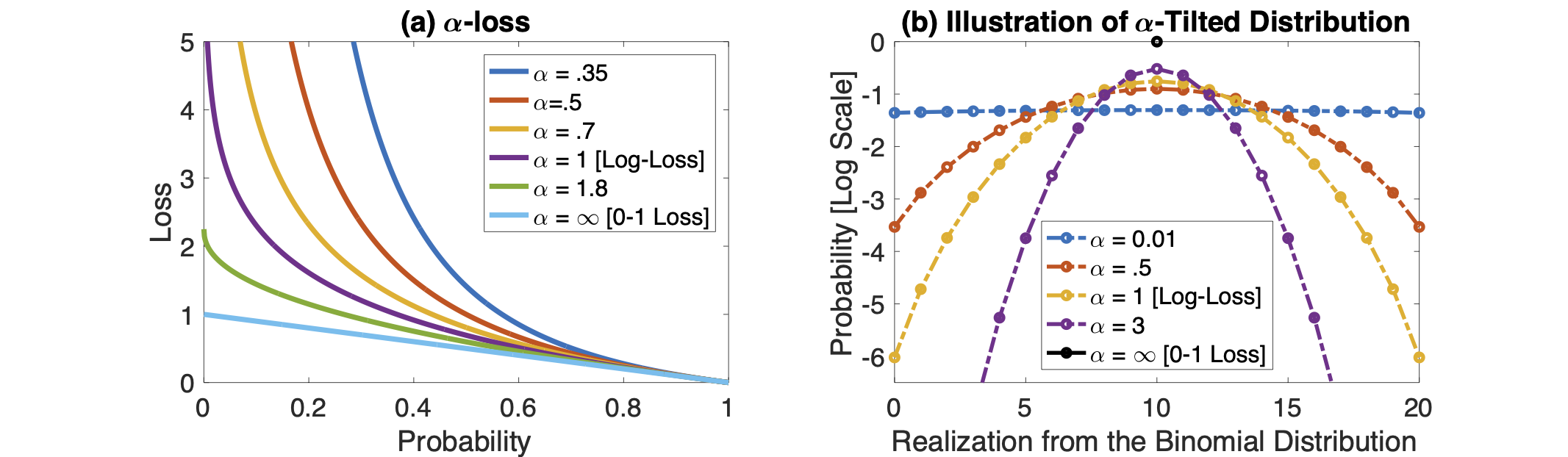}}
    \caption{\textbf{(a)} $\alpha$-loss \eqref{eq:alphaloss_prob} as a function of the probability for several values of $\alpha$; \textbf{(b)} $\alpha$-tilted posterior \eqref{eq:tilteddistribution} for several values of $\alpha$ where the true underlying distribution is the (20,0.5)-binomial distribution.}
    \label{fig:probandtilt}
\end{figure}
One can use expected $\alpha$-loss $\mathbb{E}_{X,Y}[l^{\alpha}(Y,P_{\hat{Y}|X})]$, hence called $\alpha$-risk, to quantify the effectiveness of the estimated posterior $P_{\hat{Y}|X}$. In particular, 
\begin{equation} \label{inf1}
\mathbb{E}_{X,Y}\left[l^{1}(Y,P_{\hat{Y}|X})\right] = \mathbb{E}_{X}\left[H(P_{Y|X=x},P_{\hat{Y}|X=x})\right],
\end{equation}
where $H(P,Q) := H(P) + D_{\textnormal{KL}}(P\|Q)$ is the cross-entropy between $P$ and $Q$. Similarly,
\begin{equation} \label{inf2}
\mathbb{E}_{X,Y}[l^{\infty}(Y,P_{\hat{Y}|X})] = \mathbb{P}[Y \neq \hat{Y}],
\end{equation}
i.e., the expected $\alpha$-loss for $\alpha = \infty$ equals the probability of error. 
Recall that the expectation of the canonical 0-1 loss, $\mathbb{E}_{X,Y}[\mathbbm{1}[Y \neq \hat{Y}]]$, also recovers the probability of error~\cite{shalev2014understanding}.
For this reason, we sometimes refer to $l^{\infty}$ as the $0$-$1$ loss.

Observe that $\alpha$-loss presents a tunable class of loss functions that value the probabilistic estimate of the label differently as a function of $\alpha$; see Figure~\ref{fig:probandtilt}(a).
In the sequel, we find that, when composed with a sigmoid, $l^{1/2},l^{1},l^{\infty}$ become the exponential, logistic, and sigmoid (smooth $0$-$1$) losses, respectively.
While we note that there may be infinitely many ways to continuously interpolate between the exponential, log, and $0$-$1$ losses, we observe that the interpolation introduced by \textit{$\alpha$-loss} is monotonic in $\alpha$, seems to provide an information-theoretic interpretation (Proposition \ref{Prop:Liao}), and also appears to be apt for the classification setting which will be further elaborated in the sequel. 
The following result was shown by Liao \textit{et al.} in \cite{liao2018tunable} for $\alpha \in [1,\infty]$ and provides an explicit characterization of the optimal risk-minimizing posterior under $\alpha$-loss. We extend the result to $\alpha \in (0,1)$.
\begin{prop} \label{Prop:Liao}
For each $\alpha\in (0,\infty]$, the minimal $\alpha$-risk is
\begin{equation} \label{eq:minriskarimotorelation}
 \min_{P_{\hat{Y}|X}} \mathbb{E}_{X,Y}\big[ l^{\alpha}(Y,P_{\hat{Y}|X})\big] = \frac{\alpha}{\alpha -1}\left(1 - e^{\frac{1-\alpha}{\alpha}H_{\alpha}^{A}(Y|X)}\right).
 \end{equation}
 where $H_{\alpha}^{A}(Y|X) := \dfrac{\alpha}{1 - \alpha} \log{\sum\limits_{x}\Big(\sum\limits_{y} P_{X,Y}(x,y)^{\alpha}\Big)^{1/\alpha}}$ is the Arimoto conditional entropy of order $\alpha$ \cite{arimoto1977information}. The resulting unique minimizer, $\hat{P}^{*}_{\alpha}$, is the $\alpha$-tilted true posterior
 \begin{equation} \label{eq:tilteddistribution}
 \hat{P}^{*}_{\alpha}(y|x) 
 = \dfrac{P_{Y|X}(y|x)^{\alpha}}{\sum\limits_{y} P_{Y|X} (y|x)^{\alpha}}.
\end{equation}
\end{prop}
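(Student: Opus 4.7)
The plan is to exploit the tower property to reduce the minimization over $P_{\hat{Y}|X}$ to a point-wise (in $x$) convex program over the simplex $\mathcal{P}(\mathcal{Y})$, solve it explicitly by Lagrange multipliers, and then recognize the resulting minimum value as the Arimoto conditional entropy term on the right-hand side of~\eqref{eq:minriskarimotorelation}. Writing $\mathbb{E}_{X,Y}[l^{\alpha}(Y,P_{\hat{Y}|X})] = \mathbb{E}_X\big[\sum_y P_{Y|X}(y|x)\, l^{\alpha}(y, P_{\hat{Y}|X=x})\big]$ and noting that $P_{\hat{Y}|X=x}$ may be chosen independently for each $x$, the problem reduces to solving, for each fixed $x$, $\min_{\hat{P}\in \mathcal{P}(\mathcal{Y})} F_x(\hat{P})$ with $F_x(\hat{P}) := \frac{\alpha}{\alpha-1}\sum_y P_{Y|X}(y|x)\big(1 - \hat{P}(y)^{1-1/\alpha}\big)$.

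Next, I would verify that $F_x$ is strictly convex in $\hat{P}$ for every $\alpha \in (0,\infty)\setminus\{1\}$: the two sign flips coming from the prefactor $\tfrac{\alpha}{\alpha-1}$ and from the exponent $1-1/\alpha$ cancel, yielding the second derivative $\partial^2_{\hat{P}(y)} F_x(\hat{P}) = \alpha^{-1} P_{Y|X}(y|x)\, \hat{P}(y)^{-1-1/\alpha} \geq 0$. This is the main obstacle in extending the result to $\alpha \in (0,1)$, since the original argument of~\cite{liao2018tunable} was stated only for $\alpha \geq 1$; here the sign bookkeeping has to be handled explicitly. A useful side benefit is that for $\alpha \in (0,1)$ the loss blows up at the boundary of the simplex (the exponent $1-1/\alpha$ is negative), which forces the unique minimizer to lie in the interior. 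A standard Lagrangian argument $\mathcal{L}(\hat{P},\lambda) = F_x(\hat{P}) + \lambda\big(\sum_y \hat{P}(y) - 1\big)$ then yields the stationarity condition $-P_{Y|X}(y|x)\hat{P}(y)^{-1/\alpha} + \lambda = 0$, which immediately gives $\hat{P}(y) \propto P_{Y|X}(y|x)^{\alpha}$; normalization recovers the $\alpha$-tilted posterior $\hat{P}^*_{\alpha}$ of~\eqref{eq:tilteddistribution}.

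Finally, plugging $\hat{P}^*_{\alpha}$ back in collapses $\sum_y P_{Y|X}(y|x)\hat{P}^*_{\alpha}(y)^{1-1/\alpha}$ telescopically to $\big(\sum_y P_{Y|X}(y|x)^{\alpha}\big)^{1/\alpha}$, producing the point-wise minimum $\frac{\alpha}{\alpha-1}\big(1 - (\sum_y P_{Y|X}(y|x)^{\alpha})^{1/\alpha}\big)$. Taking the outer expectation over $X$ and using $P_{X,Y}(x,y)^{\alpha} = P_X(x)^{\alpha}P_{Y|X}(y|x)^{\alpha}$ converts $\mathbb{E}_X\big[\big(\sum_y P_{Y|X}(y|x)^{\alpha}\big)^{1/\alpha}\big]$ into $\sum_x \big(\sum_y P_{X,Y}(x,y)^{\alpha}\big)^{1/\alpha}$, which by the definition of $H_{\alpha}^{A}(Y|X)$ equals $e^{(1-\alpha)H_{\alpha}^{A}(Y|X)/\alpha}$. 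This establishes~\eqref{eq:minriskarimotorelation}; the boundary cases $\alpha = 1$ and $\alpha = \infty$ then follow by continuous extension in $\alpha$ of both the loss and the identity just derived.
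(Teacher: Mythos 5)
Your proposal is correct and follows essentially the same route the paper relies on: the paper gives no proof of its own, deferring to Liao \emph{et al.}~\cite{liao2018tunable} for $\alpha\in[1,\infty]$ and asserting the extension to $\alpha\in(0,1)$ uses ``similar techniques,'' and those techniques are exactly your pointwise reduction plus Lagrangian stationarity, with your explicit second-derivative check $\alpha^{-1}P_{Y|X}(y|x)\hat{P}(y)^{-1-1/\alpha}\geq 0$ supplying the sign bookkeeping that makes the $(0,1)$ case go through. The only cosmetic looseness is the appeal to ``continuous extension'' for $\alpha=\infty$, where a direct argument (the objective becomes linear on the simplex and is minimized by the MAP point mass) is cleaner than interchanging the limit with the minimization, but this does not affect correctness.
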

The proof of Proposition~\ref{Prop:Liao} for $\alpha \in [1,\infty]$ can be found in \cite{liao2018tunable} and is readily extended to the case where $\alpha \in (0,1)$ with similar techniques.
Through Proposition~\ref{Prop:Liao}, we note that $\alpha$-loss exhibits different operating conditions through the choice of $\alpha$.
Observe that the minimizer of~\eqref{eq:minriskarimotorelation} given by the $\alpha$-tilted distribution in~\eqref{eq:tilteddistribution} recovers the true posterior only if $\alpha = 1$, i.e., for log-loss.
Further, as $\alpha$ decreases from 1 towards 0, $\alpha$-loss places increasingly higher weights on the low probability outcomes;
on the other hand as $\alpha$ increases from 1 to $\infty$, $\alpha$-loss increasingly limits the effect of the low probability outcomes.
Ultimately, we find that for $\alpha=\infty$, minimizing the corresponding risk leads to making a single guess on the most likely label, i.e., MAP decoding.
See Figure~\ref{fig:probandtilt}(b) for an illustration of the $\alpha$-tilted distribution on a (20,0.5)-Binomial distribution. 
Intuitively, empirically minimizing $\alpha$-loss for $\alpha \neq 1$ could be a boon for learning the minority class ($\alpha < 1$) or ignoring label noise ($\alpha > 1$); see Section~\ref{sec:Experiments} for experimental consideration of such class imbalance and noisy label trade-offs.

With the information-theoretic motivations of $\alpha$-loss behind us, we now consider the setting of binary classification, where we study the optimization, statistical, and robustness properties of $\alpha$-loss. 
\section{$\alpha$-loss in Binary Classification} \label{sec:alphalossbinaryclassification}
In this section, we study the role of $\alpha$-loss in binary classification. First, we provide its margin-based form, which we show is intimately related to the original $\alpha$-loss formulation in Definition \ref{def:alphaloss}; next, we analyze the optimization characteristics and statistical properties of the margin-based $\alpha$-loss where we notably recover the relationship between $\alpha$-loss and the Arimoto conditional entropy in the margin setting; finally, we comment on the robustness and sensitivity trade-offs which are inherent in the choice of $\alpha$ through theoretical discussion and experimental considerations. First, however, we formally discuss the binary classification setting through the role of classification functions and surrogate loss functions.

In binary classification, the learner ideally wants to obtain a classifier $h: \mathcal{X} \rightarrow \{-1,+1\}$ that  minimizes the probability of error, or the risk (expectation) of the $0$-$1$ loss, given by 
\begin{equation} \label{eq:probabilityoferror}
R(h) = \mathbb{P}[h(X) \neq Y], 
\end{equation}
where the true $0$-$1$ loss given by $\mathbbm{1}[h(X) \neq Y]$.
Unfortunately, this optimization problem is NP-hard~\cite{ben2003difficulty}. Therefore, the problem is typically relaxed by imposing restrictions on the space of possible classifiers and by choosing surrogate loss functions with desirable properties. 
%
Thus during the training phase, it is common to optimize a surrogate loss function over classification functions of the form $f:\mathcal{X}\rightarrow \overline{\mathbb{R}}$, $\overline{\mathbb{R}} = \mathbb{R} \cup \{\pm{\infty}\}$, whose output captures the certainty of a model's prediction of the true underlying binary label $Y \in \{-1,1\}$ associated with $X$~\cite{bartlett2006convexity,lin2004note,nguyen2009,masnadi2009design,sypherd2019tunable,schapire2013boosting,shalev2014understanding,friedman2001elements}. 
Once a suitable classification function has been chosen, the classifier is obtained by making a hard decision, i.e., the model outputs the classification $h(X) = \text{sign}({f(X)})$, in order to predict the true underlying binary label $Y \in \{-1,1\}$ associated with the feature vector $X \in \mathcal{X}$. 
Examples of learning algorithms which optimize surrogate losses over classification functions include SVM (hinge loss), logistic regression (logistic loss), and AdaBoost (exponential loss), to name a few \cite{friedman2001elements}.
With the notions of classification functions and surrogate loss functions in hand, we now turn our attention to an important family of surrogate loss functions in binary classification.
%

%

\subsection{Margin-based $\alpha$-loss}
Here, we provide the definition of $\alpha$-loss in binary classification and characterize its relationship to the form presented in Definition \ref{def:alphaloss}. 
First, we discuss an important family of loss functions in binary classification called \emph{margin-based} loss functions. 

%
A loss function is said to be margin-based if, for all $x\in\mathcal{X}$ and $y\in \{-1,+1\}$, the loss associated to a pair $(y,f(x))$ is given by $\tilde{l}(yf(x))$ for some function $\tilde{l}: \overline{\mathbb{R}}\to\mathbb{R}_{+}$~\cite{bartlett2006convexity,lin2004note,masnadi2009design,nguyen2009,janocha2017loss}. In this case, the loss of the pair $(y,f(x))$ only depends on the product $z:=yf(x)$, the (unnormalized) margin \cite{schapire2013boosting}. Observe that a negative margin corresponds to a mismatch between the signs of $f(x)$ and $y$, i.e., a classification error by $f$. Similarly, a positive margin corresponds to a match between the signs of $f(x)$ and $y$, i.e., a correct classification by $f$.  
We now provide the margin-based form of $\alpha$-loss, which is illustrated in Figure~\ref{fig:margin_condrisk}(a).
\begin{figure}[h] 
    \centering
    \centerline{\includegraphics[scale=.25]{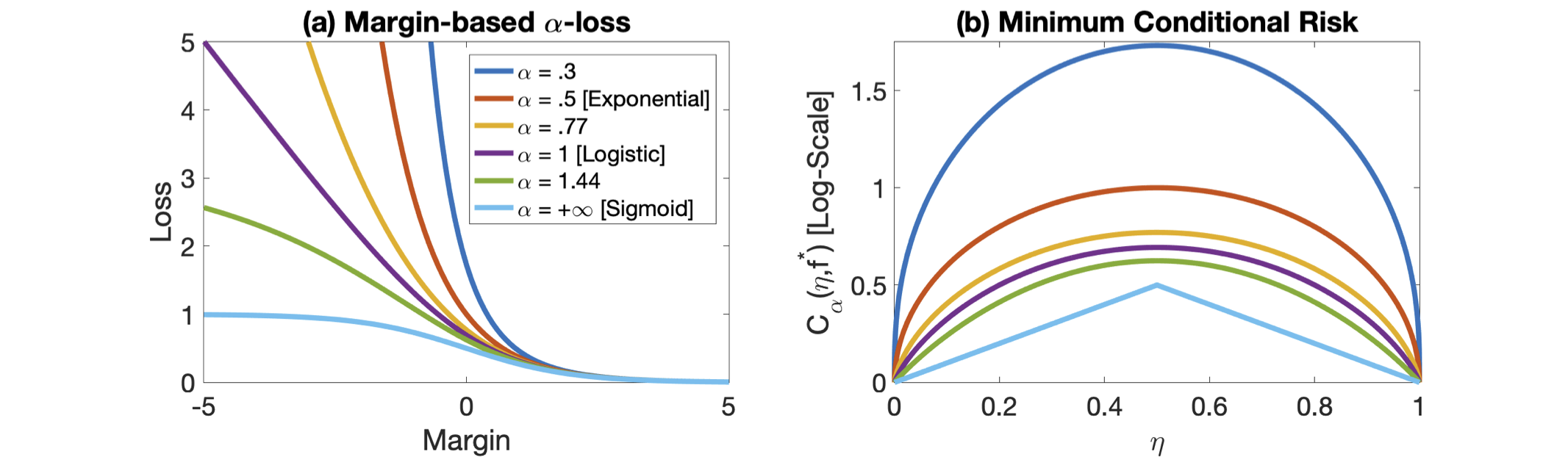}}
    \caption{\textbf{(a)} Margin-based $\alpha$-loss \eqref{eq:marginalphaloss} as a function of the margin ($z := yf(x)$) for $\alpha \in \{0.3,0.5,0.77,1,1.44,\infty\}$; \textbf{(b)} Minimum conditional risk \eqref{eq:mincondrisk} for the same values of $\alpha$.}
    \label{fig:margin_condrisk}
\end{figure}

\begin{definition} \label{def:marginbasedalphaloss}
We define the margin-based $\alpha$-loss for $\alpha \in (0,1) \cup (1,\infty)$, $\tilde{l}^{\alpha}: \overline{\mathbb{R}} \to \mathbb{R}_{+}$ as 
\begin{equation}
\label{eq:marginalphaloss}
\tilde{l}^{\alpha}(z) := \frac{\alpha}{\alpha - 1}\left(1 - \left(1+e^{-z} \right)^{1/\alpha-1}\right),
\end{equation}
and, by continuous extension, $\tilde{l}^{1}(z) = \log(1+ e^{-z})$ and $\tilde{l}^{\infty}(z) = (1+e^{z})^{-1}$.
\end{definition}
Note that $\tilde{l}^{1/2}(z) = e^{-z}$. Thus, $\tilde{l}^{1/2}$, $\tilde{l}^{1}$, and $\tilde{l}^{\infty}$ recover the exponential, logistic, and sigmoid losses, respectively. Navigating the various regimes of $\alpha$ induces different optimization, statistical, and robustness characteristics for the margin-based $\alpha$-loss; this is elaborated in the sequel. First, we discuss its relationship to the original form in Definition \ref{def:alphaloss}, which requires alternative prediction functions to classification functions called soft classifiers.

In binary classification, it is also common to use soft classifiers $g: \mathcal{X} \rightarrow [0,1]$ which encode the conditional distribution, namely, $g(x) := P_{\hat{Y}|X}(1|x)$. 
In essence, soft classifiers capture a model's \textit{belief} of $Y|X$~\cite{shalev2014understanding,goodfellow2016deep,sypherd2019tunable}. 
Similar to the classification function setting, the hard decision of a soft classifier is obtained by $h(x) = \text{sign}({g(x)-1/2})$. Log-loss, and by extension $\alpha$-loss as given in Definition \ref{def:alphaloss}, are examples of loss functions which act on soft classifiers. In practice, a soft classifier can be obtained by composing a classification function with the logistic sigmoid function $\sigma:\overline{\mathbb{R}}\to[0,1]$ given by
\begin{equation}
\label{eq:DefSigmoid}
    \sigma(z) = \frac{1}{1+e^{-z}},
\end{equation}
which is generalized by the softmax function in the multiclass setting \cite{goodfellow2016deep}. Observe that $\sigma$ is invertible and $\sigma^{-1}:[0,1]\to\overline{\mathbb{R}}$ is given by
\begin{equation} \label{eq:DefInverseSigmoid}
    \sigma^{-1}(z) = \log\left(\frac{z}{1-z}\right),
\end{equation}
which is often referred to as the logistic link \cite{reid2010composite}.

With these two transformations, one is able to map classification functions to soft classifiers and vice-versa. Thus, a loss function in one domain is readily transformed into a loss function in the other domain. In particular, we are now in a position to derive the correspondence between $\alpha$-loss in Defintion~\ref{def:alphaloss} and the margin-based $\alpha$-loss in Definition~\ref{def:marginbasedalphaloss}, which generalizes our previous proof in \cite{sypherd2019tunable}.
\begin{prop} \label{Prop:relationhardtosoft}
Consider a soft classifier $g(x)=P_{\hat{Y}|X}(1 | x)$. If $f(x) = \sigma^{-1}(g(x))$, then, for every $\alpha\in (0,\infty]$,
\begin{equation}
\label{eq:lglf}
    l^{\alpha}(y,g(x)) = \tilde{l}^{\alpha}(yf(x)).
\end{equation}
Conversely, if $f$ is a classification function, then the soft classifier $g(x) := \sigma(f(x))$ satisfies~\eqref{eq:lglf}. In particular, for every $\alpha\in(0,\infty]$,
\begin{equation}
\min_{g} \mathbb{E}_{X,Y}(l^\alpha(Y,g(x))) = \min_f \mathbb{E}_{X,Y}(\tilde{l}^\alpha(Yf(X))).
\end{equation}
\end{prop}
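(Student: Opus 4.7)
The plan is to unpack both sides of \eqref{eq:lglf} for the two possible values of $y\in\{-1,+1\}$ and exploit the identity $1-\sigma(z)=\sigma(-z)=\frac{e^{-z}}{1+e^{-z}}$, which is the only non-trivial ingredient. Given $g(x)=\sigma(f(x))$, the two cases reduce to verifying
\begin{equation*}
\sigma(f(x))^{1-1/\alpha} = (1+e^{-f(x)})^{1/\alpha-1}
\qquad\text{and}\qquad
(1-\sigma(f(x)))^{1-1/\alpha} = (1+e^{f(x)})^{1/\alpha-1},
\end{equation*}
each of which is immediate from the definition \eqref{eq:DefSigmoid} of $\sigma$.

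Concretely, I would first fix $\alpha\in(0,1)\cup(1,\infty)$ and treat the two cases. For $y=+1$, the margin is $yf(x)=f(x)$, and using $g(x)=\sigma(f(x))$ together with the first identity above shows $l^{\alpha}(+1,g(x))=\frac{\alpha}{\alpha-1}\bigl(1-\sigma(f(x))^{1-1/\alpha}\bigr)=\tilde{l}^{\alpha}(f(x))$. For $y=-1$, the margin is $-f(x)$, and since $P_{\hat{Y}|X}(-1\,|\,x)=1-g(x)=\sigma(-f(x))$, the second identity gives $l^{\alpha}(-1,g(x))=\frac{\alpha}{\alpha-1}\bigl(1-\sigma(-f(x))^{1-1/\alpha}\bigr)=\tilde{l}^{\alpha}(-f(x))$. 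The converse direction, starting from $f$ and defining $g:=\sigma\circ f$, uses exactly the same two identities in the opposite direction and requires no separate argument.

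Next, I would dispose of the boundary cases $\alpha=1$ and $\alpha=\infty$. Both can be handled either by direct substitution—e.g. for $\alpha=1$, $l^{1}(+1,g(x))=-\log\sigma(f(x))=\log(1+e^{-f(x)})=\tilde{l}^{1}(f(x))$, and for $\alpha=\infty$, $l^{\infty}(+1,g(x))=1-\sigma(f(x))=\sigma(-f(x))=\tilde{l}^{\infty}(f(x))$—or by invoking the continuity of both $l^{\alpha}$ and $\tilde{l}^{\alpha}$ in $\alpha$ at these two points, which was already noted after Definitions \ref{def:alphaloss} and \ref{def:marginbasedalphaloss}.

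Finally, for the equality of the infima, I would use that $\sigma:\overline{\mathbb{R}}\to[0,1]$ is a bijection (with $\sigma(\pm\infty)=\{0,1\}$), so it induces a bijection between the set of classification functions $f:\mathcal{X}\to\overline{\mathbb{R}}$ and the set of soft classifiers $g:\mathcal{X}\to[0,1]$ via $g=\sigma\circ f$. Because the identity \eqref{eq:lglf} holds point-wise under this bijection, taking expectations and then infima yields the claimed equality $\min_{g}\mathbb{E}_{X,Y}[l^{\alpha}(Y,g(X))]=\min_{f}\mathbb{E}_{X,Y}[\tilde{l}^{\alpha}(Yf(X))]$. There is no substantive obstacle here: the proof is essentially a bookkeeping exercise, and the only subtlety is keeping the $\pm\infty$ endpoints of $\overline{\mathbb{R}}$ in play so that the bijection between $\overline{\mathbb{R}}$ and $[0,1]$ is genuine rather than only between $\mathbb{R}$ and $(0,1)$.
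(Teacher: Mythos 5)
Your proposal is correct and follows essentially the same route as the paper's proof: a case split on $y\in\{-1,+1\}$, the identity $1-\sigma(z)=\sigma(-z)$, and the observation that the pointwise equality transfers to the minimization via the correspondence $g=\sigma\circ f$. Your treatment of the boundary cases $\alpha\in\{1,\infty\}$ and of the bijection on $\overline{\mathbb{R}}$ is, if anything, slightly more explicit than the paper's, which simply notes these follow similarly.
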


Thus, there is a direct correspondence between $\alpha$-loss in Definition \ref{def:alphaloss} and the margin-based $\alpha$-loss used in binary classification.

\begin{remark}
Instead of the fixed inverse link function~\eqref{eq:DefSigmoid}, it is also possible to use any other fixed inverse link function, or even inverse link functions dependent on $\alpha$; indeed, it is possible to derive many such tunable margin-based losses this way.
However, 
the margin-based $\alpha$-loss as given in Definition~\ref{def:marginbasedalphaloss}
allows for continuous interpolation between the exponential, logistic, and sigmoid losses, and thus motivates our choice of the fixed sigmoid in~\eqref{eq:DefSigmoid} as the inverse link.
\end{remark}

The following result, which quantifies the convexity of the margin-based $\alpha$-loss, will be useful in characterizing the convexity of the average loss, or \textit{landscape}, in the sequel.
\begin{prop} \label{Prop:alpha-loss-convex}
As a function of the margin, $\tilde{l}^{\alpha}:\overline{\mathbb{R}}\to\mathbb{R}_+$ is convex for $\alpha \leq 1$ and quasi-convex for $\alpha > 1$.
\end{prop}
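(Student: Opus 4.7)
The plan is to reduce everything to a single one‑variable calculation by rewriting the margin form in terms of the sigmoid, and then split the claim into its two halves using only the first derivative for quasi‑convexity and the second derivative for convexity.

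First I will observe that, since $1+e^{-z}=1/\sigma(z)$ with $\sigma$ the logistic sigmoid from \eqref{eq:DefSigmoid}, the definition \eqref{eq:marginalphaloss} may be rewritten, for $\alpha \neq 1$, as
\[
\tilde{l}^{\alpha}(z) \;=\; \frac{\alpha}{\alpha-1}\bigl(1-\sigma(z)^{\,1-1/\alpha}\bigr).
\]
This is essentially the content of Proposition~\ref{Prop:relationhardtosoft} applied with $y=1$, $f(x)=z$, and it makes the derivatives easy to compute because $\sigma'(z)=\sigma(z)(1-\sigma(z))$.

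Next, differentiating once and simplifying (the factor $\frac{\alpha}{\alpha-1}\cdot(1-1/\alpha)=-1$ cancels cleanly) gives
\[
\frac{d}{dz}\tilde{l}^{\alpha}(z) \;=\; -\,\sigma(z)^{\,1-1/\alpha}\bigl(1-\sigma(z)\bigr),
\]
which is strictly negative on $\mathbb{R}$ for every $\alpha\in(0,\infty)$, and the limit cases $\alpha=1,\infty$ are handled by the same formula or by direct inspection. Thus $\tilde{l}^{\alpha}$ is strictly decreasing on $\overline{\mathbb{R}}$. Since any monotone function on a (one‑dimensional) convex domain has sublevel sets that are intervals, strict monotonicity immediately gives quasi‑convexity for all $\alpha\in(0,\infty]$, and in particular for $\alpha>1$.

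For the convexity claim, I will differentiate once more. Writing $p=\sigma(z)$ and $q=1-p$, a short computation yields
\[
\frac{d^{2}}{dz^{2}}\tilde{l}^{\alpha}(z) \;=\; p^{\,1-1/\alpha}\,q\,\Bigl[(2-1/\alpha)\,p-(1-1/\alpha)\Bigr].
\]
The prefactor $p^{1-1/\alpha}q$ is strictly positive on $(0,1)$, so the sign is controlled by the linear‑in‑$p$ bracket. Its values at the endpoints of $p\in[0,1]$ are $1/\alpha-1$ at $p=0$ and $1$ at $p=1$; for $\alpha\le 1$ both are nonnegative, and a linear function nonnegative at both endpoints of an interval is nonnegative throughout. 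Hence $\tilde{l}^{\alpha\,\prime\prime}(z)\ge 0$ for all $z\in\mathbb{R}$ when $\alpha\le 1$, giving convexity. The limiting cases $\alpha=1$ (logistic loss) and the boundary $\alpha\to 0^+$ are covered either by the same formula or by the known convexity of log‑loss.

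The only real obstacle is keeping the algebra and sign bookkeeping straight when $\alpha$ crosses $1$; I will state the derivative formulas cleanly via $p=\sigma(z)$ so that the sign analysis reduces to evaluating a linear function at two points. The limit case $\alpha=\infty$, where $\tilde{l}^{\infty}(z)=(1+e^{z})^{-1}$ is the sigmoid loss, is again handled by direct differentiation: it is strictly decreasing, hence quasi‑convex but not convex, consistent with the general pattern.
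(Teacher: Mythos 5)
Your proposal is correct and follows essentially the same route as the paper: the paper likewise shows $\frac{d^2}{dz^2}\tilde{l}^{\alpha}(z)\ge 0$ for $\alpha\le 1$ to get convexity, and observes that $\frac{d}{dz}\tilde{l}^{\alpha}(z)<0$ everywhere so that monotonicity yields quasi-convexity for $\alpha>1$. The only difference is cosmetic — you parameterize the derivatives through $p=\sigma(z)$ and check a linear function at the endpoints of $[0,1]$, whereas the paper writes the same expressions in terms of $e^{z}$ — and your formulas agree with the paper's after simplification.
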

Recall that a real-valued function $f: \mathbb{R} \rightarrow \mathbb{R}$ is quasi-convex if, for all $x,y \in \mathbb{R}$ and $\lambda \in [0,1]$, we have that $f(\lambda x + (1-\lambda)y) \leq \max{\{f(x),f(y)\}}$, and also recall that any monotonic function is quasi-convex (see e.g.,~\cite{boyd2004convex}). 
Intuitively through Figure~\ref{fig:margin_condrisk}(a), we find that the quasi-convexity of the margin-based $\alpha$-loss for $\alpha > 1$ reduces the penalty induced during training for samples which have a negative margin; this has implications for robustness that will also be investigated in the sequel.
%
%
\subsection{Calibration of Margin-based $\alpha$-loss}
With the definition and basic properties of the margin-based $\alpha$-loss in hand, we now discuss a statistical property of the margin-based $\alpha$-loss that highlights its suitability in binary classification. 
%
Bartlett \textit{et al.} in \cite{bartlett2006convexity} introduce \textit{classification-calibration} as a means to compare the performance of a margin-based loss function relative to the 0-1 loss by inspecting the minimizer of its conditional risk. 
Formally, let $\phi: \overline{\mathbb{R}}\to\mathbb{R}_+$ denote a margin-based loss function and let $C_{\phi}(\eta(x),f(x)) = \mathbb{E}[\phi(Yf(X))|X=x]$ denote its conditional expectation (risk), where $\eta(x) = P_{Y|X}(1|x)$ is the true posterior and $f: \mathcal{X} \rightarrow \overline{\mathbb{R}}$ is a classification function. 
Thus, the conditional risk of the margin-based $\alpha$-loss for $\alpha \in (0,\infty]$ is given by 
\begin{align} \label{eq:condriskalphaloss}
C_{\alpha}(\eta(x),f(x)) = \mathbb{E}_{Y}[\tilde{l}^{\alpha}(Yf(X))|X=x].
\end{align}
We say that $\phi: \overline{\mathbb{R}}\to\mathbb{R}_+$ is classification-calibrated if, for all $x \in \mathcal{X}$, its minimum conditional risk 
\begin{align} \label{eq:mincondrisk}
\inf\limits_{f: \mathcal{X} \rightarrow \mathbb{R}} C_{\phi}(\eta(x),f(x)) = \inf\limits_{f: \mathcal{X} \rightarrow \mathbb{R}} \eta(x) \phi(f(x)) + (1-\eta(x))\phi(-f(x))
\end{align}
is attained by a $f^{*}:\mathcal{X} \rightarrow \overline{\mathbb{R}}$ such that $\textnormal{sign}(f^{*}(x)) = \textnormal{sign}(2\eta(x)-1)$. 
In words, a margin-based loss function is classification-calibrated if for each feature vector, the minimizer of its minimum conditional risk agrees in sign with the Bayes optimal predictor. Note that this is a pointwise form of Fisher consistency \cite{lin2004note,bartlett2006convexity}.

The expectation of the loss function $\phi$, or the $\phi$-risk, is denoted $R_{\phi}(f) = \mathbb{E}_{X}[C_{\phi}(\eta(X),f(X))]$; this notation will be useful in the sequel when we quantify the asymptotic behavior of $\alpha$-loss.
Finally, as is common in the literature, we omit the dependence of $\eta$ and $f$ on $x$, and we also let $C_{\phi}^{*}(\eta) = C_{\phi}(\eta,f^{*})$ for notional convenience~\cite{masnadi2009design,bartlett2006convexity}. 
With the necessary background on classification-calibrated loss functions in hand, we are now in a position to 
show that $\tilde{l}^{\alpha}$ is classification-calibrated for all $\alpha \in (0,\infty]$.
\begin{theorem}
\label{thm:alphalossclassificationcalibration}
For $\alpha\in (0,\infty]$, the margin-based $\alpha$-loss $\tilde{l}^{\alpha}$ is classification-calibrated. In addition, its optimal classification function is given by
\begin{equation}
\label{eq:optimalclassifier}
    f^{*}_{\alpha}(\eta) = \alpha \cdot \sigma^{-1}(\eta) = \alpha \log\left(\frac{\eta}{1-\eta}\right).
\end{equation}
\end{theorem}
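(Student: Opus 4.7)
The plan is to reduce the claim entirely to the two results already established in Propositions~\ref{Prop:Liao} and \ref{Prop:relationhardtosoft}, so that the proof sidesteps any direct minimization of the conditional risk and thereby avoids the non-convexity of $C_\alpha$ for $\alpha>1$ (which is only guaranteed quasi-convex via Proposition~\ref{Prop:alpha-loss-convex}). The idea is: move from the margin-based world to the soft-classifier world via the logistic link, invoke the known $\alpha$-tilted minimizer there, and then map back through $\sigma^{-1}$.

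More concretely, I would first write the conditional risk in soft-classifier form. Since minimization over classification functions $f:\mathcal{X}\to\overline{\mathbb{R}}$ is equivalent to minimization over soft classifiers $g:\mathcal{X}\to[0,1]$ via $g=\sigma(f)$, Proposition~\ref{Prop:relationhardtosoft} gives, pointwise in $x$,
\begin{equation*}
C_\alpha(\eta,f) \;=\; \eta\,\tilde{l}^\alpha(f) + (1-\eta)\,\tilde{l}^\alpha(-f) \;=\; \eta\,l^\alpha(1,g) + (1-\eta)\,l^\alpha(-1,g),
\end{equation*}
which is nothing but the conditional $\alpha$-risk $\mathbb{E}_{Y\mid X=x}[l^\alpha(Y,g)]$ under the true posterior $\eta=\eta(x)$. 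Next, I would apply the pointwise version of Proposition~\ref{Prop:Liao} (the joint minimization decouples into a pointwise minimization over $\hat{P}\in\mathcal{P}(\{-1,+1\})$, i.e., over $g\in[0,1]$) to conclude that the unique minimizer is the $\alpha$-tilted posterior
\begin{equation*}
g^*_\alpha(\eta) \;=\; \frac{\eta^\alpha}{\eta^\alpha+(1-\eta)^\alpha}.
\end{equation*}
Finally, I would pull this back to the classification-function side by applying $\sigma^{-1}$, which gives
\begin{equation*}
f^*_\alpha(\eta) \;=\; \sigma^{-1}(g^*_\alpha(\eta)) \;=\; \log\frac{\eta^\alpha}{(1-\eta)^\alpha} \;=\; \alpha\log\frac{\eta}{1-\eta} \;=\; \alpha\,\sigma^{-1}(\eta),
\end{equation*}
which is exactly the claimed formula~\eqref{eq:optimalclassifier}. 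To close out classification-calibration, I would simply note that $\alpha>0$ and that $\sigma^{-1}$ is strictly increasing with $\sigma^{-1}(1/2)=0$, so $\textnormal{sign}(f^*_\alpha(\eta))=\textnormal{sign}(\sigma^{-1}(\eta))=\textnormal{sign}(2\eta-1)$, matching the Bayes predictor.

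I do not expect a serious obstacle here, since the heavy lifting is carried by Proposition~\ref{Prop:Liao}. The main care-points are boundary conventions: at $\alpha=1$ the formula reduces to the familiar logistic optimum $\sigma^{-1}(\eta)$, and at $\alpha=\infty$ the $\alpha$-tilted posterior collapses onto the mode so that $g^*_\infty\in\{0,1\}$ and correspondingly $f^*_\infty(\eta)=\textnormal{sign}(2\eta-1)\cdot\infty\in\overline{\mathbb{R}}$, still agreeing in sign with $2\eta-1$; the ambiguous case $\eta=1/2$ is handled as in~\cite{bartlett2006convexity} by noting that classification-calibration only requires sign agreement whenever $2\eta-1\neq 0$. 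For readers who prefer a direct argument one can alternatively differentiate $C_\alpha(\eta,f)$ in $f$, use $\tilde{l}^{\alpha\prime}(z)=-(1-\sigma(z))\,\sigma(z)^{1-1/\alpha}$, and solve the stationarity condition explicitly to recover the same $f^*_\alpha(\eta)$; uniqueness then follows from strict convexity of $C_\alpha$ in $f$ for $\alpha\leq 1$ (Proposition~\ref{Prop:alpha-loss-convex}) and from a direct monotonicity check of the first-order condition for $\alpha>1$. The Proposition~\ref{Prop:Liao}--based route is cleaner and uniformly handles all $\alpha\in(0,\infty]$ without invoking convexity of the loss landscape.
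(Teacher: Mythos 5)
Your proposal is correct, but it takes a genuinely different route from the paper. The paper's proof splits into three cases: for $\alpha\in(0,1]$ it invokes Bartlett \emph{et al.}'s criterion for convex margin losses (differentiable at $0$ with negative derivative, using Proposition~\ref{Prop:alpha-loss-convex}); for $\alpha\in(1,\infty)$ it differentiates the conditional risk directly in $f$, solves the stationarity condition to obtain $f^{*}_{\alpha}(\eta)=\alpha\,\sigma^{-1}(\eta)$, and checks the sign; and for $\alpha=\infty$ it cites the known calibration of the sigmoid loss. The paper only mentions your route --- pulling the $\alpha$-tilted posterior of Proposition~\ref{Prop:Liao} back through $\sigma^{-1}$ via Proposition~\ref{Prop:relationhardtosoft} --- as an \emph{alternative} derivation of the optimal classification function, not as the proof of calibration itself. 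Your approach buys uniformity over all $\alpha\in(0,\infty]$ in a single argument and, by inheriting global optimality and uniqueness of the minimizer from Proposition~\ref{Prop:Liao}, it avoids the gap latent in the paper's $\alpha>1$ case, where a critical point of a non-convex conditional risk is found but second-order or global-optimality conditions are not verified. The trade-off is that your argument leans entirely on Proposition~\ref{Prop:Liao}, whose proof for $\alpha\in(0,1)$ the paper only asserts is ``readily extended'' from Liao \emph{et al.}, whereas the paper's case analysis for $\alpha\le 1$ is self-contained via convexity. Both arguments establish calibration only in the paper's stated sense (existence of a correctly-signed minimizer); neither verifies the strict inequality $H^{-}(\eta)>H(\eta)$ of Bartlett \emph{et al.}'s original definition, so you are not at a disadvantage there.
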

See Appendix~\ref{appen:binaryclass} for full proof details.
Examining the optimal classification function in~\eqref{eq:optimalclassifier} more closely, we observe that this expression is readily derived from the $\alpha$-tilted distribution for a binary label set in Proposition~\ref{Prop:relationhardtosoft}. 
Thus, analogous to the intuitions regarding the $\alpha$-tilted distribution in~\eqref{eq:tilteddistribution}, the optimal classification function in~\eqref{eq:optimalclassifier} suggests that $\alpha > 1$ is more robust to slight fluctuations in $\eta$ and $\alpha < 1$ is more sensitive to slight fluctuations in $\eta$. In the sequel, we find that this has practical implications for noisy labels and class imbalances.

Upon plugging~\eqref{eq:optimalclassifier} into~\eqref{eq:condriskalphaloss}, we obtain the following result which specifies the minimum conditional risk of $\tilde{l}^{\alpha}$ for $\alpha \in (0,\infty]$.
\begin{corollary} \label{cor:condrisk}
For $\alpha \in (0,\infty]$, the minimum conditional risk of $\tilde{l}^{\alpha}$ is given by
\begin{equation} \label{eq:mincondalpharisk}
C_{\alpha}^{*}(\eta) =
\begin{cases} 
    \frac{\alpha}{\alpha - 1} \left(1 - (\eta^{\alpha} + (1-\eta)^{\alpha})^{1/\alpha} \right) & \alpha \in (0,1) \cup (1,+\infty), \\
    -\eta \log{\eta} - (1-\eta) \log{(1-\eta)} & \alpha = 1, \\
    \min\{\eta, 1- \eta \} & \alpha \rightarrow +\infty.
\end{cases}
\end{equation}
\end{corollary}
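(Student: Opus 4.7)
The plan is to substitute the explicit optimal classifier from Theorem~\ref{thm:alphalossclassificationcalibration} into the conditional risk expression~\eqref{eq:mincondrisk} and simplify. Concretely, since $f_\alpha^*(\eta) = \alpha \log(\eta/(1-\eta))$, I would first compute the two exponentials that appear inside $\tilde{l}^\alpha(\pm f_\alpha^*)$: namely $e^{-f_\alpha^*} = ((1-\eta)/\eta)^\alpha$ and $e^{f_\alpha^*} = (\eta/(1-\eta))^\alpha$, so that
\begin{equation*}
1 + e^{-f_\alpha^*(\eta)} = \frac{\eta^\alpha + (1-\eta)^\alpha}{\eta^\alpha}, \qquad 1 + e^{f_\alpha^*(\eta)} = \frac{\eta^\alpha + (1-\eta)^\alpha}{(1-\eta)^\alpha}.
\end{equation*}
Raising both to the power $(1-\alpha)/\alpha$ and plugging into~\eqref{eq:marginalphaloss} yields two closed-form expressions for $\tilde{l}^\alpha(f_\alpha^*)$ and $\tilde{l}^\alpha(-f_\alpha^*)$ that differ only in a factor of $\eta^{\alpha-1}$ versus $(1-\eta)^{\alpha-1}$.

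Next I would evaluate $C_\alpha^*(\eta) = \eta\, \tilde{l}^\alpha(f_\alpha^*) + (1-\eta)\tilde{l}^\alpha(-f_\alpha^*)$. The constants $\alpha/(\alpha-1)$ factor out, the two $1$'s combine to give $1$, and the cross term simplifies because $\eta \cdot \eta^{\alpha-1} + (1-\eta)\cdot(1-\eta)^{\alpha-1} = \eta^\alpha + (1-\eta)^\alpha$, precisely the quantity that also appears raised to the power $(1-\alpha)/\alpha$. These exponents sum to $1/\alpha$, yielding the clean expression
\begin{equation*}
C_\alpha^*(\eta) = \frac{\alpha}{\alpha-1}\bigl(1 - (\eta^\alpha + (1-\eta)^\alpha)^{1/\alpha}\bigr),
\end{equation*}
which is the first case of~\eqref{eq:mincondalpharisk}.

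For the two boundary cases I would argue by continuous extension, consistent with Definition~\ref{def:marginbasedalphaloss}. At $\alpha = 1$, the direct substitution of $f_1^*(\eta) = \log(\eta/(1-\eta))$ into $\tilde{l}^1(z) = \log(1+e^{-z})$ gives $\tilde{l}^1(f_1^*) = -\log\eta$ and $\tilde{l}^1(-f_1^*) = -\log(1-\eta)$, yielding the binary entropy. This could alternatively be obtained by taking $\alpha \to 1$ in the general formula and applying L'H\^opital's rule to $(\eta^\alpha + (1-\eta)^\alpha)^{1/\alpha}$. For $\alpha \to \infty$, the prefactor $\alpha/(\alpha-1) \to 1$ and the standard fact that $(a^\alpha + b^\alpha)^{1/\alpha} \to \max(a,b)$ for $a,b \ge 0$ implies $C_\alpha^*(\eta) \to 1 - \max(\eta,1-\eta) = \min(\eta, 1-\eta)$.

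There is no real obstacle here beyond careful bookkeeping; the entire argument is algebraic once Theorem~\ref{thm:alphalossclassificationcalibration} is in hand. The one place where I would be most cautious is ensuring that the exponent manipulation $(1-\alpha)/\alpha + 1 = 1/\alpha$ is performed correctly so that the $\eta^\alpha + (1-\eta)^\alpha$ factors combine into a single power, and that the $\alpha \to 1$ and $\alpha \to \infty$ limits of the resulting expression agree with the continuous extensions stated in Definition~\ref{def:marginbasedalphaloss}, thus confirming that~\eqref{eq:mincondalpharisk} is continuous in $\alpha$ on all of $(0,\infty]$.
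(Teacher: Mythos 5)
Your proposal is correct and follows essentially the same route as the paper: substitute the optimal classification function $f^*_\alpha(\eta)=\alpha\,\sigma^{-1}(\eta)$ from Theorem~\ref{thm:alphalossclassificationcalibration} into the conditional risk and simplify the exponents so that the $\eta^\alpha+(1-\eta)^\alpha$ factors collapse into a single power $1/\alpha$. The only cosmetic difference is that the paper dispatches the $\alpha=1$ and $\alpha=\infty$ cases by citing known results for the logistic and sigmoid losses, whereas you obtain them by direct substitution and by taking limits of the general formula; both are valid.
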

\begin{remark}
Observe that in \eqref{eq:mincondalpharisk} for $\alpha = 1$, the minimum conditional risk can be rewritten as 
\begin{align}
C_{1}^{*}(\eta) = -\eta \log{\eta} - (1-\eta) \log{(1-\eta)} = H(Y|X=x),
\end{align}
where $H(Y|X=x)$ is the Shannon conditional entropy for a $Y$ given $X = x$~\cite{cover1999elements}.
Also note that in \eqref{eq:mincondalpharisk} for $\alpha \in (0,1) \cup (1,+\infty)$, the minimum conditional risk can be rewritten as 
\begin{align}
C_{\alpha}^{*}(\eta) = \frac{\alpha}{\alpha - 1} \left[1 - (\eta^{\alpha} + (1-\eta)^{\alpha})^{1/\alpha} \right] = \frac{\alpha}{\alpha - 1} \left[1 - e^{\frac{1-\alpha}{\alpha} H_{\alpha}^{A}(Y|X=x)} \right],
\end{align}
where $H_{\alpha}^{A}(Y|X=x) = \frac{1}{1-\alpha} \log{\left(\sum_{y} P_{Y|X}(y|x)^{\alpha} \right)}$ is the Arimoto conditional entropy of order $\alpha$~\cite{arimoto1977information}.
Finally, observe that $\mathbb{E}_{X}[C_{\alpha}^{*}(\eta(X))]$ recovers \eqref{eq:minriskarimotorelation} in Proposition~\ref{Prop:Liao}.
%
\end{remark}

Finally, note that the minimum conditional risk of the margin-based $\alpha$-loss is concave for all $\alpha \in (0,\infty]$ (see Figure~\ref{fig:margin_condrisk}(b)); indeed, this is known to be a useful property for classification problems \cite{masnadi2009design}.
Therefore, since the margin-based $\alpha$-loss is classification-calibrated and its minimum conditional risk is concave for all $\alpha \in (0,\infty]$, it seems to have reasonable statistical behavior for binary classification problems.
We now turn our attention to the robustness and sensitivity tradeoffs induced by traversing the different regimes of $\alpha$ for the margin-based $\alpha$-loss.

\subsection{Robustness and Sensitivity of Margin-based $\alpha$-loss} \label{sec:syntheticexp}
\begin{figure}[h] 
    \hspace{-.3cm}
    \centerline{\includegraphics[width=.9\textwidth,trim=10 10 10 10,clip]{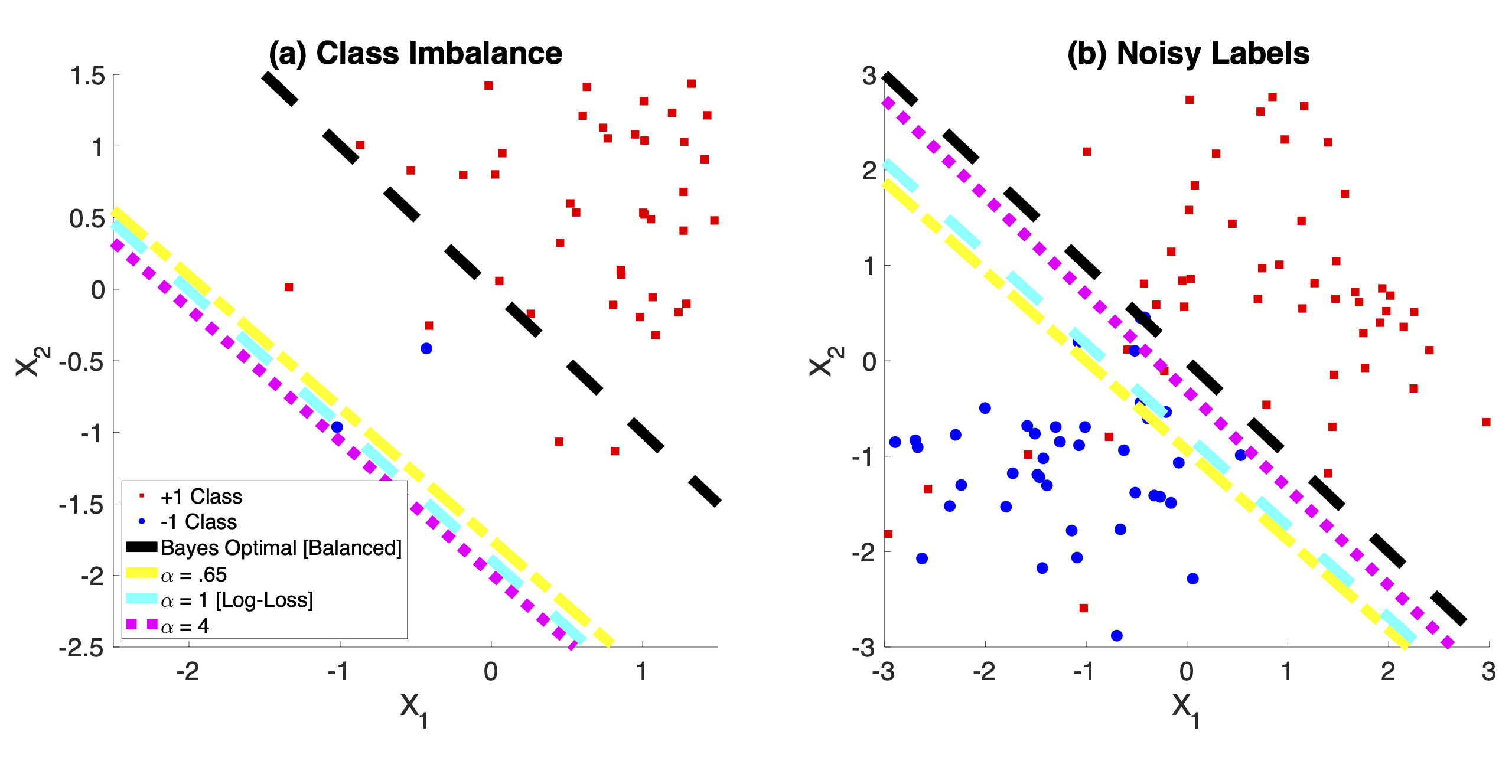}}
    \caption{Two synthetic experiments each averaged over 100 runs highlighting the differences in trained linear predictors of $\alpha$-loss for $\alpha \in \{0.65, 1, 4\}$ on imbalanced and noisy data, which are compared with the Bayes optimal predictor for the clean, balanced distribution.
    Training data present in both figures is obtained from the last run in each experiment, respectively.
    \textbf{(a)}~Averaged linear predictors trained using $\alpha$-loss on imbalanced data with 2 samples from $Y = -1$ class per run. Averaged linear predictors for smaller values of $\alpha$ are closer to the Bayes predictor for the balanced distribution, which highlights the sensitivity of $\alpha$-loss to the minority class for $\alpha < 1$. \textbf{(b)}~Averaged linear predictors trained using $\alpha$-loss on noisy data, which is obtained by flipping the labels of the $Y=-1$ class with probability $0.2$. Averaged linear predictor for $\alpha=4$ is closer to the Bayes predictor for the balanced distribution, which highlights the robustness of $\alpha$-loss to noise for $\alpha > 1$.}
    \label{fig:planeplots}
\end{figure}

Despite the advantages of convex losses in terms of numerical optimization and theoretical tractability, non-convex loss functions often provide superior model robustness and classification accuracy \cite{mei2018landscape,nguyen2013algorithms,barron2019general,sypherd2019tunable,schapire2013boosting,wu2007robust,chapelle2009tighter,long2010random,masnadi2009design}.
In essence, non-convex loss functions tend to assign less weight to misclassified training examples\footnote{Convex losses grow at least linearly with respect to the negative margin which results in an increased sensitivity to outliers.
See Figure~\ref{fig:margin_condrisk}(a) for $\alpha = 1$ as an example of this phenomenon.} and therefore algorithms optimizing such losses are often less perturbed by outliers, i.e., samples which induce large negative margins. 
More concretely, consider Figure~\ref{fig:margin_condrisk}(a) for $\alpha = 1/2$ (convex) and $\alpha = 1.44$ (quasi-convex), and suppose that $z_{1} = -1$ and $z_{2} = -5$. 
Plugging these parameters into Definition~\ref{def:marginbasedalphaloss}, we find that $\tilde{l}^{1/2}(z_{1}) = e^{1} \approx 2.7$,  $\tilde{l}^{1/2}(z_{2})=e^{5}\approx148.4$, $\tilde{l}^{1.44}(z_{1}) \approx 1.1$, and $\tilde{l}^{1.44}(z_{2}) \approx 2.6$.
In words, the difference in these loss evaluations for a negative value of the margin, which is representative of a misclassified training example, is approximately exponential versus sub-linear. Indeed, this difference in loss function behavior appears to be most salient with outliers (e.g., noisy or imbalanced training examples)~\cite{masnadi2009design,schapire2013boosting}.

We explore these ideas with the synthetic experiment presented in Figure~\ref{fig:planeplots}.
We assume the practitioner has access to modified training data which approximates the true underlying distribution given by a two-dimensional Gaussian Mixture Model (2D-GMM) with equal mixing probability $\mathbb{P}[Y=-1] = \mathbb{P}[Y=+1]$, symmetric means $\mu_{X|Y=-1} = (-1, -1)^{\intercal} = -\mu_{X|Y=1}$, and shared identity covariance matrix $\Sigma = \mathbb{I}_{2}$. 
The first experiment considers the scenario where the training data suffers from a class imbalance; specifically, the number of training samples for the $Y = -1$ class is $2$ and the number of training samples for the $Y = +1$ class is $98$ for every run.
The second experiment considers the scenario where the training data suffers from noisy labels; specifically, the labels of the $Y = -1$ class are flipped with probability $0.2$ and the labels of the $Y = +1$ class are kept fixed.
For both experiments we train $\alpha$-loss on the logistic model, which is the generalization of logistic regression with $\alpha$-loss and is formally described in the next section.
Specifically, we minimize $\alpha$-loss using gradient descent with the fixed $\textit{learning rate} = 0.01$ for each $\alpha \in \{0.65,1,4\}$.
Note that $\alpha = 0.65$ (lower limit) and $\alpha = 4$ (upper limit) were both chosen for computational feasibility in the logistic model; in practice, the range of $\alpha \in (0,\infty]$, while usually contracted as in this experiment, is dependent on the model - this is elaborated in the sequel.
Training is allowed to progress until convergence as specified by the $\textit{optimality parameter} = 10^{-4}$. 
The linear predictors presented in Figure~\ref{fig:planeplots} are averaged over $100$ runs of randomly generated data according to the parameters for each experiment.

Ideally, the practitioner would like to generate a linear predictor which is invariant to noisy or imbalanced training data and tends to align with the Bayes optimal predictor for the balanced distribution. Indeed, when the training data is balanced (and clean), all averaged linear predictors generated by $\alpha$-loss collapse to the Bayes predictor; see
Figure~\ref{fig:planeplotsclean} in Appendix~\ref{sec:extrasyntheticexp}.
However, training on noisy or imbalanced data affects the linear predictors of $\alpha$-loss in different ways. 
In the class imbalance experiment in Figure~\ref{fig:planeplots}(a), we find that the averaged linear predictor for the smaller values of $\alpha$ more closely approximate the Bayes predictor for the balanced distribution, which suggests that the smaller values of $\alpha$ are more sensitive to the minority class.
Similarly in the class noise experiment in Figure~\ref{fig:planeplots}(b), we find that the averaged linear predictor for $\alpha = 4$ more closely approximates the Bayes predictor for the balanced distribution, which suggests that the larger values of $\alpha$ are less sensitive to noise in the training data.
Both results suggest that $\alpha = 1$ (log-loss) can be improved with the use of $\alpha$-loss in these scenarios.
For quantitative results of this experiment, including a wider range of $\alpha$'s, additional imbalances and noise levels, and results using the $\text{F}_{1}$ score, see Tables~\ref{table:syntheticclassimbalanceaccuracy},~\ref{table:syntheticimbalancef1}, and~\ref{table:syntheticnoisyaccuracy} in Appendix~\ref{sec:extrasyntheticexp}.

In summary, we find that navigating the convexity regimes of $\alpha$-loss induces different robustness and sensitivity characteristics. We explore these themes in more detail on canonical image datasets in Section \ref{sec:Experiments}. We now turn our attention to theoretically characterizing the optimization complexity of $\alpha$-loss for the different regimes of $\alpha$ in the logistic model.

\section{Optimization Guarantees for $\alpha$-loss in the Logistic Model} \label{sec:landscapelogisticmodel}


In this section, we analyze the optimization complexity of $\alpha$-loss in the logistic model as we vary $\alpha$ by quantifying the convexity of the optimization landscape. First, we show that the $\alpha$-risk is convex (indeed, strongly-convex if a certain correlation matrix is positive definite) in the logistic model for $\alpha \leq 1$; next, we provide a brief summary of a notion known as \textit{strictly local quasi-convexity} (SLQC); then, we provide a more tractable reformulation of SLQC which is instrumental for our theory; finally, we study the convexity of the $\alpha$-risk in the logistic model through SLQC for a range of $\alpha > 1$, which we argue is sufficient due to the rapid saturation effect of $\alpha$-loss as $\alpha \rightarrow \infty$.
Notably, our main result depends on a bootstrapping argument that might be of independent interest.
Our main conclusion of this section is that there exists a "Goldilocks zone" of $\alpha \in (0,\infty]$ which drastically reduces the hyperparameter search induced by $\alpha$ for the practitioner.
Finally, note that all proofs and background material can be found in Appendix~\ref{appen:logisticmodel}.
%
%
\subsection{$\alpha$-loss in the Logistic Model}
Prior to stating our main results, we clarify the setting and provide necessary definitions. 
Let $X \in [0,1]^{d}$ be the normalized feature where $d \in \mathbb{N}$ is the number of dimensions, $Y \in \{-1,+1\}$ the label and we assume that the pair is distributed according to an unknown distribution $P_{X,Y}$, i.e., $(X,Y) \sim P_{X,Y}$.
For $\tilde{\theta}\in\mathbb{R}^d$ and $r>0$, we let $\mathbb{B}_{d}(\tilde{\theta},r) := \{\theta\in\mathbb{R}^d : \|\theta-\tilde{\theta}\| \leq r\}$. 
For simplicity, we let $\mathbb{B}_{d}(r)= \mathbb{B}_{d}(\mathbf{0},r)$ when $\tilde{\theta} = \mathbf{0}$; also note that all norms are Euclidean.
Given $r>0$, we consider the logistic model and its associated hypothesis class $\mathcal{G} = \{g_\theta:\theta\in\mathbb{B}_{d}(r)\}$, composed of parameterized soft classifiers $g_{\theta}$ such that
%
\begin{equation}
g_{\theta}(x) = \sigma(\langle \theta,x \rangle),
\end{equation}
with $\sigma: \mathbb{R} \rightarrow [0,1]$ being the sigmoid function given by \eqref{eq:DefSigmoid}.
For convenience, we present the following short form of $\alpha$-loss in the logistic model which is equivalent to the expanded expression in \cite{sypherd2019tunable}.
For $\alpha \in (0,\infty]$, $\alpha$-loss is given by
\begin{equation} \label{eq:alphadefLR}
l^{\alpha}(y,g_{\theta}(x)) = \frac{\alpha}{\alpha-1} \left[1 - g_{\theta}(yx)^{1-1/\alpha} \right].
\end{equation}
For $\alpha = 1$, $l^{1}$ is the logistic loss and we recover logistic regression by optimizing this loss. 
Note that in this setting $\langle yx, \theta \rangle$ is the margin, and recall from Proposition~\ref{Prop:alpha-loss-convex} that \eqref{eq:alphadefLR} is convex for $\alpha \in (0,1]$ and quasi-convex for $\alpha > 1$ in $\langle yx, \theta \rangle$. 
For $\theta \in \mathbb{B}_{d}(r)$, we define the $\alpha$-risk $R_\alpha$ as the risk of the loss in \eqref{eq:alphadefLR},  
\begin{equation} \label{eq:alphariskdefLR}
R_{\alpha}({\theta}) := \mathbb{E}_{X,Y}[l^{\alpha}(Y,g_{\theta}(X))].
\end{equation}
The $\alpha$-risk \eqref{eq:alphariskdefLR} is plotted for several values of $\alpha$ in a two-dimensional Gaussian Mixture Model (GMM) in Figure~\ref{fig:fourlandscapes}.
Further, observe that, for all $\theta\in\mathbb{B}_d(r)$,
\begin{equation}
R_\infty(\theta) := \mathbb{E}_{X,Y}[l^{\infty}(Y,g_{\theta}(X))] = \mathbb{P}[Y \neq \hat{Y}_\theta],
\end{equation}
where $\hat{Y}_\theta$ is a random variable such that for all $x\in\mathbb{B}_d(1)$, $\mathbb{P}[\hat{Y}_\theta = 1|X=x] = g_\theta(x)$. 

In order to study the landscape of the $\alpha$-risk, we compute the gradient and Hessian of \eqref{eq:alphadefLR}, by employing the following useful properties of the sigmoid 
\begin{align} \label{eq:sigprop1}
\sigma(-z) = 1-\sigma(z) \quad \quad \textnormal{and} \quad \quad \dfrac{d}{dz} \sigma(z) = \sigma(z)(1-\sigma(z)).
\end{align}
Indeed, a straightforward computation shows that
\begin{equation} \label{eq:alphaderLR}
\frac{\partial}{\partial \theta^{j}} l^{\alpha}(y,g_{\theta}(x)) = \left[- y g_\theta(yx)^{1-1/\alpha}(1-g_\theta(yx))\right]x^{j}, 
\end{equation} 
where $\theta^{j}, x^{j}$ denote the $j$-th components of $\theta$ and $x$, respectively. 
Thus, the gradient of $\alpha$-loss in \eqref{eq:alphadefLR} is
\begin{equation} \label{eq:alphagradLR}
\nabla_\theta l^{\alpha}(Y,g_\theta(X)) = F_{1}(\alpha,\theta,X,Y)X,
\end{equation}
where $F_{1}(\alpha,\theta,x,y)$ is defined as the expression within brackets in \eqref{eq:alphaderLR}.
Another straightforward computation yields
\begin{equation} \label{eq:alphader2LR}
\nabla^{2}_{\theta}l^{\alpha}(Y,g_{\theta}(X)) = F_{2}(\alpha,\theta,X,Y)XX^{\intercal},
\end{equation}
where $F_{2}$ is defined as
\begin{align} \label{eq:F2}
F_2(\alpha,\theta,x,y) &:= g_{\theta}(yx)^{1 - \alpha^{-1}}g_{\theta}(-yx) \left(g_{\theta}(yx) - \left(1-\alpha^{-1}\right) g_{\theta}(-yx) \right).
\end{align}

\subsection{Convexity of the $\alpha$-risk}
\begin{figure}[h] 
    \centering
    \centerline{\includegraphics[width=.75\linewidth]{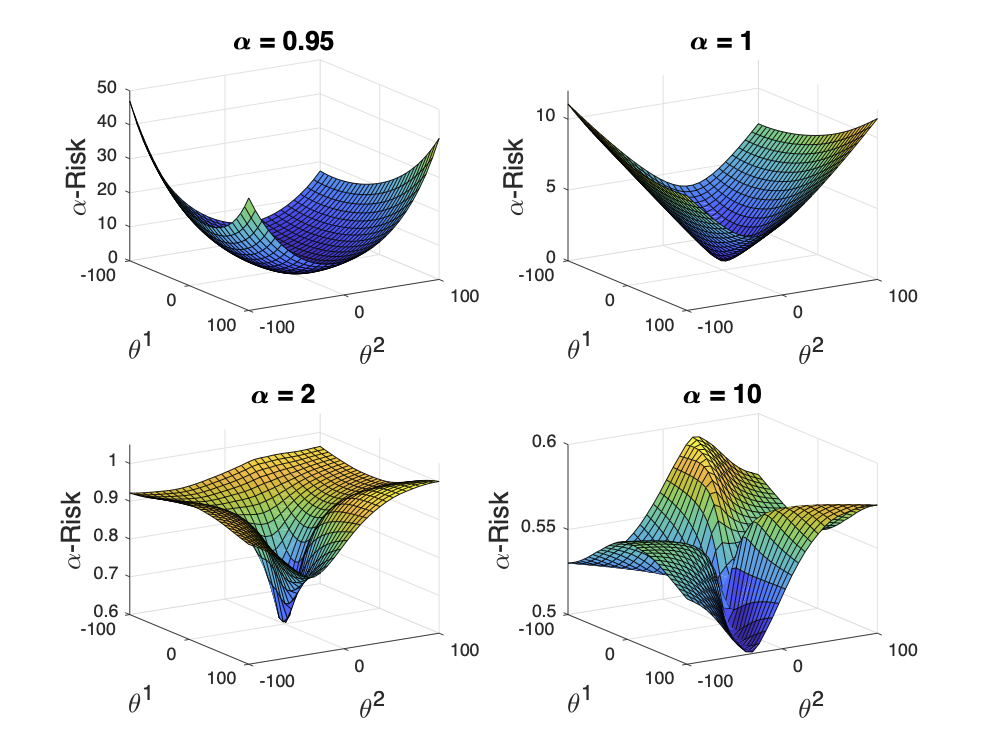}}
    \caption{The landscape of $\alpha$-loss ($R_{\alpha}$ for $\alpha = 0.95, 1, 2, 10$) in the logistic model, where features are normalized, for a 2D-GMM with $\mathbb{P}[Y=-1] = 0.12 = 1 - \mathbb{P}[Y=1]$, $\mu_{X|Y=-1} =(-0.18, 1.49)^{\intercal}$, $\mu_{X|Y=1} = (-0.01,0.16)^{\intercal}$, $\Sigma_{-1} = [3.20, -2.02; -2.02, 2.71]$, and $\Sigma_{1} = [4.19, 1.27; 1.27, 0.90]$.}
    \label{fig:fourlandscapes}
\end{figure}


We now turn our attention to the case where $\alpha \in (0,1]$; we find that for this regime, $R_{\alpha}$ is strongly convex; see Figure~\ref{fig:fourlandscapes} for an example. 
Prior to stating the result, for two matrices $A,B \in \mathbb{R}^{d\times d}$, we let $\succeq$ denote the Loewner (partial) order in the positive semi-definite cone. That is, we write $A \succeq B$ when $A-B$ is a positive semi-definite matrix. 
For a matrix $A \in \mathbb{R}^{d\times d}$, let $\lambda_1(A),\ldots,\lambda_d(A)$ be its eigenvalues.
Finally, we recall that a function is $m$-strongly convex if and only if its Hessian has minimum eigenvalue $m \geq 0$ \cite{boyd2004convex}. 
\begin{theorem} \label{Thm:SLQClessthan1}
Let $\Sigma := \mathbb{E}[XX^{\intercal}]$. If $\alpha \in (0,1]$, then $R_{\alpha}(\theta)$ 
is $\Lambda(\alpha,r\sqrt{d}) \min_{i \in [d]}\lambda_{i}\left(\Sigma\right)$-strongly convex in $\theta \in \mathbb{B}_{d}(r)$,
where 
\begin{align}
\Lambda(\alpha,r\sqrt{d}) := \sigma(r\sqrt{d})^{1-1/\alpha}\left(\sigma'(r\sqrt{d}) - \left(1 - \alpha^{-1} \right)\sigma(-r\sqrt{d})^{2}\right).
\end{align}
\end{theorem}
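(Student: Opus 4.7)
The strategy is to produce a uniform positive-semi-definite lower bound on the Hessian of $R_\alpha$ over $\mathbb{B}_d(r)$. Differentiating under the expectation and invoking~\eqref{eq:alphader2LR} yields
\[
\nabla^{2} R_\alpha(\theta) \;=\; \mathbb{E}_{X,Y}\bigl[F_2(\alpha,\theta,X,Y)\,XX^{\intercal}\bigr].
\]
If one can show the scalar pointwise bound $F_2(\alpha,\theta,x,y) \geq \Lambda(\alpha,r\sqrt{d})$ uniformly over $\theta \in \mathbb{B}_d(r)$, $x\in[0,1]^d$, and $y\in\{\pm 1\}$, then for every $v\in\mathbb{R}^d$,
\[
v^{\intercal}\nabla^{2} R_\alpha(\theta)\,v \;=\; \mathbb{E}\bigl[F_2(\alpha,\theta,X,Y)\,(v^{\intercal}X)^{2}\bigr] \;\geq\; \Lambda(\alpha,r\sqrt{d})\,v^{\intercal}\Sigma\,v \;\geq\; \Lambda(\alpha,r\sqrt{d})\,\min_{i}\lambda_{i}(\Sigma)\,\|v\|^{2},
\]
which is precisely the claimed modulus of strong convexity.

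To establish the scalar bound, I would parametrize $F_2$ by the margin $z := y\langle\theta,x\rangle$. By Cauchy--Schwarz together with $\|\theta\|\leq r$ and $\|x\|\leq\sqrt{d}$ (since $x\in[0,1]^d$), we have $|z|\leq r\sqrt{d}$. Writing $g_\theta(yx) = \sigma(z)$, $g_\theta(-yx) = \sigma(-z)$, and using the identity $\sigma(z)\sigma(-z) = \sigma'(z)$ from~\eqref{eq:sigprop1}, expression~\eqref{eq:F2} collapses to
\[
F_2 \;=\; \sigma(z)^{1-1/\alpha}\bigl(\sigma'(z) - (1-\alpha^{-1})\sigma(-z)^{2}\bigr).
\]
For $\alpha\leq 1$ the quantity $1-1/\alpha$ is non-positive, so the exponent of $\sigma(z)$ is non-positive and the coefficient $-(1-\alpha^{-1})$ in front of $\sigma(-z)^{2}$ is non-negative; in particular both factors of $F_2$ are non-negative and can be minimized independently on $z\in[-r\sqrt{d},r\sqrt{d}]$.

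Each minimization is then elementary. Since $\sigma(z)\in(0,1)$ and $1-1/\alpha\leq 0$, the map $z\mapsto\sigma(z)^{1-1/\alpha}$ is minimized where $\sigma(z)$ is largest, giving the bound $\sigma(r\sqrt{d})^{1-1/\alpha}$. The derivative $\sigma'$ is even and unimodal (decreasing on $[0,\infty)$), so $\sigma'(z)\geq\sigma'(r\sqrt{d})$ on the interval. Finally, because $-(1-\alpha^{-1})\geq 0$, the term $-(1-\alpha^{-1})\sigma(-z)^{2}$ is minimized where $\sigma(-z)^{2}$ is smallest, namely at $z=r\sqrt{d}$, yielding $-(1-\alpha^{-1})\sigma(-r\sqrt{d})^{2}$. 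Multiplying the two factor-bounds produces $F_2 \geq \Lambda(\alpha,r\sqrt{d})$, closing the argument.

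The only delicate point is sign-tracking in the term $-(1-\alpha^{-1})\sigma(-z)^{2}$: for $\alpha<1$ it contributes \emph{positively} to $F_2$ (and hence to the convexity modulus), so one must minimize $\sigma(-z)^{2}$ rather than maximize it. At $\alpha=1$ this contribution vanishes and we recover the familiar modulus $\sigma'(r\sqrt{d})\min_{i}\lambda_{i}(\Sigma)$ for logistic regression; for $\alpha<1$ the extra positive term makes $\Lambda$ strictly larger, matching the visual intuition of Figure~\ref{fig:fourlandscapes} that the $\alpha$-risk becomes ``more convex'' as $\alpha$ decreases. No bootstrapping or auxiliary inequalities are required in this regime; the proof is essentially a clean pointwise bound followed by taking expectations.
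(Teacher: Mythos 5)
Your proposal is correct and follows essentially the same route as the paper: lower-bound $F_{2}(\alpha,\theta,x,y)$ pointwise by $\Lambda(\alpha,r\sqrt{d})$ over the margin range $|z|\leq r\sqrt{d}$, conclude $\nabla^{2}R_{\alpha}(\theta)\succeq \Lambda(\alpha,r\sqrt{d})\,\Sigma$, and read off the strong-convexity modulus from $\min_{i}\lambda_{i}(\Sigma)$. The only difference is that you supply the detail the paper dismisses as ``readily shown''---and your two-factor regrouping $\sigma(z)^{1-1/\alpha}\bigl(\sigma'(z)-(1-\alpha^{-1})\sigma(-z)^{2}\bigr)$, with the even-unimodal bound on $\sigma'$ and the sign check on the $\sigma(-z)^{2}$ term, is in fact the correct way to make that step rigorous, since the paper's three raw factors are not all minimized at the same endpoint.
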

Observe that if $\min_{i \in [d]} \lambda_{i}(\Sigma) = 0$, then the $\alpha$-risk is merely convex for $\alpha \leq 1$. 
Also observe that for $r \sqrt{d} > 0$ fixed, $\Lambda(\alpha,r\sqrt{d})$ is monotonically decreasing in $\alpha$. 
Thus, $R_{\alpha}$ becomes more strongly convex as $\alpha$ approaches zero.

While Theorem \ref{Thm:SLQClessthan1} states that the $\alpha$-risk is strongly-convex for all $\alpha \leq 1$ and for any $r\sqrt{d}>0$, the following corollary, which is proved with similar techniques as Theorem \ref{Thm:SLQClessthan1}, states that the $\alpha$-risk is strongly-convex for some range of $\alpha > 1$, provided that $r\sqrt{d}>0$ is small enough. 
\begin{corollary} \label{cor:convexforalphagreaterthan1}
Let $\Sigma := \mathbb{E}[XX^{T}]$.
If $r\sqrt{d} \leq \arcsinh{(1/2)}$, then $R_{\alpha}(\theta)$ is $\tilde{\Lambda}(\alpha,r\sqrt{d}) \min_{i \in [d]}\lambda_{i}\left(\Sigma\right)$-strongly convex in $\theta \in \mathbb{B}_{d}(r)$ for $\alpha \in \left(0,(e^{2r\sqrt{d}}-e^{r\sqrt{d}})^{-1}\right]$, where 
\begin{align}
\tilde{\Lambda}(\alpha,r\sqrt{d}) := \sigma(-r\sqrt{d})^{2-1/\alpha}\sigma(r\sqrt{d})\left(1 - e^{r\sqrt{d}} + \alpha^{-1} e^{-r\sqrt{d}} \right).
\end{align}
\end{corollary}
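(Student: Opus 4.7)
The plan is to mirror the proof of Theorem~\ref{Thm:SLQClessthan1}, adapting the pointwise bounds on the Hessian integrand $F_{2}$ to the regime $\alpha>1$ where the sign of $1-\alpha^{-1}$ flips. Concretely, I would show that $F_{2}(\alpha,\theta,X,Y) \geq \tilde{\Lambda}(\alpha,r\sqrt{d})$ uniformly over $\theta\in\mathbb{B}_{d}(r)$, $X\in[0,1]^{d}$, and $Y\in\{-1,+1\}$. Then, interchanging differentiation and expectation in \eqref{eq:alphader2LR} gives $\nabla^{2}R_{\alpha}(\theta) = \mathbb{E}[F_{2}\,XX^{\intercal}] \succeq \tilde{\Lambda}(\alpha,r\sqrt{d})\,\Sigma$, and the desired bound follows since the minimum eigenvalue of the right-hand side is $\tilde{\Lambda}(\alpha,r\sqrt{d})\min_{i}\lambda_{i}(\Sigma)$.

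To get a workable expression for $F_{2}$, I would parametrize by the margin $z:=\langle\theta,YX\rangle$, which satisfies $|z|\leq r\sqrt{d}$ by Cauchy--Schwarz since $\|X\|\leq\sqrt{d}$. Using the identity $\sigma(z)=e^{z}\sigma(-z)$ to pull one factor of $\sigma(-z)$ out of the bracket in \eqref{eq:F2}, the integrand factorizes as
$F_{2}(\alpha,\theta,X,Y) = \sigma(z)^{1-1/\alpha}\,\sigma(-z)^{2}\,(e^{z}-1+\alpha^{-1})$,
which splits into three pieces that I would bound independently.

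For the first piece $\sigma(z)^{1-1/\alpha}\sigma(-z)^{2}$, the regime $\alpha>1$ gives $1-\alpha^{-1}>0$, so $\sigma(z)^{1-1/\alpha}$ is increasing in $z$ and hence at least $\sigma(-r\sqrt{d})^{1-1/\alpha}$; simultaneously, $\sigma(-z)\geq\sigma(-r\sqrt{d})$ holds uniformly on the interval, yielding $\sigma(-z)^{2}\geq\sigma(-r\sqrt{d})^{2}$. Combining these produces the first factor of $\tilde{\Lambda}$, namely $\sigma(-r\sqrt{d})^{3-1/\alpha}$. For the third piece $(e^{z}-1+\alpha^{-1})$, which is increasing in $z$, the tightest bound is obtained at $z=-r\sqrt{d}$, giving $(e^{-r\sqrt{d}}-1+\alpha^{-1})$; I would then weaken this to $(1-e^{r\sqrt{d}}+\alpha^{-1}e^{-r\sqrt{d}})$, which remains a valid lower bound because the difference equals $2(\cosh(r\sqrt{d})-1)+\alpha^{-1}(1-e^{-r\sqrt{d}})\geq 0$. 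This weakened form is precisely the second factor of $\tilde{\Lambda}$, and its positivity is equivalent to $\alpha^{-1}>e^{2r\sqrt{d}}-e^{r\sqrt{d}}$, matching the hypothesized range.

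Finally, the constraint $r\sqrt{d}<\arcsinh(1/2)$ is exactly what makes this range of $\alpha$ non-trivial: writing $u=e^{r\sqrt{d}}$, the equation $e^{2r\sqrt{d}}-e^{r\sqrt{d}}=1$ becomes $u^{2}-u-1=0$, whose positive root is the golden ratio $u=(1+\sqrt{5})/2 = e^{\arcsinh(1/2)}$. Hence $r\sqrt{d}<\arcsinh(1/2)$ forces $(e^{2r\sqrt{d}}-e^{r\sqrt{d}})^{-1}>1$, and the corollary properly extends Theorem~\ref{Thm:SLQClessthan1} beyond $\alpha=1$. The principal technical obstacle is choosing the weakened lower bound on $(e^{z}-1+\alpha^{-1})$ so that its positivity threshold coincides exactly with the golden-ratio constraint, rather than the looser condition $\alpha<e^{r\sqrt{d}}/(e^{r\sqrt{d}}-1)$ obtained from the tightest pointwise bound; this deliberate slackening in the third factor is what ties the admissible regime for $\alpha$ cleanly to the geometric quantity $\arcsinh(1/2)$ appearing in the hypothesis.
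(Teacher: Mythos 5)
Your proposal is correct and follows essentially the same route as the paper's own proof: bound the Hessian integrand $F_{2}$ pointwise over the margin $z\in[-r\sqrt{d},r\sqrt{d}]$, pass the bound through the expectation to get $\nabla^{2}R_{\alpha}(\theta)\succeq\tilde{\Lambda}(\alpha,r\sqrt{d})\,\Sigma$, and observe that the positivity threshold ties to $\arcsinh(1/2)$ via the golden ratio. Your factorization $\sigma(z)^{1-1/\alpha}\sigma(-z)^{2}(e^{z}-1+\alpha^{-1})$ is just a regrouping of the paper's $\sigma(z)^{2-1/\alpha}\sigma(-z)\bigl(1-e^{-z}+\alpha^{-1}e^{-z}\bigr)$ (they differ by the factor $e^{z}=\sigma(z)/\sigma(-z)$ moved between the two pieces), and your deliberately slackened bound on the last factor reproduces exactly the paper's constant and admissible range of $\alpha$.
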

Observe that if $r\sqrt{d} < \arcsinh{(1/2)}$, then $(e^{2r\sqrt{d}}-e^{r\sqrt{d}})^{-1} > 1$.
By inspecting the relationship between convexity and its dependence on $r\sqrt{d}$, Corollary~\ref{cor:convexforalphagreaterthan1} seems to suggest that as $\alpha$ increases slightly greater than $1$,
convexity is lost faster nearer to the boundary of the parameter space.
Indeed, refer to Figure~\ref{fig:fourlandscapes} to observe an example of this effect for $\alpha$ increasing from $\alpha=1$ to $\alpha = 2$, and note that convexity is preserved in the small radius about $\mathbf{0}$ for $\alpha = 2$. 

Examining the $\alpha$-risk in Figure~\ref{fig:fourlandscapes} for $\alpha = 2$ more closely, we see that it is reminiscent of a quasi-convex function.
Recall that (e.g., Chapter 3.4 in~\cite{boyd2004convex}) a function $f: \mathbb{R}^{d} \rightarrow \mathbb{R}$ is quasi-convex if for all $\theta,\theta_{0} \in \mathbb{R}^{d}$, such that $f(\theta_{0}) \leq f(\theta)$, it follows that
\begin{equation} \label{eq:quasiconvexity}
\langle -\nabla f(\theta), \theta_{0}-\theta \rangle \geq 0.
\end{equation}
In other words, the negative gradient of a quasi-convex function always points in the direction of descent. 
While $\alpha$-loss \eqref{eq:alphadefLR} is quasi-convex for $\alpha > 1$, this does not imply that the \textit{$\alpha$-risk} \eqref{eq:alphariskdefLR} is quasi-convex for $\alpha > 1$ since the sum of quasi-convex functions is not guaranteed to be quasi-convex~\cite{boyd2004convex}.
%
Thus, we need a new tool in order to quantify the optimization complexity of the $\alpha$-risk for $\alpha > 1$ in the large radius regime.

\subsection{Strictly Local Quasi-Convexity and its Extensions}

We use a framework developed by Hazan \textit{et al.} in \cite{hazan2015beyond} called \textit{strictly local quasi-convexity} (SLQC), which is a generalization of quasi-convexity. 
Intuitively, SLQC functions allow for multiple local minima below an $\epsilon$-controlled region while stipulating (strict) quasi-convex functional behavior outside the same region. 
Formally, we recall the following parameteric definition of SLQC functions provided in~\cite{hazan2015beyond}. 
%
\begin{definition}[{Definition~3.1},\cite{hazan2015beyond}] \label{def:SLQC}
Let $\epsilon,\kappa>0$ and $\theta_{0} \in \mathbb{R}^{d}$. A function $f: \mathbb{R}^{d} \rightarrow \mathbb{R}$ is called $(\epsilon, \kappa, \theta_{0})$-strictly locally quasi-convex (SLQC) at $\theta\in\mathbb{R}^d$ if at least one of the following applies:
\begin{enumerate}
\item[1.] $f(\theta) - f(\theta_{0}) \leq \epsilon$,
\item[2.] $\|\nabla f(\theta)\| > 0$ and $\langle -\nabla f(\theta), \theta' - \theta \rangle \geq 0$ for every $\theta' \in \mathbb{B}(\theta_{0}, \epsilon/\kappa)$. 
\end{enumerate}
\end{definition}
\begin{figure}[h]
\centering
\begin{tikzpicture}
        \coordinate (A) at (5, 0) {};
        \coordinate (B) at (4.5500,-1.4309) {};
        \coordinate (C) at (4.4500, -1.4000) {};
\filldraw[color=black, fill=blue!5, very thick](5,0) circle (1.5); 
\draw[black, very thick] (5,0) -- (4.5500,-1.4309) node[right,midway] {$\dfrac{\epsilon}{\kappa}$};
\filldraw [blue] (4.5500,-1.4309) circle (2pt) node[anchor=north,color=black] {$\theta'$};
\filldraw[red] (5,0) circle (2pt) node[anchor=west, color = black] {$\theta_{0}$};
\draw[blue, thick,fill=blue!5] (.145,-.4) arc (-70:70:.425) node[anchor=west] {};
\draw[red, thick,fill=red!5] (0,-.3) arc (-90:90:.3) node[anchor=west] {};
\draw[black, very thick, ->, name path=B] (0,0) -- (0.45,1.4309) node[anchor=south] {}; 
\draw[black, very thick, ->, name path=B] (0,0) -- (0.45,-1.4309) node[anchor=south] {};
\draw[black, very thick, ->, name path=B] (0,0) -- (0,1.5) node[anchor=south] {}; 
\draw[black, very thick, ->, name path=B] (0,0) -- (0,-1.5) node[anchor=south] {};
\draw[dotted, very thick, ->, name path=B] (0,0) -- (1.4,0.5385) node[anchor=south] {$-\nabla f(\theta)$};
\filldraw [black] (0,0) circle (2pt) node[anchor=east] {$\theta$};
\end{tikzpicture}
\caption{An illustration highlighting the difference between quasi-convexity as given in \eqref{eq:quasiconvexity} and the second SLQC condition of Definition \ref{def:SLQC}. If $f$ is quasi-convex, the red angle describes the possible negative gradients of $f$ at $\theta$ with respect to $\theta_{0}$. If $f$ is SLQC, the blue angle describes the possible negative gradients of $f$ at $\theta$ with respect to $\theta_{0}$ and the given $\epsilon/\kappa$-radius ball.}
    \label{fig:quasiconvexityvsSLQC}
\end{figure}
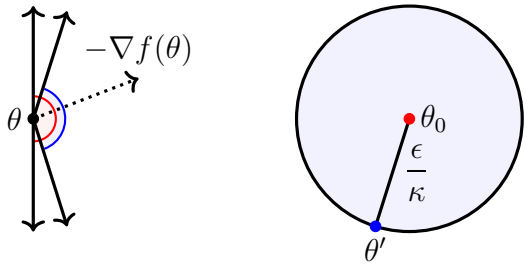
%
Briefly, in \cite{hazan2015beyond} Hazan \textit{et al.} refer to a function as SLQC \textit{in $\theta$}, whereas for the purposes of our analysis we refer to a function as SLQC \textit{at} $\theta$. We recover the uniform SLQC notion of Hazan \textit{et al.} by articulating a function is SLQC \textit{at $\theta$ for every $\theta$}. Our later analysis of the $\alpha$-risk in the logistic model benefits from this pointwise consideration.

Observe that where Condition 1 of Definition~\ref{def:SLQC} does not hold, Condition 2 implies quasi-convexity about $\mathbb{B}(\theta_{0}, \epsilon/\kappa)$ as evidence through~\eqref{eq:quasiconvexity}; see Figure~\ref{fig:quasiconvexityvsSLQC} for an illustration of the difference between classical quasi-convexity and SLQC in this regime.
We now present the following lemma, which is a structural result for general differentiable functions that provides an alternative formulation of the second requirement of SLQC functions in Definition~\ref{def:SLQC}; proof details can be found in Appendix~\ref{appen:prelim&reformSLQC}. 
\begin{lemma} \label{lemma:SLQCreformulation}
Assume that $f:\mathbb{R}^d\to\mathbb{R}$ is differentiable, $\theta_{0} \in \mathbb{R}^d$ and $\rho>0$. If $\theta\in\mathbb{R}^d$ is such that $\|\theta-\theta_0\|>\rho$, then the following are equivalent:
\begin{itemize}
    \item[1.] $\langle -\nabla f(\theta), \theta' - \theta \rangle \geq 0$ for all $\theta' \in \mathbb{B}_{d}\left(\theta_{0},\rho\right)$,
    \item[2.] $\langle -\nabla f(\theta), \theta_{0} - \theta \rangle \geq \rho \|\nabla f(\theta)\|$.
\end{itemize}
\end{lemma}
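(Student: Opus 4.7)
The plan is to prove the equivalence by means of the Cauchy--Schwarz inequality for one direction and an explicit worst-case choice of $\theta'$ for the other. Throughout, we may assume $\nabla f(\theta)\neq 0$, since if $\nabla f(\theta)=0$ then both conditions hold trivially (the inner product vanishes and the right-hand side of~(2) equals zero). The hypothesis $\|\theta-\theta_0\|>\rho$ is not needed for (2)$\Rightarrow$(1), but plays a role in (1)$\Rightarrow$(2) by guaranteeing that the worst-case $\theta'$ constructed below is distinct from $\theta$, so that the inequality in~(1) can be meaningfully evaluated there.

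For (2)$\Rightarrow$(1), given any $\theta'\in\mathbb{B}_d(\theta_0,\rho)$, I would decompose
\[
\langle -\nabla f(\theta),\theta'-\theta\rangle = \langle -\nabla f(\theta),\theta_0-\theta\rangle + \langle -\nabla f(\theta),\theta'-\theta_0\rangle.
\]
The second term is bounded below by $-\|\nabla f(\theta)\|\,\|\theta'-\theta_0\| \ge -\rho\|\nabla f(\theta)\|$ via Cauchy--Schwarz, and condition~(2) ensures the first term dominates this deficit, yielding nonnegativity.

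For (1)$\Rightarrow$(2), the idea is to pick the $\theta'\in\mathbb{B}_d(\theta_0,\rho)$ that makes the inner product $\langle -\nabla f(\theta),\theta'-\theta\rangle$ as small as possible, namely the point where $\theta'-\theta_0$ is antiparallel to $-\nabla f(\theta)$:
\[
\theta^{*} := \theta_0 + \rho\,\frac{\nabla f(\theta)}{\|\nabla f(\theta)\|}.
\]
Clearly $\theta^{*}\in\mathbb{B}_d(\theta_0,\rho)$ (in fact on the boundary), and since $\|\theta-\theta_0\|>\rho=\|\theta^{*}-\theta_0\|$ we have $\theta^{*}\neq\theta$. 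Substituting this choice into~(1) gives
\[
0 \le \langle -\nabla f(\theta),\theta^{*}-\theta\rangle = \langle -\nabla f(\theta),\theta_0-\theta\rangle - \rho\,\|\nabla f(\theta)\|,
\]
which is exactly condition~(2).

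The only subtlety, and the one place a reader might ask for justification, is verifying that the chosen $\theta^{*}$ is indeed the minimizer over the closed ball, i.e., that restricting the Cauchy--Schwarz bound becomes tight at this boundary point. This is immediate from the fact that $\langle -\nabla f(\theta),\theta'-\theta_0\rangle$ is a linear functional of $\theta'-\theta_0$ on a Euclidean ball, and such a functional attains its minimum at the boundary point opposite to its gradient direction. Since no delicate analysis beyond Cauchy--Schwarz and this elementary optimization is required, I do not anticipate a genuine obstacle, and the whole argument should fit in a few lines.
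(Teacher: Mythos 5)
Your proof is correct, and it takes a genuinely different and more elementary route than the paper's. The paper proves this lemma by a geometric angle-chasing argument: it introduces the tangent point $\theta'$ from $\theta$ to the ball $\mathbb{B}_d(\theta_0,\rho)$, defines three angles $\delta,\phi,\psi$ between the relevant vectors, and then uses the right-triangle relation $\sin(\delta)=\rho/\|\theta_0-\theta\|$, the law of cosines, and a three-case analysis of how the angles can be configured (the appendix version also upgrades the statement to a three-way equivalence involving the strict inequality $\langle -\nabla f(\theta),\theta'-\theta\rangle>0$). Your argument replaces all of this with two lines: for one direction the decomposition $\theta'-\theta=(\theta_0-\theta)+(\theta'-\theta_0)$ plus Cauchy--Schwarz, and for the other the single extremal point $\theta^{*}=\theta_0+\rho\,\nabla f(\theta)/\|\nabla f(\theta)\|$, which lies on the boundary of the ball and turns Condition~1 directly into Condition~2. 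Note that for this direction you do not even need $\theta^{*}$ to be the exact minimizer of the linear functional over the ball --- it suffices that $\theta^{*}$ is \emph{some} admissible point at which Condition~1 yields the desired inequality --- so the ``subtlety'' you flag is not actually load-bearing. Likewise, the hypothesis $\|\theta-\theta_0\|>\rho$ is not needed for the non-strict equivalence as stated (Condition~1 at $\theta'=\theta$ is vacuous), though the paper does use it to handle the strict variant in its appendix. What your approach buys is brevity and dimension-free transparency; what the paper's buys is the stronger strict-inequality equivalence and the geometric picture (Figure~\ref{fig:LemmaGoodGradients}) that the authors reuse when reasoning about SLQC informally.
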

Intuitively, the equivalence presented by Condition~2 of Lemma~\ref{lemma:SLQCreformulation} is easier to manipulate in proving SLQC properties of the $\alpha$-risk as we merely need to control $\langle -\nabla f(\theta), \theta_{0} - \theta \rangle$ rather than $\langle -\nabla f(\theta), \theta' - \theta \rangle$ for every $\theta' \in \mathbb{B}(\theta_{0}, \epsilon/\kappa)$.

In \cite{hazan2015beyond}, Hazan \textit{et al.}~measure the optimization complexity of SLQC functions through the normalized gradient descent (NGD) algorithm, which is almost canonical gradient descent (see, e.g., Chapter 14 in~\cite{shalev2014understanding}) except gradients are normalized such that the algorithm applies uniform-size directional updates given by a fixed learning rate $\eta > 0$.
While NGD may not be the most appropriate optimization algorithm in some applications, we use it as a theoretical benchmark which allows us to understand optimization complexity; further details regarding NGD can be found in Appendix~\ref{appen:prelim&reformSLQC}.
Indeed, the convergence guarantees of NGD for SLQC functions are similar to those of Gradient Descent for convex functions.
\begin{prop}[{Theorem~4.1},\cite{hazan2015beyond}] \label{prop:NGDiterations}
Let $f: \mathbb{R}^{d} \rightarrow \mathbb{R}$, $\theta_{1} \in \mathbb{R}^{d}$, and $\theta^{*} = \argmin_{\theta \in \mathbb{R}^{d}} f(\theta)$. If $f$ is $(\epsilon, \kappa, \theta^{*})$-SLQC at $\theta$ for every $\theta \in \mathbb{R}^{d}$, then running the NGD algorithm with learning $\eta = \epsilon/\kappa$ for number of iterations $T \geq \kappa^{2}\|\theta_{1} - \theta^{*}\|^{2}/\epsilon^{2}$ achieves $\min\limits_{t=1, \ldots, T} f(\theta_{t}) - f(\theta^{*}) \leq \epsilon$. 
\end{prop}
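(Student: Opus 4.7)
The plan is to apply the standard Lyapunov/potential-function argument used throughout first-order optimization, taking the squared distance to the optimum $V_{t} := \|\theta_{t} - \theta^{*}\|^{2}$ as the potential. Set $\eta := \epsilon/\kappa$ and write $\hat{g}_{t} := \nabla f(\theta_{t})/\|\nabla f(\theta_{t})\|$ for the normalized gradient, so that NGD performs the update $\theta_{t+1} = \theta_{t} - \eta\hat{g}_{t}$. I would argue by contradiction: assume $f(\theta_{t}) - f(\theta^{*}) > \epsilon$ for every $t \in \{1, \ldots, T\}$, and derive a bound on $T$ that contradicts the iteration-count hypothesis.

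The first substantive step uses the SLQC hypothesis to extract a single scalar inequality at every iterate. Because $f(\theta_{t}) - f(\theta^{*}) > \epsilon$, Condition~1 of Definition~\ref{def:SLQC} fails, so Condition~2 must hold at each $\theta_{t}$ with reference point $\theta^{*}$: in particular $\|\nabla f(\theta_{t})\| > 0$ (hence $\hat{g}_{t}$ is well-defined) and $\langle -\nabla f(\theta_{t}), \theta' - \theta_{t}\rangle \geq 0$ for every $\theta' \in \mathbb{B}_{d}(\theta^{*}, \epsilon/\kappa)$. When $\|\theta_{t} - \theta^{*}\| > \epsilon/\kappa$, Lemma~\ref{lemma:SLQCreformulation} immediately converts this into
\begin{equation}
\langle -\nabla f(\theta_{t}), \theta^{*} - \theta_{t}\rangle \geq (\epsilon/\kappa)\|\nabla f(\theta_{t})\|,
\end{equation}
which, after dividing by $\|\nabla f(\theta_{t})\|$, yields $\langle \hat{g}_{t}, \theta_{t} - \theta^{*}\rangle \geq \eta$. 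The same inequality can be obtained directly by testing Condition~2 at the single vector $\theta' = \theta^{*} + (\epsilon/\kappa)\hat{g}_{t}$, which lies in $\mathbb{B}_{d}(\theta^{*}, \epsilon/\kappa)$ for \emph{any} location of $\theta_{t}$; this covers the inside-ball case as well.

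Next I would expand the potential using $\|\hat{g}_{t}\| = 1$ and the inequality just established:
\begin{align}
V_{t+1} &= \|\theta_{t} - \theta^{*} - \eta\hat{g}_{t}\|^{2} = V_{t} - 2\eta\langle \hat{g}_{t}, \theta_{t} - \theta^{*}\rangle + \eta^{2} \leq V_{t} - 2\eta^{2} + \eta^{2} = V_{t} - \eta^{2}.
\end{align}
Telescoping over $t = 1, \ldots, T$ gives $0 \leq V_{T+1} \leq V_{1} - T\eta^{2}$, so $T \leq \|\theta_{1} - \theta^{*}\|^{2}/\eta^{2} = \kappa^{2}\|\theta_{1} - \theta^{*}\|^{2}/\epsilon^{2}$. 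This contradicts the hypothesis $T \geq \kappa^{2}\|\theta_{1} - \theta^{*}\|^{2}/\epsilon^{2}$ as soon as one of the inequalities is handled strictly; hence at least one $\theta_{t}$ must satisfy $f(\theta_{t}) - f(\theta^{*}) \leq \epsilon$.

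The only real subtlety — and the main candidate for being called the hard part — is bridging the universally-quantified Condition~2 of SLQC with the single scalar bound $\langle \hat{g}_{t}, \theta_{t} - \theta^{*}\rangle \geq \eta$ that drives the potential drop. Lemma~\ref{lemma:SLQCreformulation} does exactly this when $\theta_{t}$ is outside $\mathbb{B}_{d}(\theta^{*}, \epsilon/\kappa)$, and the direct choice $\theta' = \theta^{*} + \eta\hat{g}_{t}$ handles the inside-ball case uniformly, so no smoothness assumption on $f$ is needed. Beyond this observation, the argument is the three-line NGD computation mirroring the classical projected-gradient-descent analysis for convex functions, and the fixed step size $\eta = \epsilon/\kappa$ is essentially dictated by requiring the cross term $2\eta\langle\hat{g}_{t}, \theta_{t} - \theta^{*}\rangle$ to dominate the $\eta^{2}\|\hat{g}_{t}\|^{2}$ remainder.
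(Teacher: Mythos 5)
Your argument is correct, and it is essentially the standard potential-function proof of Theorem~4.1 in Hazan \textit{et al.}~\cite{hazan2015beyond}; the paper itself does not reprove this proposition but simply imports it from that reference. The one point worth keeping is your direct substitution $\theta' = \theta^* + (\epsilon/\kappa)\hat{g}_t$ into Condition~2, which cleanly yields $\langle \hat{g}_t, \theta_t - \theta^*\rangle \geq \eta$ without any case split on whether $\theta_t$ lies inside $\mathbb{B}_d(\theta^*,\epsilon/\kappa)$.
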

For an $(\epsilon,\kappa,\theta_0)$-SLQC function, a smaller $\epsilon$ provides better optimality guarantees. Given $\epsilon>0$, smaller $\kappa$ leads to faster optimization as the number of required iterations increases with $\kappa^2$.
Finally, by using projections, NGD can be easily adapted to work over convex and closed sets (e.g., $\mathbb{B}(\theta_0,r)$ for some $\theta_0\in\mathbb{R}^d$ and $r>0$).

\subsection{SLQC Parameters of the $\alpha$-risk}
With the above SLQC preliminaries in hand, we start quantifying the SLQC parameters of the $\alpha$-risk, $R_{\alpha}$.
It can be shown that for $\alpha \in (0,\infty]$, $R_{\alpha}$ 
is $C_{d}(r,\alpha)$-Lipschitz in $\theta \in \mathbb{B}_{d}(r)$ where
\begin{align} \label{eq:alpharisklipintheta}
C_{d}(r,\alpha) := \begin{cases}
			\sqrt{d} \sigma(r\sqrt{d})\sigma(-r\sqrt{d})^{1-1/\alpha}, & \alpha \in (0,1] \\
            \sqrt{d} \left(\frac{\alpha-1}{2 \alpha - 1} \right)^{1-1/\alpha} \left(\frac{\alpha}{2\alpha -1} \right), & \alpha \in (1,\infty] \quad \text{and} \quad r\sqrt{d} \geq \log{\left(1 - 1/\alpha \right)} \\
            \sqrt{d} \sigma(r\sqrt{d})\sigma(-r\sqrt{d})^{1-1/\alpha}, & \alpha \in (1,\infty] \quad \text{and} \quad r\sqrt{d} < \log{\left(1 - 1/\alpha \right)}. \\
		 \end{cases}
\end{align}
Thus, in conjunction with Theorem~\ref{Thm:SLQClessthan1}, Corollary~\ref{cor:convexforalphagreaterthan1}, and a result by Hazan \textit{et al.} in \cite{hazan2015beyond} (after Definition 3), we provide the following result that explicitly characterizes the SLQC parameters of the $\alpha$-risk $R_{\alpha}$ for two separate ranges of $\alpha$ near $1$.
\begin{prop} \label{prop:SLQCpriortoevolution}
Suppose that $\Sigma \succ 0$ and $\theta_0 \in \mathbb{B}_{d}(r)$ is fixed. We have one of the following:
\begin{itemize}
    \item If $r\sqrt{d} < \arcsinh{(1/2)}$, then, for every $\epsilon>0$, $R_{\alpha}$ is $(\epsilon,C_{d}(r,\alpha),\theta_0)$-SLQC at $\theta$ for every $\theta\in\mathbb{B}_{d}(r)$ for $\alpha \in \left(0,(e^{2r\sqrt{d}}-e^{r\sqrt{d}})^{-1}\right]$ where $C_{d}(r,\alpha)$ is given in \eqref{eq:alpharisklipintheta};
    \item Otherwise, for every $\epsilon>0$, $R_{\alpha}$ is $(\epsilon,C_{d}(r,\alpha),\theta_0)$-SLQC at $\theta$ for every $\theta\in\mathbb{B}_{d}(r)$ for $\alpha \in (0, 1]$. 
\end{itemize}
\end{prop}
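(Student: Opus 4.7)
The plan is to deduce Proposition~\ref{prop:SLQCpriortoevolution} as a direct consequence of the two strong convexity statements already established (Theorem~\ref{Thm:SLQClessthan1} and Corollary~\ref{cor:convexforalphagreaterthan1}) together with the Lipschitz bound~\eqref{eq:alpharisklipintheta}, via the general principle, alluded to after Definition~3 in~\cite{hazan2015beyond}, that a convex, $G$-Lipschitz function is automatically $(\epsilon, G, \theta_0)$-SLQC at every point for every $\epsilon > 0$ and every reference point $\theta_0$.

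First I would verify the Lipschitz bound~\eqref{eq:alpharisklipintheta}. Starting from the per-sample gradient in~\eqref{eq:alphagradLR}, interchanging gradient and expectation gives $\|\nabla R_\alpha(\theta)\| \leq \sqrt{d}\, \sup_{g} g^{1-1/\alpha}(1-g)$, using $\|X\|\leq \sqrt{d}$ and $g_\theta(yx) \in [\sigma(-r\sqrt{d}),\sigma(r\sqrt{d})]$ whenever $\theta\in\mathbb{B}_d(r)$ and $x\in[0,1]^d$. For $\alpha\in(0,1]$ the exponent $1-1/\alpha$ is nonpositive, so the supremum is attained at $g=\sigma(-r\sqrt{d})$ and equals $\sigma(-r\sqrt{d})^{1-1/\alpha}\sigma(r\sqrt{d})$. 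For $\alpha>1$ a one-variable calculus argument locates the interior maximum at $g^\star = (\alpha-1)/(2\alpha-1)$, producing the second branch of~\eqref{eq:alpharisklipintheta}.

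Second, the hypothesis $\Sigma\succ 0$ ensures $\min_i \lambda_i(\Sigma)>0$, so Theorem~\ref{Thm:SLQClessthan1} (for $\alpha\in(0,1]$) and Corollary~\ref{cor:convexforalphagreaterthan1} (for $\alpha\in(0,(e^{2r\sqrt{d}}-e^{r\sqrt{d}})^{-1}]$ when $r\sqrt{d}<\arcsinh(1/2)$) each imply that $R_\alpha$ is strongly, hence in particular strictly, convex on $\mathbb{B}_d(r)$. Third, I would upgrade "convex and $C_d(r,\alpha)$-Lipschitz" to SLQC pointwise. Fix $\epsilon>0$, $\theta_0\in\mathbb{B}_d(r)$, and set $\rho := \epsilon/C_d(r,\alpha)$. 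If $R_\alpha(\theta)-R_\alpha(\theta_0)\leq\epsilon$, Condition~1 of Definition~\ref{def:SLQC} holds and there is nothing to do. Otherwise, the first-order characterization of convexity gives
\begin{equation*}
\langle -\nabla R_\alpha(\theta), \theta_0-\theta\rangle \;\geq\; R_\alpha(\theta)-R_\alpha(\theta_0) \;>\; \epsilon \;=\; \rho\, C_d(r,\alpha) \;\geq\; \rho\,\|\nabla R_\alpha(\theta)\|,
\end{equation*}
which forces $\|\nabla R_\alpha(\theta)\|>0$ and, via Cauchy--Schwarz applied to the left-hand side, also $\|\theta-\theta_0\|>\rho$. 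Lemma~\ref{lemma:SLQCreformulation} then converts this single directional inequality into the full half-space condition $\langle -\nabla R_\alpha(\theta), \theta'-\theta\rangle\geq 0$ for every $\theta'\in\mathbb{B}_d(\theta_0,\rho)$, which is Condition~2 of Definition~\ref{def:SLQC}.

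The main obstacle, essentially already dispatched by Lemma~\ref{lemma:SLQCreformulation}, is that SLQC asks for a uniform inner-product bound over an entire comparison ball around $\theta_0$, whereas convexity only supplies the one-directional inequality at $\theta_0$ itself; the reformulation lemma is precisely what lets the Lipschitz control of $\|\nabla R_\alpha\|$ promote the one-point inequality into the required uniform statement. The two bullets of the proposition then correspond to plugging in the two different convexity ranges supplied by Theorem~\ref{Thm:SLQClessthan1} and Corollary~\ref{cor:convexforalphagreaterthan1}, respectively.
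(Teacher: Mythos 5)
Your proposal is correct, and its skeleton coincides with the paper's: bound $\|\nabla R_{\alpha}\|$ by $C_{d}(r,\alpha)$ exactly as in \eqref{eq:alpharisklipintheta} (boundary supremum for $\alpha\leq 1$, interior critical point $g^{\star}=(\alpha-1)/(2\alpha-1)$ for $\alpha>1$), then invoke Theorem~\ref{Thm:SLQClessthan1} and Corollary~\ref{cor:convexforalphagreaterthan1} together with $\Sigma\succ 0$ to get strong convexity on the two stated $\alpha$-ranges. Where you diverge is the final step: the paper closes the argument by citing, as a black box, the remark of Hazan \textit{et al.}~\cite{hazan2015beyond} that a $G$-Lipschitz strictly quasi-convex function is $(\epsilon,G,\theta_{0})$-SLQC for every $\epsilon>0$, whereas you prove that implication from scratch --- when Condition~1 of Definition~\ref{def:SLQC} fails, the first-order convexity inequality gives $\langle -\nabla R_{\alpha}(\theta),\theta_{0}-\theta\rangle>\epsilon=\rho\,C_{d}(r,\alpha)\geq\rho\|\nabla R_{\alpha}(\theta)\|$, which forces $\|\nabla R_{\alpha}(\theta)\|>0$ and (via Cauchy--Schwarz) $\|\theta-\theta_{0}\|>\rho$, so Lemma~\ref{lemma:SLQCreformulation} upgrades the single directional bound to Condition~2. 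Your version is the more self-contained of the two: it removes the reliance on an informally stated external remark and showcases Lemma~\ref{lemma:SLQCreformulation} doing exactly the job it was built for, at the cost of a few extra lines. Note only that your explicit argument uses convexity (the first-order inequality) rather than mere strict quasi-convexity, but since both bullets of the proposition are backed by strong convexity results, nothing is lost.
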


Thus, by Proposition~\ref{prop:NGDiterations} and~\eqref{eq:alpharisklipintheta}, the number of iterations of NGD, $T_{\alpha}$, tends to infinity as $\alpha$ tends to zero.
This consequence of the result feels somewhat counterintuitive because one would expect that increasing convexity ($R_{\alpha}$ becomes ``more'' strongly convex in $\theta$ as $\alpha$ decreases, see Theorem~\ref{Thm:SLQClessthan1} and Figure~\ref{fig:fourlandscapes}) would improve the convergence rate.
However, the number of iterations of NGD tends to infinity as $\alpha$ tends to zero because the Lipschitz constant of $R_{\alpha}$, $C_{d}(r,\alpha) = \kappa$ blows up. 
This phenomenon of the Lipschitz constant worsening the convergence rate is not merely a feature of the SLQC theory surrounding NGD. 
It is also present in convergence rates for SGD optimizing convex functions, e.g., see Theorem 14.8 in~\cite{shalev2014understanding}.
Therefore, we find that there exists a trade-off between the desired strong-convexity of $R_{\alpha}$ and the optimization complexity of NGD.

Next, we quantify the evolution of the SLQC parameters of $R_{\alpha}$ both in the small radius regime 
and in the large radius regime. 
%
Since $R_{\alpha}$ tends more towards the probability of error (expectation of $0$-$1$ loss) as $\alpha$ approaches infinity, we find that the SLQC parameters deteriorate and the optimization complexity of NGD increases as we increase $\alpha$.
\begin{figure}
    \centering
    \centerline{\includegraphics[width=.75\linewidth]{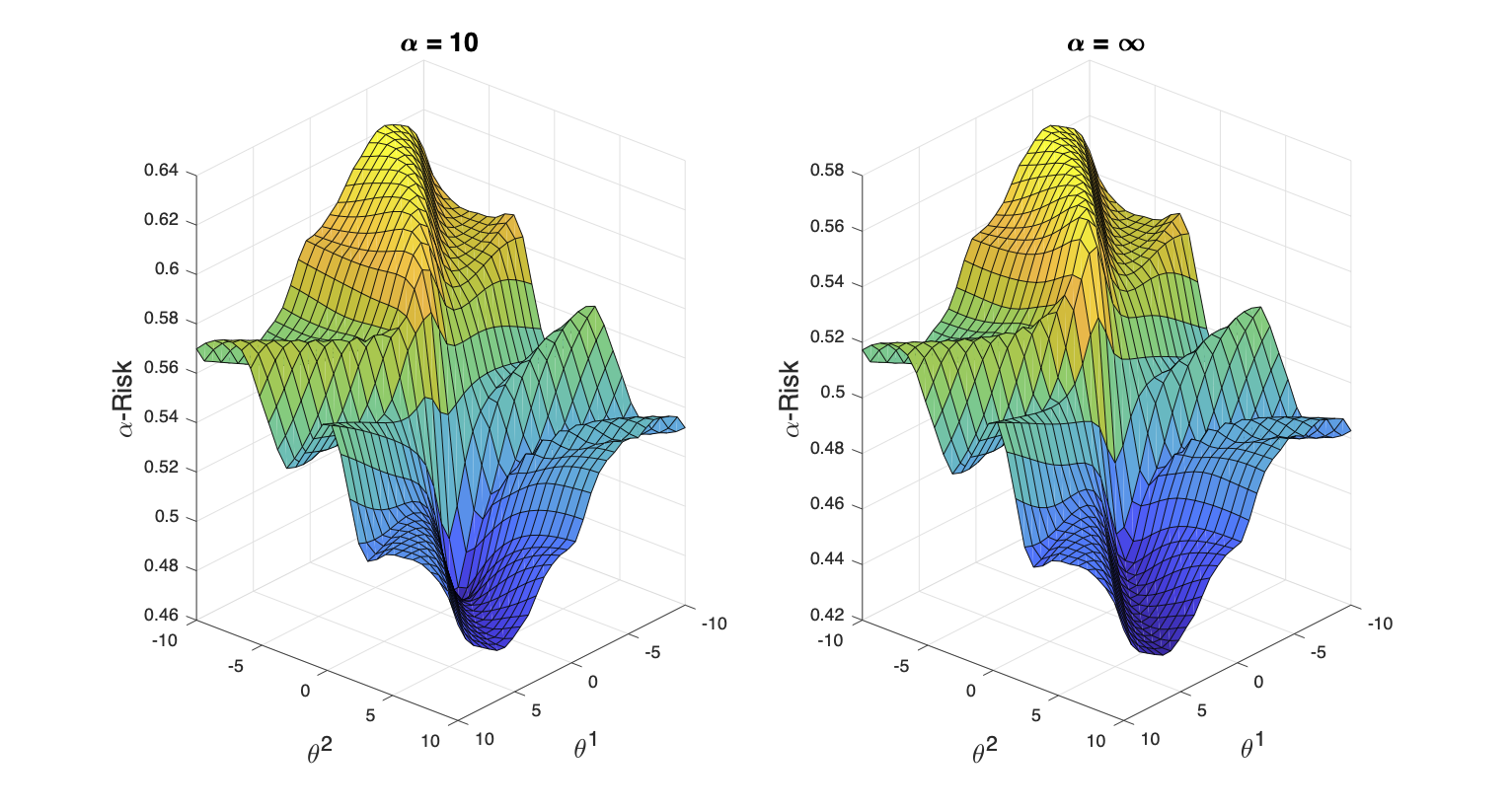}}
    \caption{An illustration of the saturation phenomenon of $\alpha$-loss ($R_{\alpha}$ for $\alpha = 10, \infty$) in the logistic model for a 2D-GMM with $\mathbb{P}[Y=1]=\mathbb{P}[Y=-1]$, $\mu_{X|Y=-1} =(-.91, .50)^{\intercal}$, $\mu_{X|Y=1} = (-.27,.20)^{\intercal}$, $\Sigma = [1.38, .55; .55, 2.18]$. Note the small difference, uniformly over the parameter space, between $R_{10}$ and $R_{\infty}$.}
    \label{fig:saturation}
\end{figure}
Fortunately, in the logistic model, $\alpha$-loss exhibits a saturation effect whereby relatively small values of $\alpha$ resemble the landscape induced by $\alpha = \infty$.
In order to quantify this effect, we state the following two Lipschitz inequalities which will also be instrumental for our main SLQC result.
\begin{lemma} \label{lemma:inversealphalip}
If $\alpha, \alpha' \in [1,\infty]$, then for all $\theta \in \mathbb{B}_{d}(r)$,
\begin{align}
|R_{\alpha}(\theta) - R_{\alpha'}(\theta)| \leq L_{d}(\theta) \left|\frac{\alpha-\alpha'}{\alpha \alpha'}\right| \quad \text{and} \quad \|\nabla R_{\alpha}(\theta) - \nabla R_{\alpha'}(\theta)\| \leq J_{d}(\theta) \left|\frac{\alpha- \alpha'}{\alpha \alpha'}\right|,
\end{align}
where, 
\begin{align} \label{eq:LdJd}
L_{d}(\theta) := \dfrac{\left(\log{\left(1 + e^{\|\theta\|\sqrt{d}}\right)}\right)^{2}}{2} \quad \text{and} \quad J_{d}(\theta) := \sqrt{d} \log{\left(1+e^{\|\theta \| \sqrt{d}}\right)} \sigma(\|\theta\|\sqrt{d}).
\end{align}
\end{lemma}
%
%
This result is proved in Appendix~\ref{appen:lemmas&mainSLQC}, and it can be applied to illustrate a saturation effect of $\alpha$-loss in the logistic model.
That is, let $\alpha = 10$ and $\alpha' = \infty$, then for all $\theta \in \mathbb{B}_{d}(r)$, we have that
\begin{equation} \label{eq:saturationeffect}
\left|R_{10}(\theta) - R_{\infty}(\theta)\right| \leq \frac{L_{d}(\theta)}{10} \quad \text{and} \quad \left|\nabla R_{10}(\theta) - \nabla R_{\infty}(\theta)\right| \leq \frac{J_{d}(\theta)}{10},
\end{equation}
where $L_{d}(\theta)$ and $J_{d}(\theta)$ are both given in~\eqref{eq:LdJd}.
In words, the pointwise distance between the $\alpha = 10$ landscape and the $\alpha =\infty$ landscape decreases geometrically; for a visual representation see Figure \ref{fig:saturation}.



The saturation effect of $\alpha$-loss suggests that it is unnecessary to work with large values of $\alpha$. In particular, this motivates us to study the evolution of the SLQC parameters of the $\alpha$-risk as we increase $\alpha > 1$.
\begin{theorem}\label{thm:SLQCresult1}
Let $\alpha_{0} \in [1,\infty]$, $\epsilon_{0}, \kappa_{0}>0$, and $\theta_{0}, \theta \in \mathbb{B}_{d}(r)$.  
If $R_{{\alpha_{0}}}$ is $(\epsilon_{0},\kappa_{0},\theta_{0})$-SLQC at $\theta$ and
\begin{equation} \label{eq:radchange}
0 \leq \alpha-\alpha_{0} < \dfrac{\alpha_{0}^{2} \|\nabla R_{\alpha_{0}}(\theta)\|}{2J_{d}(\theta)\left(1 + r \frac{\kappa_{0}}{\epsilon_{0}}\right)}, 
\end{equation}
then $R_{{\alpha}}$ is $(\epsilon,\kappa,\theta_{0})$-SLQC at $\theta$ with 
\begin{equation} \label{eq:thmeps&rho}
\centering \epsilon = \epsilon_{0} +2L_{d}(\theta)\left(\frac{\alpha-\alpha_{0}}{\alpha \alpha_{0}}\right) \text{ } \text{and } \text{ } \dfrac{\epsilon}{\kappa} = \frac{\epsilon_{0}}{\kappa_{0}} \left(1  - \frac{\left(1 +2r\frac{\kappa_{0}}{\epsilon_{0}}\right)J_{d}(\theta)(\alpha-\alpha_{0})}{\alpha\alpha_{0}\|\nabla R_{\alpha_{0}}(\theta)\| - J_{d}(\theta)(\alpha-\alpha_{0})}\right).
\end{equation}
\end{theorem}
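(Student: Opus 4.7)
The strategy is to case-split on which of the two SLQC conditions holds for $R_{\alpha_0}$ at $\theta$, and in each case to migrate the inequality to $R_\alpha$ by means of the two Lipschitz-in-$\alpha^{-1}$ bounds stated just before the theorem (in value via $L_d$, in gradient via $J_d$). The formulas in \eqref{eq:thmeps&rho} are engineered so that hypothesis \eqref{eq:radchange} is exactly what keeps $\epsilon/\kappa$ strictly positive, and $\alpha_0^{2}\leq \alpha\alpha_0$ makes \eqref{eq:radchange} a slightly stronger-than-tight sufficient condition.

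Case~1 is handled by a three-term telescoping. Assuming $R_{\alpha_0}(\theta) - R_{\alpha_0}(\theta_0) \leq \epsilon_0$, I would write
\begin{equation*}
R_\alpha(\theta) - R_\alpha(\theta_0) = \bigl[R_\alpha(\theta) - R_{\alpha_0}(\theta)\bigr] + \bigl[R_{\alpha_0}(\theta) - R_{\alpha_0}(\theta_0)\bigr] + \bigl[R_{\alpha_0}(\theta_0) - R_\alpha(\theta_0)\bigr],
\end{equation*}
apply the $L_d$-Lipschitz bound to the first and third brackets, and use monotonicity of $L_d$ in $\|\cdot\|$ (or the uniform bound over $\mathbb{B}_d(r)$) to dominate both contributions by $L_d(\theta)$. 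This recovers Condition~1 of SLQC for $R_\alpha$ with the claimed $\epsilon$.

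Case~2 is the substantive one. Set $A := \|\nabla R_{\alpha_0}(\theta)\|$ and $B := J_d(\theta)\frac{\alpha-\alpha_0}{\alpha\alpha_0}$. First I would note that Condition~2 for $R_{\alpha_0}$ with nonzero gradient forces $\|\theta-\theta_0\| \geq \epsilon_0/\kappa_0$: were $\theta$ in the interior of $\mathbb{B}_d(\theta_0,\epsilon_0/\kappa_0)$, then taking $\theta' = \theta \pm \delta v$ for small $\delta>0$ and arbitrary unit $v$ would force $\nabla R_{\alpha_0}(\theta)=0$. Lemma~\ref{lemma:SLQCreformulation} (at the boundary, by continuity) therefore converts Condition~2 into $\langle -\nabla R_{\alpha_0}(\theta), \theta_0 - \theta\rangle \geq \frac{\epsilon_0}{\kappa_0} A$. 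Combining with $|\langle \nabla R_{\alpha_0}(\theta)-\nabla R_\alpha(\theta), \theta_0 - \theta\rangle|\leq 2rB$ (Cauchy--Schwarz, the $J_d$ bound, and $\|\theta_0-\theta\|\leq 2r$) yields
\begin{equation*}
\langle -\nabla R_\alpha(\theta), \theta_0 - \theta\rangle \;\geq\; \tfrac{\epsilon_0}{\kappa_0} A - 2rB.
\end{equation*}
The triangle inequality also gives $N := \|\nabla R_\alpha(\theta)\| \geq A - B > 0$ under \eqref{eq:radchange}, so the gradient nondegeneracy of Condition~2 is met. To conclude, I would apply Lemma~\ref{lemma:SLQCreformulation} in the reverse direction at $R_\alpha$ with $\rho = \epsilon/\kappa$; its hypothesis $\|\theta - \theta_0\| > \epsilon/\kappa$ is automatic since $\epsilon/\kappa \leq \epsilon_0/\kappa_0 \leq \|\theta-\theta_0\|$.

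The main obstacle, and the reason the denominator in \eqref{eq:thmeps&rho} is $A - B$ (and not the $A + B$ one obtains from the naive upper bound $N \leq A + B$), is the following algebraic maneuver that must be executed before dividing. Substituting the lower bound $A \geq N - B$ into the right-hand side of the displayed inequality above gives $\langle -\nabla R_\alpha(\theta), \theta_0 - \theta\rangle \geq \frac{\epsilon_0}{\kappa_0} N - B\bigl(\frac{\epsilon_0}{\kappa_0} + 2r\bigr)$, and only at this stage does one divide by $N$ and invoke $N \geq A - B$ on the residual term. This produces precisely the factor $1 - B(1 + 2r\kappa_0/\epsilon_0)/(A-B)$ in \eqref{eq:thmeps&rho}; hypothesis \eqref{eq:radchange} is exactly the inequality that forces this factor to remain positive.
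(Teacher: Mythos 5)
Your proposal matches the paper's proof essentially step for step: the same two-case split, the same three-term telescoping with the $L_d$ bound in Case 1, and in Case 2 the same chain --- Lemma~\ref{lemma:SLQCreformulation} to convert Condition 2 into $\langle -\nabla R_{\alpha_0}(\theta),\theta_0-\theta\rangle \ge \rho_0 \|\nabla R_{\alpha_0}(\theta)\|$, Cauchy--Schwarz with the $J_d$ Lipschitz bound and $\|\theta_0-\theta\|\le 2r$, the reverse triangle inequality for $\|\nabla R_{\alpha}(\theta)\| \ge A-B>0$, and crucially the substitution $A \ge N-B$ \emph{before} dividing by $N$, which is exactly how the paper arrives at the $A-B$ denominator in \eqref{eq:thmeps&rho}. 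The only cosmetic difference is that you re-derive inline (via the perturbation $\theta'=\theta\pm\delta v$) the fact that $\theta$ lies outside $\mathbb{B}_{d}(\theta_0,\epsilon_0/\kappa_0)$, which the paper obtains by citing Proposition~\ref{prop:epsregioncontainsball}.
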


The proof of Theorem~\ref{thm:SLQCresult1} can be found in Appendix~\ref{appen:lemmas&mainSLQC}. The crux of the proof is a consideration of two cases, dependent on the location of $\theta \in \mathbb{B}_{d}(r)$ relative to the $\epsilon_{0}$-plane. 
The first case considers $\theta \in \mathbb{B}_{d}(r)$ such that $R_{\alpha_{0}}(\theta) - R_{\alpha_{0}}(\theta_{0}) \leq \epsilon_{0}$ and provides the required increase for $\epsilon$ to capture such points as $\alpha$ increases. 
The second case considers $\theta \in \mathbb{B}_{d}(r)$ such that $R_{\alpha_{0}}(\theta) - R_{\alpha_{0}}(\theta_{0}) > \epsilon_{0}$ and provides the required decrease for $\epsilon/\kappa$ to capture such points as $\alpha$ increases.
The second case is far more geometric than the first one, as it makes use of finer gradient information. As a result, the decrease in $\epsilon/\kappa$ is more closely related to the landscape evolution of $R_{\alpha}$ than the corresponding increase in $\epsilon$.
From a numerical point of view, Proposition~\ref{prop:NGDiterations} implies that reducing the radius of the $\epsilon/\kappa$ ball about $\theta_{0}$ increases the required number of iterations (for optimality), and thus reflects the intuition that increasing $\alpha > 1$ more closely approximates the intractable $0$-$1$ loss. 
While on the contrary, Proposition~\ref{prop:NGDiterations} implies that increasing the value of $\epsilon$ reduces the optimality guarantee itself.
%


We note that the bounds provided in Theorem~\ref{thm:SLQCresult1} are pessimistic, but fortunately, we can improve them by employing a bootstrapping technique - we take infinitesimal steps in $\alpha$ and repeatedly apply the bounds in Theorem~\ref{thm:SLQCresult1} to derive improved bounds on $\alpha$, $\epsilon$, and $\kappa$.
The following result is the culmination of our analysis regarding the SLQC parameters of the $\alpha$-risk in the logistic model. 
The proof can be found in Appendix~\ref{appen:bootstrappingslqc}. 
\begin{theorem}\label{thm:SLQCbootstrapFTW}
Let $\alpha_{0} \in [1,\infty)$, $\epsilon_{0}, \kappa_{0}>0$, and $\theta_{0}, \theta \in \mathbb{B}_{d}(r)$.  
Suppose that $R_{{\alpha_{0}}}$ is $(\epsilon_{0},\kappa_{0},\theta_{0})$-SLQC at $\theta \in \mathbb{B}_{d}(r)$
and there exists $g_{\theta} > 0$ such that $\|\nabla R_{\alpha'}(\theta)\| > g_{\theta}$ for every $\alpha' \in [\alpha_{0},\infty]$.
Then for every $\lambda \in (0,1)$, $R_{\alpha_{\lambda}}$ is $(\epsilon_{\lambda},\kappa_{\lambda},\theta_{0})$-SLQC at $\theta$ where
\begin{align} \label{eq:bootstrappedalpha}
\alpha_{\lambda} &\coloneqq \alpha_{0} + \lambda \frac{\alpha_{0}^2g_{\theta}}{J_{d}(\theta)\left(1+2r\frac{\kappa_{0}}{\epsilon_{0}}\right)}, 
\end{align}
\begin{align} \label{eq:bootstrappedepskappa}
\centering \epsilon_{\lambda} \coloneqq \epsilon_{0} + 2 \lambda L_{d}(\theta) \left(\frac{\alpha_{\lambda} - \alpha_{0}}{\alpha_{\lambda}\alpha_{0}} \right) \frac{\alpha_{0}^{2}g_{\theta}}{J_{d}(\theta)\left(1+r \frac{\kappa_{0}}{\epsilon_{0}}\right)}, \quad \text{and} \quad \frac{\epsilon_{\lambda}}{\kappa_{\lambda}} > \frac{\epsilon_{0}}{\kappa_{0}}(1-\lambda).
\end{align}
\end{theorem}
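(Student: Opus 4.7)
The plan is to apply Theorem~\ref{thm:SLQCresult1} iteratively on a fine discretization of the interval $[\alpha_0, \alpha_\lambda]$ and then pass to the limit of infinitely many steps. The uniform gradient lower bound $\|\nabla R_{\alpha'}(\theta)\| > g_\theta$ is exactly the hypothesis that keeps the admissibility condition~\eqref{eq:radchange} valid throughout the iteration, regardless of how far $\alpha$ has drifted from $\alpha_0$.

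Fix $n \in \mathbb{N}$ and recursively build sequences $\alpha^{(k)}$, $\epsilon^{(k)}$, $\kappa^{(k)}$ for $k = 0, \ldots, n$, starting from $\alpha^{(0)} = \alpha_0$, $\epsilon^{(0)} = \epsilon_0$, $\kappa^{(0)} = \kappa_0$. At step $k$ I choose the increment $\Delta\alpha_k := \alpha^{(k+1)} - \alpha^{(k)}$ so that the multiplicative shrinkage factor in the $\epsilon/\kappa$ update of~\eqref{eq:thmeps&rho}, applied with $(\alpha_0, \epsilon_0, \kappa_0) \leftarrow (\alpha^{(k)}, \epsilon^{(k)}, \kappa^{(k)})$, equals exactly $1 - \lambda/n$. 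Solving this single algebraic identity for $\Delta\alpha_k$, using $\|\nabla R_{\alpha^{(k)}}(\theta)\| \geq g_\theta$, yields a closed-form increment which, for $n$ large enough, is strictly inside the admissible range~\eqref{eq:radchange}; Theorem~\ref{thm:SLQCresult1} then certifies that $R_{\alpha^{(k+1)}}$ is $(\epsilon^{(k+1)}, \kappa^{(k+1)}, \theta_0)$-SLQC at $\theta$.

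Telescoping the $\epsilon/\kappa$ update over $k = 0, \ldots, n-1$ yields $\epsilon^{(n)}/\kappa^{(n)} = (\epsilon_0/\kappa_0)(1 - \lambda/n)^n$, which tends to $(\epsilon_0/\kappa_0)e^{-\lambda}$ as $n \to \infty$; the elementary inequality $e^{-\lambda} > 1-\lambda$ for $\lambda \in (0,1)$ then gives the bound in~\eqref{eq:bootstrappedepskappa}. Telescoping the additive update for $\epsilon^{(k)}$ exploits the identity $\Delta\alpha_k/(\alpha^{(k+1)}\alpha^{(k)}) = 1/\alpha^{(k)} - 1/\alpha^{(k+1)}$ to collapse the partial sums to $(\alpha^{(n)} - \alpha_0)/(\alpha^{(n)}\alpha_0)$, producing the claimed closed form for $\epsilon_\lambda$. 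Summing $\Delta\alpha_k$ over $k$ produces a Riemann-type sum whose limit—using $\|\nabla R_{\alpha^{(k)}}(\theta)\| \geq g_\theta$, together with $\kappa^{(k)}/\epsilon^{(k)} \to \kappa_0/\epsilon_0$ and $\alpha^{(k)} \to \alpha_0$ at each fixed $k$ as $n \to \infty$—converges to the expression in~\eqref{eq:bootstrappedalpha}. Continuity of $\alpha \mapsto \nabla R_\alpha(\theta)$ in $1/\alpha$ (the Lipschitz estimate underlying~\eqref{eq:saturationeffect}) then transfers the discrete SLQC certificates to the limit $R_{\alpha_\lambda}$.

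The main obstacle I anticipate is verifying that the admissibility condition~\eqref{eq:radchange} holds uniformly over $k$ and $n$. The issue is that $\kappa^{(k)}/\epsilon^{(k)}$ grows monotonically with $k$, which in principle shrinks the permitted step. The telescoping lower bound $\epsilon^{(k)}/\kappa^{(k)} \geq (\epsilon_0/\kappa_0)(1-\lambda/n)^k \geq (\epsilon_0/\kappa_0)e^{-\lambda}$ provides uniform-in-$(k,n)$ control, and the extra factor of $2$ in the denominator of~\eqref{eq:bootstrappedalpha} relative to the single-step bound~\eqref{eq:radchange} is precisely the algebraic slack that absorbs this worst-case growth across the whole iteration. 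The remaining bookkeeping—rigorously showing that the strict inequality in~\eqref{eq:radchange} holds at every step, that the Riemann-sum limit reproduces exactly the formulas in~\eqref{eq:bootstrappedalpha}--\eqref{eq:bootstrappedepskappa}, and that the SLQC certificate passes to the continuous limit $\alpha_\lambda$—is the technical heart of the argument.
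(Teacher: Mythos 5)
Your overall strategy is the same as the paper's: iterate the single-step Theorem~\ref{thm:SLQCresult1} along a fine discretization of $[\alpha_0,\alpha_\lambda]$, use the uniform bound $\|\nabla R_{\alpha'}(\theta)\|>g_\theta$ to keep the admissibility condition~\eqref{eq:radchange} in force, telescope the $\epsilon$ and $\epsilon/\kappa$ updates, and pass to the limit of infinitely many steps. The paper (via Lemma~\ref{lemma:bootstrapping}) uses a \emph{uniform} grid $\alpha_n=\alpha_{n-1}+1/N$ run for $N_\lambda=\lfloor\lambda\tfrac{\rho_0}{\rho_0+2r}\tfrac{\alpha_0^2 g_\theta}{J_d(\theta)}N\rfloor$ steps, so the total $\alpha$-progress equals $\alpha_\lambda-\alpha_0$ by construction, and the work goes into \emph{upper}-bounding the accumulated shrinkage of $\rho_n=\epsilon_n/\kappa_n$. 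You instead fix the per-step multiplicative shrinkage at $1-\lambda/n$ and let the $\alpha$-increments float. This inversion is where your argument breaks.

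The concrete gap is that your adaptive scheme is not guaranteed to reach $\alpha_\lambda$. Solving your defining identity gives, to leading order, $\Delta\alpha_k\approx\tfrac{\lambda}{n}\cdot\tfrac{\rho_k}{\rho_k+2r}\cdot\tfrac{(\alpha^{(k)})^2 G_k}{J_d(\theta)}$ with $\rho_k=(\epsilon_0/\kappa_0)(1-\lambda/n)^k$ and $G_k=\|\nabla R_{\alpha^{(k)}}(\theta)\|$. The factor $\rho_k/(\rho_k+2r)$ \emph{decreases} as $\rho_k$ decreases, so the steps contract along the iteration; the pointwise limits $\rho_k\to\rho_0$ and $\alpha^{(k)}\to\alpha_0$ ``at each fixed $k$'' are useless because the sum runs out to $k=n$, where $\rho_k$ is only bounded below by roughly $\rho_0 e^{-\lambda}$. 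The best uniform lower bound this yields is
\begin{equation*}
\sum_{k=0}^{n-1}\Delta\alpha_k \;\gtrsim\; \lambda\,\frac{\alpha_0^2 g_\theta}{J_d(\theta)\bigl(1+2re^{\lambda}\kappa_0/\epsilon_0\bigr)} \;<\; \alpha_\lambda-\alpha_0,
\end{equation*}
a deficit that does not vanish as $n\to\infty$. Relatedly, your remark that the factor of $2$ in~\eqref{eq:bootstrappedalpha} is ``slack that absorbs worst-case growth'' is backwards: $J_d(\theta)(1+2r\kappa_0/\epsilon_0)<2J_d(\theta)(1+r\kappa_0/\epsilon_0)$, so the bootstrapped range is \emph{larger} per unit $\lambda$ than the single-step range~\eqref{eq:radchange}; there is no slack to spend. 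The paper closes exactly this hole by exploiting that the \emph{absolute} per-step decrement $\tfrac{(\rho_k+2r)J_d(\theta)/N}{\alpha_k\alpha_{k+1}G_k-J_d(\theta)/N}$ is uniformly at most $\tfrac{(\rho_0+2r)J_d(\theta)/N}{\alpha_0^2 g_\theta-J_d(\theta)/N}$ (since $\rho_k<\rho_0$, $\alpha_k\ge\alpha_0$, $G_k>g_\theta$), so $N_\lambda$ uniform steps cost at most $\lambda\rho_0$ of radius in the limit while delivering the full range $\alpha_\lambda$. A secondary issue: your exact telescoping of the additive update yields $\epsilon^{(n)}=\epsilon_0+2L_d(\theta)\tfrac{\alpha^{(n)}-\alpha_0}{\alpha^{(n)}\alpha_0}$, which is not the displayed $\epsilon_\lambda$ in~\eqref{eq:bootstrappedepskappa} (that expression carries the additional factor $\lambda\alpha_0^2 g_\theta/\bigl(J_d(\theta)(1+r\kappa_0/\epsilon_0)\bigr)$, which the paper obtains from its limiting measure computation), so the claim that telescoping ``produces the claimed closed form'' needs justification. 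To repair the argument, adopt the paper's allocation: fix the grid in $\alpha$ and bound the shrinkage, not the other way around.
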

We now provide three different interpretations and comments regarding the previous result.
First regarding the SLQC parameters themselves, observe from~\eqref{eq:bootstrappedalpha} that the bound on $\alpha$ is improved over Theorem~\ref{thm:SLQCresult1} as the factor of $2$ in the denominator in~\eqref{eq:radchange} is moved into the parentheses; next, it can be observed (upon plugging in $\alpha_{\lambda}$) that $\epsilon_{\lambda}$ in~\eqref{eq:bootstrappedepskappa} is linear in $\lambda$, which is again an improvement over the first equation in~\eqref{eq:thmeps&rho}; finally, note that the bound on $\epsilon_{\lambda}/\kappa_{\lambda}$ in~\eqref{eq:bootstrappedepskappa} is vastly more tractable and informative than the second expression in~\eqref{eq:thmeps&rho}. 
Thus, bootstrapping the bounds of Theorem~\ref{thm:SLQCresult1} provides strong improvements for all three relevant quantities, $\alpha$, $\epsilon$, and $\kappa$.
Next, regarding the extra assumption for Theorem~\ref{thm:SLQCbootstrapFTW} over Theorem~\ref{thm:SLQCresult1}, i.e., the existence of a lowerbound $g_{\theta}$ on the norm of the gradient $\|\nabla R_{\alpha'}(\theta)\|$ for all $\alpha' \geq \alpha_{0}$, observe that this is equivalent to the requirement that the landscape at $\theta$ does not become "flat" for any $\alpha' \geq \alpha_{0}$.
In essence, this is a distributional assumption in disguise, and it should be addressed in a case-by-case basis.
Finally, regarding the effect of the dimensionality of the feature space, $d$,  on the bounds, we observe that for $\theta \in \mathbb{B}_{d}(r)$ and $d \in \mathbb{N}$ large enough, $J_{d}(\theta) \approx d \|\theta\|$ as given in~\eqref{eq:LdJd}.
Thus in the high-dimensional regime, the bound on $\alpha$, i.e., $\alpha_{\lambda}$, is dominated by $1/d$. 
This implies that the convexity of the landscape worsens as the dimensionality of the feature/parameter vectors $d$ increases.

While a practitioner would ultimately like to approximate the 0-1 loss (captured by $\alpha = \infty$), the bounds presented in Theorem~\ref{thm:SLQCbootstrapFTW} suggest that the optimization complexity of NGD increases as $\alpha$ increases.
Fortunately, $\alpha$-loss exhibits a saturation effect as exemplified in~\eqref{eq:saturationeffect} and Figure~\ref{fig:saturation} whereby smaller values of $\alpha$ quickly resemble the landscape induced by $\alpha = \infty$.
Thus, while the optimization complexity increases as $\alpha$ increases (and increases even more rapidly in the high-dimensional regime), the saturation effect suggests that the practitioner need not increase $\alpha$ too much in order to reap the benefits of the $\infty$-risk.
Therefore, for the logistic model, we ultimately posit that there is a narrow range of $\alpha$ useful to the practitioner and we dub this the "Goldilocks zone"; we explore this theme in the experiments in Section~\ref{sec:Experiments}.

Before this however, we conclude the theoretical analysis of $\alpha$-loss with a study of the empirical $\alpha$-risk, and we provide generalization and optimality guarantees for all $\alpha \in (0,\infty]$.


\section{Generalization and Asymptotic Optimality} \label{sec:generalizationoptimality}
In this section, we provide generalization and asymptotic optimality guarantees for $\alpha$-loss for $\alpha \in (0,\infty]$ in the logistic model by utilizing classical Rademacher complexity tools and the notion of classification-calibration introduced by Bartlett \textit{et al.} in~\cite{bartlett2006convexity}. 
We invoke the same setting and definitions provided in Section~\ref{sec:landscapelogisticmodel}.
In addition, we consider the evaluation of $\alpha$-loss in the finite sample regime. Formally, let $X \in [0,1]^{d}$ be the normalized feature and $Y \in \{-1,+1\}$ the label as before, \textit{and} let $S_{n} = \{(X_{i},Y_{i}) : i = 1,\ldots,n\}$ be the training dataset where, for each $i\in\{1,\ldots,n\}$, the samples $(X_{i},Y_{i})$ are independently and identically drawn according to an unknown distribution $P_{X,Y}$. 
Finally, we let $\hat{R}_{\alpha}$ denote the empirical $\alpha$-risk of \eqref{eq:alphadefLR}, i.e., for each $\theta \in \mathbb{B}_{d}(r)$ we have
\begin{align} \label{eq:alphariskempiricalLR}
\hat{R}_{\alpha}(\theta) = \frac{1}{n} \sum\limits_{i=1}^{n} l^{\alpha}(Y_{i},g_{\theta}(X_{i})).
\end{align}
In the following sections, we consider the generalization capabilities and asymptotic optimality of a predictor $\theta \in \mathbb{B}_{d}(r)$ learned through empirical evaluation of $\alpha$-loss~\eqref{eq:alphariskempiricalLR}.
First, we recall classical results in Rademacher complexity generalization bounds.

\subsection{Rademacher Complexity Preliminaries} \label{section:RadGeneral}
In this section, we provide the main tools we use to derive generalization bounds for $\alpha$-loss in the sequel. The techniques are standard; see Chapter 26 in \cite{shalev2014understanding} for a complete discussion.
First, we recall that the Rademacher distribution is the uniform distribution on the set $\{-1,+1\}$. The Rademacher complexity of a set is as follows.
\begin{definition}
The Rademacher complexity of a nonempty set $A \subset \mathbb{R}^{n}$ is defined as
\begin{equation}
\mathcal{R}(A) := \mathbb{E}\left(\sup\limits_{a \in A} \dfrac{1}{n} \langle \sigma, a \rangle\right), 
\end{equation}
where $\sigma = (\sigma_{1},\sigma_{2},\ldots,\sigma_{n})$ with $\sigma_{1},\sigma_{2},\ldots,\sigma_{n}$ i.i.d. Rademacher random variables. 
\end{definition}
In words, the Rademacher complexity of a set approximately measures the richness of the set through the maximal correlation of its elements with uniformly distributed Rademacher vectors.
The notion of Rademacher complexity can be used to measure the richness of a hypothesis class as established in the following proposition.
\begin{prop}[{\cite[Theorem~26.5]{shalev2014understanding}}] \label{prop:radegeneralization}
Let $\mathcal{H}$ be a hypothesis class.
Assume that $l: \mathcal{X} \times \mathcal{Y} \times \mathcal{H} \rightarrow \mathbb{R}_{+}$ is a bounded loss function, i.e., there exists $D > 0$ such that
for all $h \in \mathcal{H}$ and $(x,y) \in (\mathcal{X},\mathcal{Y})$ we have that $|l(h,(x,y))| \leq D$.
Then, with probability at least $1-\delta$, for all $h \in \mathcal{H}$,
\begin{equation} \label{eq:symmetrization}
\left|R_{l}(h) - \hat{R}_{l}(h)\right| \leq 2 \mathcal{R}(l \circ \mathcal{H} \circ S_{n}) + 4D\sqrt{\dfrac{2\ln{(4/\delta)}}{n}},
\end{equation}
where $R_{l}(h)$ and $\hat{R}_{l}(h)$ denote the true risk and empirical risk of $l$, respectively, and $l \circ \mathcal{H} \circ S_{n}:= \{(l(h,(x_{1},y_{1})),\ldots,l(h,(x_{n},y_{n}))):h\in \mathcal{H}\} \subset \mathbb{R}^{n}$.\footnote{In~\eqref{eq:symmetrization} we present the two-sided version of Theorem 26.5 in~\cite{shalev2014understanding}, which can be readily obtained via the symmetrization technique.}
\end{prop}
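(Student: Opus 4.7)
The plan is to follow the classical three-step Rademacher complexity argument, as in Chapter~26 of~\cite{shalev2014understanding}. The two-sided statement decomposes into the two one-sided gaps $\sup_{h\in\mathcal{H}}(R_{l}(h) - \hat{R}_{l}(h))$ and $\sup_{h\in\mathcal{H}}(\hat{R}_{l}(h) - R_{l}(h))$; I would bound each with probability at least $1-\delta/2$ and conclude by a union bound. By symmetry, it is enough to treat the first gap, so I focus on that case.

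The first step is a symmetrization argument. I would introduce an independent ghost sample $S_{n}' = \{(X_{i}',Y_{i}')\}_{i=1}^{n}$ drawn from $P_{X,Y}^{\otimes n}$, write the true risk as $R_{l}(h) = \mathbb{E}_{S_{n}'}[\tfrac{1}{n}\sum_{i}l(h,(X_{i}',Y_{i}'))]$, and pull $\mathbb{E}_{S_{n}'}$ outside the supremum via Jensen's inequality to obtain
$$\mathbb{E}_{S_{n}}\sup_{h\in\mathcal{H}}\bigl(R_{l}(h)-\hat{R}_{l}(h)\bigr) \leq \mathbb{E}_{S_{n},S_{n}'}\sup_{h\in\mathcal{H}}\frac{1}{n}\sum_{i=1}^{n}\bigl(l(h,(X_{i}',Y_{i}'))-l(h,(X_{i},Y_{i}))\bigr).$$
Because each summand is symmetric about zero in $(X_{i},X_{i}')$, multiplying by i.i.d.\ Rademacher signs $\sigma_{i}\in\{-1,+1\}$ leaves the joint distribution invariant; splitting the supremum between the two ghost terms then yields the upper bound $2\,\mathbb{E}_{S_{n}}[\mathcal{R}(l\circ\mathcal{H}\circ S_{n})]$.

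The second step is McDiarmid's bounded differences inequality applied to $\Phi(S_{n}):=\sup_{h\in\mathcal{H}}(R_{l}(h)-\hat{R}_{l}(h))$. Since $|l|\leq D$, replacing one coordinate of $S_{n}$ alters $\hat{R}_{l}(h)$ uniformly in $h$ by at most $2D/n$, hence $\Phi$ by at most $2D/n$, so McDiarmid gives, with probability at least $1-\delta/4$,
$$\Phi(S_{n}) \leq \mathbb{E}[\Phi(S_{n})] + 2D\sqrt{\frac{2\ln(4/\delta)}{n}}.$$
The third step applies McDiarmid a second time to the random quantity $\mathcal{R}(l\circ\mathcal{H}\circ S_{n})$, which has the same $2D/n$ bounded-differences constant by the identical argument inside the Rademacher expectation; with probability at least $1-\delta/4$, $\mathbb{E}_{S_{n}}[\mathcal{R}(l\circ\mathcal{H}\circ S_{n})]\leq \mathcal{R}(l\circ\mathcal{H}\circ S_{n}) + 2D\sqrt{2\ln(4/\delta)/n}$. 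Chaining these three inequalities via a union bound, then doubling $\delta$ to handle the reverse one-sided gap, yields the claim.

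The main obstacle is purely bookkeeping of constants: distributing $\delta$ across the four concentration events (two sides, two McDiarmid applications per side) so that the final additive term lines up with $4D\sqrt{2\ln(4/\delta)/n}$ and the factor $2$ in front of $\mathcal{R}(l\circ\mathcal{H}\circ S_{n})$ is preserved. Conceptually the argument uses only symmetrization and bounded differences, since the proposition is essentially a direct restatement of the standard Rademacher generalization bound.
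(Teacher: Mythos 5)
Your argument is correct and is essentially the proof the paper implicitly relies on: the paper states this proposition as an imported result (Theorem~26.5 of the cited reference, with a footnote that the two-sided version follows by symmetrization), and your ghost-sample symmetrization plus two applications of McDiarmid's inequality per side, combined by a union bound over four events each holding with probability $1-\delta/4$, is exactly the standard derivation. Your deviation terms are slightly looser than what McDiarmid actually yields (a factor of $2$ to spare in each), but the bookkeeping still lands on the stated $4D\sqrt{2\ln(4/\delta)/n}$, so the bound is valid.
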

For linear predictors, obtaining a bound on $\mathcal{R}(l \circ \mathcal{H} \circ S_{n})$ is feasible; 
we now provide two results (in conjunction with Proposition~\ref{prop:radegeneralization}) necessary to derive a generalization bound for $\alpha$-loss in the logistic model.
\begin{lemma}[{Lemma~26.9},\cite{shalev2014understanding}] \label{lemma:radecontraction}
Let $\tilde{l}_{1}, \ldots, \tilde{l}_{n}:\mathbb{R} \rightarrow \mathbb{R}$ be $r_{0}$-Lipschitz functions with common constant $r_{0} \geq 0$. If $\tilde{l} = (\tilde{l}_{1}, \ldots, \tilde{l}_{n})$ and $A \subset \mathbb{R}^{n}$, then $\mathcal{R}(\tilde{l}(A)) \leq r_{0} \mathcal{R}(A)$, where $\tilde{l}(A) := \{(\tilde{l}_{1}(a_{1}),\ldots,\tilde{l}_{n}(a_{n})):a \in A\}$.
\end{lemma}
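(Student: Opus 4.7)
The plan is to establish the inequality via the classical Ledoux--Talagrand contraction argument, which proceeds by peeling off one coordinate at a time. Because replacing several Lipschitz functions simultaneously would be unwieldy, I would first argue that it suffices to prove a one-coordinate version: namely, if $B \subset \mathbb{R}^n$ is any set and we replace the $k$-th coordinate map by an $r_0$-Lipschitz function $\tilde{l}_k$ (leaving the others as the identity), then the Rademacher complexity cannot grow by more than a factor of $r_0$. Iterating this over $k = 1,\ldots,n$, starting from $B = A$ and ending with $B = \tilde{l}(A)$, then yields the claim. Without loss of generality I would carry out the induction step at the last coordinate, which simplifies notation.

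For the one-coordinate step, the key move is to condition on $\sigma_1,\ldots,\sigma_{n-1}$ and analyze only the randomness of $\sigma_n$. Writing the contribution of coordinates $1$ through $n-1$ as a function $\psi(a) := \tfrac{1}{n}\sum_{i<n}\sigma_i a_i$, the conditional expectation becomes
\begin{equation*}
\tfrac{1}{2}\sup_{a\in A}\bigl(\psi(a) + \tfrac{1}{n}\tilde{l}_n(a_n)\bigr) + \tfrac{1}{2}\sup_{a'\in A}\bigl(\psi(a') - \tfrac{1}{n}\tilde{l}_n(a'_n)\bigr),
\end{equation*}
which, by combining the two suprema, equals $\tfrac{1}{2n}\sup_{a,a'\in A}\bigl(n\psi(a) + n\psi(a') + \tilde{l}_n(a_n) - \tilde{l}_n(a'_n)\bigr)$. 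The idea is then to upper bound $\tilde{l}_n(a_n) - \tilde{l}_n(a'_n)$ by $r_0|a_n - a'_n|$ using the Lipschitz hypothesis, and rewrite the absolute value by choosing the sign that maximizes the expression (one may assume $a_n \geq a'_n$ by symmetry of the $(a,a')$ role, then absorb the sign into the $\sigma_n$ variable). After this manipulation, the bound reassembles into the same form but with $\tilde{l}_n$ replaced by the linear map $t \mapsto r_0 t$, proving the one-coordinate contraction.

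The main obstacle is the sign-handling step when turning $|a_n - a'_n|$ back into a symmetric Rademacher expectation, because $\tilde{l}_n$ is not assumed to be monotone or even continuous beyond Lipschitz; one has to verify that dropping the absolute value is valid inside the double supremum, and this hinges on the fact that swapping the labels of $a$ and $a'$ leaves the other terms invariant while flipping the sign of $\tilde{l}_n(a_n) - \tilde{l}_n(a'_n)$. Once this symmetry is in hand, the rest of the argument is bookkeeping: taking expectation over $\sigma_1,\ldots,\sigma_{n-1}$ restores the full Rademacher complexity on both sides, and the induction over coordinates delivers the factor $r_0$ uniformly. I would not expect the Lipschitz constants $r_{0}$ to need to be the same at every coordinate for the inductive step, so the proof in fact yields a slightly more general statement, but the uniform $r_0$ version stated in the lemma follows immediately.
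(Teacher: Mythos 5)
Your proposal is correct: it is the standard Ledoux--Talagrand contraction argument, conditioning on $\sigma_1,\ldots,\sigma_{n-1}$, merging the two suprema over $\sigma_n=\pm 1$, and using the $a\leftrightarrow a'$ swap symmetry to discharge the absolute value --- and you correctly flag that swap as the one delicate step. The paper does not prove this lemma at all; it imports it verbatim as Lemma~26.9 of Shalev-Shwartz and Ben-David, whose proof is exactly the coordinate-peeling argument you describe, so there is nothing in the paper to diverge from.
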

The previous result, known as the Contraction Lemma, provides an upperbound on the Rademacher complexity of the composition of a function acting on a set. For our purposes, one can think of $\tilde{l} = (\tilde{l}_{1}, \ldots, \tilde{l}_{n})$ as a margin-based loss function acting on a training set with $n$ samples - this will be further elaborated in the sequel.
The following result provides an upperbound on the Rademacher complexity of the set comprised of inner products between a given parameter vector drawn from a bounded space and the $n$-sample training set.
\begin{lemma}[{Lemma~26.10},\cite{shalev2014understanding}] \label{lemma:raderadiusbound}
Let $x_{1:n} = \left\{x_{1}, \ldots, x_{n}\right\}$ be vectors in $\mathbb{R}^{d}$. Define $\mathcal{H} \circ x_{1:n} = \{(\langle \theta,x_{1}\rangle, \ldots, \langle \theta, x_{n} \rangle):\|\theta\|_{2} \leq r\}$. Then, 
\begin{equation}
\mathcal{R}(\mathcal{H} \circ x_{1:n}) \leq \dfrac{r \max_{i\in[n]} \|x_{i}\|_{2}}{\sqrt{n}}.
\end{equation}
\end{lemma}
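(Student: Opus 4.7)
The plan is to unwind the Rademacher complexity definition and then apply Cauchy--Schwarz followed by Jensen's inequality. Starting from
$$\mathcal{R}(\mathcal{H} \circ x_{1:n}) = \mathbb{E}_{\sigma}\left[\sup_{\|\theta\|_2\leq r} \frac{1}{n}\sum_{i=1}^n \sigma_i \langle \theta, x_i\rangle\right],$$
linearity of the inner product lets me pull $\theta$ out of the sum, rewriting the expression inside the supremum as $\tfrac{1}{n}\langle \theta, \sum_i \sigma_i x_i\rangle$. Cauchy--Schwarz then bounds this by $\tfrac{r}{n}\bigl\|\sum_i \sigma_i x_i\bigr\|_2$, and since this bound is attained at $\theta^\ast = r \sum_i \sigma_i x_i / \|\sum_i \sigma_i x_i\|_2$ (whenever the sum is nonzero), the supremum equals $\tfrac{r}{n}\bigl\|\sum_i \sigma_i x_i\bigr\|_2$ exactly.

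The next step is to apply Jensen's inequality, moving the expectation inside the square root to get
$$\mathbb{E}_\sigma \Bigl\|\sum_i \sigma_i x_i\Bigr\|_2 \leq \sqrt{\mathbb{E}_\sigma \Bigl\|\sum_i \sigma_i x_i\Bigr\|_2^2}.$$
Expanding the squared norm as a double sum produces $\sum_{i,j} \mathbb{E}[\sigma_i \sigma_j]\langle x_i, x_j\rangle$, and independence of the $\sigma_i$ together with $\sigma_i^2 = 1$ forces $\mathbb{E}[\sigma_i \sigma_j] = \delta_{ij}$, so the double sum collapses to $\sum_i \|x_i\|_2^2$. Using the crude bound $\sum_i \|x_i\|_2^2 \leq n \max_i \|x_i\|_2^2$ and chaining the inequalities yields the target $\tfrac{r \max_i \|x_i\|_2}{\sqrt{n}}$.

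This is essentially a textbook argument and there is no serious obstacle; the main thing to get right is the order of inequalities (Cauchy--Schwarz first to eliminate the supremum over the ball, then Jensen to trade the expectation of a norm for the square root of a variance). An alternative that avoids Jensen is to apply the triangle inequality pointwise to $\|\sum_i \sigma_i x_i\|_2$ and then swap sum and expectation, but this yields the strictly worse bound $\tfrac{r}{n}\sum_i \|x_i\|_2 \leq r \max_i \|x_i\|_2$, missing the crucial $1/\sqrt{n}$ improvement that makes the Jensen step worthwhile.
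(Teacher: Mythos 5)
Your proof is correct and is exactly the standard argument for this result (Cauchy--Schwarz to resolve the supremum over the ball, Jensen to pass to the second moment, then independence of the Rademacher variables to collapse the double sum); the paper itself does not reprove this lemma but simply cites it as Lemma~26.10 of \cite{shalev2014understanding}, whose proof is the one you give. No gaps.
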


With the above Rademacher complexity preliminaries in hand, we now apply these results to derive a generalization bound for $\alpha$-loss in the logistic model.

\subsection{Generalization and Asymptotic Optimality of $\alpha$-loss}
We now present the following Lipschitz inequality for the margin-based $\alpha$-loss (Definition \ref{def:marginbasedalphaloss}) and will be useful in applying Proposition \ref{prop:radegeneralization}. 
It can readily be shown that the margin-based $\alpha$-loss, $\tilde{l}^{\alpha}$ is $C_{r_{0}}(\alpha)$-Lipschitz in $z \in [-r_{0},r_{0}]$ for every $r_{0} >0$, where
\begin{align} \label{eq:marginalphalosslipintheta}
C_{r_{0}}(\alpha) := \begin{cases}
			\sigma(r_{0})\sigma(-r_{0})^{1-1/\alpha}, & \alpha \in (0,1] \\
            \left(\frac{\alpha-1}{2 \alpha - 1} \right)^{1-1/\alpha} \left(\frac{\alpha}{2\alpha -1} \right), & \alpha \in (1,\infty] \quad \text{and} \quad r_{0} \geq \log{\left(1 - 1/\alpha \right)} \\
            \sigma(r_{0})\sigma(-r_{0})^{1-1/\alpha}, & \alpha \in (1,\infty] \quad \text{and} \quad r_{0} < \log{\left(1 - 1/\alpha \right)}.
		 \end{cases}
\end{align}
That is, for $\alpha \in (0,\infty]$ and $z, z' \in [-r_{0},r_{0}]$, we have that $|\tilde{l}^{\alpha}(z) - \tilde{l}^{\alpha}(z')| \leq C_{r_{0}}(\alpha) |z - z'|$; see Lemma~\ref{lemma:marginlip} in Appendix~\ref{appen:generalization} for the proof. 
Lastly, note that for any fixed $r_{0}>0$, $C_{r_{0}}(\alpha)$ is monotonically decreasing in $\alpha$.

With the Lipschitz inequality for $\tilde{l}^{\alpha}$ in hand, we are now in a position to state a generalization bound for $\alpha$-loss in the logistic model.


\begin{theorem}\label{thm:radegeneralization}
If $\alpha\in(0,\infty]$, then, with probability at least $1-\delta$, for all $\theta \in \mathbb{B}_{d}(r)$,
\begin{equation} \label{eq:radegeneral}
\left|R_{\alpha}(\theta) - \hat{R}_{\alpha}(\theta)\right| \leq C_{r\sqrt{d}}\left(\alpha\right) \dfrac{2r\sqrt{d}}{\sqrt{n}} + 4D_{r\sqrt{d}}\left(\alpha\right)\sqrt{\dfrac{2\log{(4/\delta)}}{n}}, 
\end{equation}
where $C_{r\sqrt{d}}\left(\alpha\right)$ is given in~\eqref{eq:marginalphalosslipintheta} and $\displaystyle D_{r\sqrt{d}}\left(\alpha\right) := \frac{\alpha}{\alpha-1}\left(1-\sigma(-r\sqrt{d})^{1-1/\alpha}\right)$.
\end{theorem}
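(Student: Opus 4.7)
The plan is to invoke the standard Rademacher machinery in Proposition~\ref{prop:radegeneralization} with $\mathcal{H}=\{g_\theta:\theta\in\mathbb{B}_d(r)\}$ and $l=l^\alpha$, and show that the two parameters that appear in that statement (the uniform bound $D$ on the loss and the Rademacher complexity $\mathcal{R}(l\circ\mathcal{H}\circ S_n)$) collapse to exactly the quantities $D_{r\sqrt d}(\alpha)$ and $C_{r\sqrt d}(\alpha)\,r\sqrt d/\sqrt n$ appearing in the theorem. The crucial simplification is to pass from the soft-classifier form to the margin form using Proposition~\ref{Prop:relationhardtosoft}, writing
\[
l^\alpha\bigl(Y_i,g_\theta(X_i)\bigr)=\tilde l^\alpha\bigl(Y_i\langle\theta,X_i\rangle\bigr),
\]
because the Lipschitz and boundedness properties of $\tilde l^\alpha$ that we have at hand are stated on compact intervals of the margin.

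First I would fix the effective margin range. Since $X_i\in[0,1]^d$ implies $\|X_i\|\le\sqrt d$, and $\theta\in\mathbb{B}_d(r)$, Cauchy--Schwarz gives $Y_i\langle\theta,X_i\rangle\in[-r\sqrt d,\,r\sqrt d]$. To obtain $D$, I would use that $\tilde l^\alpha$ is monotonically decreasing in the margin, so $\sup_{z\in[-r\sqrt d,r\sqrt d]}\tilde l^\alpha(z)=\tilde l^\alpha(-r\sqrt d)$; plugging $z=-r\sqrt d$ into Definition~\ref{def:marginbasedalphaloss} and simplifying via $(1+e^{r\sqrt d})^{1/\alpha-1}=\sigma(-r\sqrt d)^{1-1/\alpha}$ shows this maximum equals exactly $D_{r\sqrt d}(\alpha)$, handling the continuous-extension cases $\alpha=1,\infty$ by the usual limits.

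Next I would bound the Rademacher term. The Contraction Lemma (Lemma~\ref{lemma:radecontraction}), applied with the $C_{r\sqrt d}(\alpha)$-Lipschitz constant of $\tilde l^\alpha$ on $[-r\sqrt d,\,r\sqrt d]$ (from~\eqref{eq:marginalphalosslipintheta}), yields
\[
\mathcal{R}(l^\alpha\circ\mathcal{H}\circ S_n)\;\le\;C_{r\sqrt d}(\alpha)\,\mathcal{R}\bigl(\{(Y_i\langle\theta,X_i\rangle)_{i=1}^n:\theta\in\mathbb{B}_d(r)\}\bigr).
\]
The symmetry of the Rademacher variables $\sigma_i$ (distribution invariance of $\sigma_iY_i$ conditional on $Y_i\in\{-1,+1\}$) lets me drop the $Y_i$ from the inner product, after which Lemma~\ref{lemma:raderadiusbound} applied to the vectors $X_1,\dots,X_n\in[0,1]^d$ gives an upper bound of $r\max_i\|X_i\|/\sqrt n\le r\sqrt d/\sqrt n$. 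Combining with the contraction step and plugging into Proposition~\ref{prop:radegeneralization} produces $2C_{r\sqrt d}(\alpha)\cdot r\sqrt d/\sqrt n=C_{r\sqrt d}(\alpha)\,2r\sqrt d/\sqrt n$, which is precisely the first term of~\eqref{eq:radegeneral}; the second term is immediate from the $D_{r\sqrt d}(\alpha)$ bound already established.

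I do not expect any serious obstacle: all ingredients are already in place in the preliminaries, so the work is bookkeeping. The only points requiring care are (i) correctly identifying the extremum of $\tilde l^\alpha$ over the margin range (handled by monotonicity rather than by calculus, since the endpoint behaviour is already the answer), (ii) the symmetrization step that removes the $Y_i$ before applying Lemma~\ref{lemma:raderadiusbound}, and (iii) handling $\alpha=1$ and $\alpha=\infty$ by the continuous extensions so that the stated bound is valid uniformly on $(0,\infty]$.
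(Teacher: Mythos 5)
Your proposal is correct and follows essentially the same route as the paper's proof: pass to the margin form via Proposition~\ref{Prop:relationhardtosoft}, bound the margin range by Cauchy--Schwarz, apply the Contraction Lemma with the Lipschitz constant $C_{r\sqrt{d}}(\alpha)$ and Lemma~\ref{lemma:raderadiusbound}, bound the loss uniformly by $D_{r\sqrt{d}}(\alpha)$ via monotonicity, and invoke Proposition~\ref{prop:radegeneralization}. The only cosmetic difference is that you remove the labels $Y_i$ by Rademacher symmetry while the paper absorbs $y_i$ into $x_i$ before applying Lemma~\ref{lemma:raderadiusbound}; both are equivalent.
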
 
Note that $D_{r\sqrt{d}}(\alpha)$ is also monotonically decreasing in $\alpha$ for fixed $r \sqrt{d} > 0$. 
Thus, Theorem~\ref{thm:radegeneralization} seems to suggest that generalization improves as $\alpha \rightarrow \infty$. However, because $R_{\alpha}$ and $\hat{R}_{\alpha}$ also monotonically decrease in $\alpha$, it is difficult to reach such a conclusion. 
Nonetheless, Corollary~\ref{cor:saturationgeneralization} in Appendix~\ref{appen:generalization} offers an attempt at providing a unifying comparison between the $\infty$-risk, $R_{\infty}$, and the empirical $\alpha$-risk, $\hat{R}_{\alpha}$.

Lastly, observe that for the generalization result in Theorem~\ref{thm:radegeneralization}, we make no distributional assumptions such as those by Tsybakov, \textit{et. al} in \cite{audibert2007fast}, where they assume the posterior satisfies a \textit{margin} condition. 
Under such an assumption, we observe that faster rates could be achieved, but optimal rates are not the focus of this work.
Nonetheless, the next theorem relies on the assumption that the minimum $\alpha$-risk is attained by the logistic model, i.e., given $\alpha \in (0,\infty]$, suppose that 
\begin{align} \label{eq:alphalinearseparable}
\min\limits_{\theta \in \mathbb{B}_{d}(r)} R_{\alpha}(\theta) = \min\limits_{f:\mathcal{X} \rightarrow \mathbb{R}} R_{\alpha}(f),
\end{align}
where $R_{\alpha}(\theta)$ is given in~\eqref{eq:alphariskdefLR} and $R_{\alpha}(f) = \mathbb{E}[\tilde{l}^{\alpha}(Yf(X))]$ for all measurable $f$.
\begin{theorem}  \label{thm:asymptoticoptimality}
Assume that the minimum $\alpha$-risk is attained by the logistic model, i.e.,~\eqref{eq:alphalinearseparable} holds.
Let $S_{n}$ be a training dataset with $n \in \mathbb{N}$ samples as before.
If for each $n\in\mathbb{N}$, $\hat{\theta}_n^{\alpha}$ is a global minimizer of the associated empirical $\alpha$-risk $\theta \mapsto \hat{R}_{\alpha}(\theta)$, then the sequence $(\hat{\theta}_n^{\alpha})_{n=1}^\infty$ is asymptotically optimal for the $0$-$1$ risk, i.e., almost surely,
\begin{equation}
\lim_{n\to\infty} R(f_{\hat{\theta}_n^{\alpha}}) = R^*,
\end{equation}
where $f_{\hat{\theta}_n^{\alpha}}(x) = \langle \hat{\theta}_n^{\alpha}, x \rangle$ and $R^{*} := \min\limits_{f: \mathcal{X} \rightarrow \mathbb{R}} \mathbb{P}[Y \neq \sign(f(X))]$.
\end{theorem}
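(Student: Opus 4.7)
The plan is to combine three ingredients already established in the paper: the uniform generalization bound of Theorem~\ref{thm:radegeneralization}, the attainment assumption~\eqref{eq:alphalinearseparable}, and the classification-calibration of $\tilde{l}^{\alpha}$ established in Theorem~\ref{thm:alphalossclassificationcalibration}. The argument proceeds in three steps: (i) pass from empirical optimality to $\alpha$-risk optimality over the parameter ball via uniform concentration; (ii) use the attainment hypothesis to lift from the restricted minimum to the full $\alpha$-risk infimum; (iii) transfer $\alpha$-risk consistency to $0$-$1$ risk consistency via the Bartlett--Jordan--McAuliffe calibration machinery.

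For step (i), I would apply Theorem~\ref{thm:radegeneralization} with a summable sequence of confidence levels $\delta_n = n^{-2}$. The bound on the right-hand side of~\eqref{eq:radegeneral} is then of order $\sqrt{\log n / n}$, and by the Borel--Cantelli lemma, almost surely for all sufficiently large $n$,
\begin{equation}
\sup_{\theta \in \mathbb{B}_d(r)} |R_\alpha(\theta) - \hat{R}_\alpha(\theta)| \leq \varepsilon_n \xrightarrow[n\to\infty]{} 0.
\end{equation}
Fix any $\theta^{\star}_\alpha \in \arg\min_{\theta \in \mathbb{B}_d(r)} R_\alpha(\theta)$, which exists because $R_\alpha$ is continuous on the compact ball $\mathbb{B}_d(r)$. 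Then, using that $\hat{\theta}_n^{\alpha}$ minimizes $\hat{R}_\alpha$,
\begin{equation}
R_\alpha(\hat{\theta}_n^{\alpha}) \leq \hat{R}_\alpha(\hat{\theta}_n^{\alpha}) + \varepsilon_n \leq \hat{R}_\alpha(\theta^{\star}_\alpha) + \varepsilon_n \leq R_\alpha(\theta^{\star}_\alpha) + 2\varepsilon_n,
\end{equation}
so $R_\alpha(\hat{\theta}_n^{\alpha}) \to \min_{\theta \in \mathbb{B}_d(r)} R_\alpha(\theta)$ almost surely.

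For step (ii), the attainment hypothesis~\eqref{eq:alphalinearseparable} gives $\min_{\theta \in \mathbb{B}_d(r)} R_\alpha(\theta) = \inf_{f} R_\alpha(f) =: R_\alpha^{*}$, so writing $f_{\hat{\theta}_n^{\alpha}}(x) = \langle \hat{\theta}_n^{\alpha}, x\rangle$ and noting $R_\alpha(\hat{\theta}_n^{\alpha}) = R_\alpha(f_{\hat{\theta}_n^{\alpha}})$ via Proposition~\ref{Prop:relationhardtosoft}, we conclude $R_\alpha(f_{\hat{\theta}_n^{\alpha}}) \to R_\alpha^{*}$ almost surely. For step (iii), Theorem~\ref{thm:alphalossclassificationcalibration} ensures that $\tilde{l}^{\alpha}$ is classification-calibrated for every $\alpha \in (0,\infty]$. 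By the Bartlett--Jordan--McAuliffe theorem (Theorem~3 of~\cite{bartlett2006convexity}), calibration is equivalent to the existence of a nondecreasing function $\psi_\alpha : [0,1] \to [0,\infty)$ with $\psi_\alpha(0)=0$ that is continuous at $0$ and satisfies
\begin{equation}
\psi_\alpha\bigl(R(f) - R^{*}\bigr) \leq R_\alpha(f) - R_\alpha^{*}
\end{equation}
for every measurable $f$. Applying this inequality to $f_{\hat{\theta}_n^{\alpha}}$ and using step (ii), the right-hand side tends to $0$ almost surely; the continuity of $\psi_\alpha$ at $0$ with $\psi_\alpha(0) = 0$ then forces $R(f_{\hat{\theta}_n^{\alpha}}) \to R^{*}$ almost surely, which is the claim.

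The main obstacle is the careful invocation of Bartlett--Jordan--McAuliffe: one must verify that their $\psi$-transform applies in the form above for our $\tilde{l}^{\alpha}$, and that the minimum conditional risk $C_\alpha^{*}$ identified in Corollary~\ref{cor:condrisk} is indeed the one feeding into their construction. A secondary subtlety is promoting the in-probability generalization bound of Theorem~\ref{thm:radegeneralization} to an almost-sure statement, which is handled cleanly by the summability of $\delta_n = n^{-2}$ and Borel--Cantelli. Once those two points are pinned down, the chain of implications $\hat{R}_\alpha \text{ optimal} \Rightarrow R_\alpha \text{ optimal} \Rightarrow R \text{ optimal}$ is essentially mechanical.
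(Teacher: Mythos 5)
Your proposal is correct and follows essentially the same route as the paper's proof: a uniform generalization bound plus Borel--Cantelli to get almost-sure convergence of $R_\alpha(\hat{\theta}_n^{\alpha})$ to the restricted minimum, the attainment assumption to identify that minimum with $R_\alpha^*$, and classification-calibration via Bartlett--Jordan--McAuliffe to transfer to the $0$-$1$ risk. The only cosmetic difference is that you invoke the $\psi$-transform inequality while the paper uses the equivalent sequential form of their Theorem~3 (that $R_\phi(f_i)\to R_\phi^*$ implies $R(f_i)\to R^*$), which sidesteps your stated "main obstacle" entirely.
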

In words, setting the optimization procedure aside, utilizing $\alpha$-loss for a given $\alpha \in (0,\infty]$ is asymptotically optimal with respect to the probability of error (expectation of the $0$-$1$ loss).
Observe that the assumption in~\eqref{eq:alphalinearseparable} is a stipulation for the the underlying data-generating distribution, $P_{X,Y}$, in disguise. 
That is, we assume that $P_{X,Y}$ is separable by a linear predictor, which is a global minimizer for the $\alpha$-risk.
In essence, Theorem~\ref{thm:asymptoticoptimality} is a combination of Theorem~\ref{thm:radegeneralization} and classification-calibration.

With the statistical, optimization, and generalization considerations of $\alpha$-loss behind us, we now provide experimental results in two canonical settings for $\alpha$-loss in logistic and convolutional-neural-network models. 



\section{Experimental Results} \label{sec:Experiments}
%
%
%
As was first introduced in Section~\ref{sec:syntheticexp}, in this section we further experimentally evaluate the efficacy of $\alpha$-loss in the following two canonical scenarios: 
\newline
(i) \textbf{Noisy labels:} the classification algorithm is trained on a binary-labeled dataset that suffers from symmetric noisy labels, and it attempts to produce a model which achieves strong performance on the clean test data. 
\newline 
(ii) \textbf{Class imbalance:} the classification algorithm is trained on a binary-labeled dataset that suffers from a class imbalance, and it attempts to produce a model which achieves strong performance on the balanced test data. 
\newline
\noindent Our hypotheses are as follows: for setting (i), tuning $\alpha > 1$ (away from log-loss) improves the robustness of the trained model to symmetric noisy labels; for setting (ii), tuning $\alpha < 1$ (again away from log-loss) improves the sensitivity of the trained model to the minority class. In general, we experimentally validate both hypotheses.

In our experimental procedure, we use the following common image datasets: MNIST~\cite{MNIST}, Fashion MNIST~\cite{xiao2017FashionMNIST}, and CIFAR-10~\cite{krizhevsky2014cifar}. While these datasets have predefined training and test sets, we present binary partitions of these datasets for both settings in the main text, in alignment with our theoretical investigations of $\alpha$-loss for binary classification problems; in Appendix~\ref{appen:multiclassexperiments}, we present multiclass symmetric noise experiments for the MNIST and FMNIST datasets.
%
Regarding the binary partitions themselves, we chose classes which are visually similar in order to increase the difficulty of the classification task. 
Specifically, for MNIST we used a binary partition on the \textit{1} and \textit{7} classes, for FMNIST we used a binary partition on the \textit{T-Shirt} and \textit{Shirt} classes, and finally for our binary experiments on CIFAR-10 we used a binary partition on the \textit{Cat} and \textit{Dog} classes.

All code is written in PyTorch, version 1.30~\cite{paszke2019pytorch}. 
Architectures learning CIFAR are trained with GPUs, while the architectures learning MNIST and FMNIST are both trained with CPUs. 
Throughout, we consider two broad classes of architectures: logistic regression (LR) and convolutional neural networks (CNNs) with one or two fully connected layers preceded by varying convolutional layer depths ($2$, $3$, $4$, and $6$) such that we obtain the shorthand CNN X$+$Y where X is one of $2$, $3$, $4$, or $6$ and Y is one of $1$ or $2$.
%
%
For all architectures learning CIFAR, we additionally use a sigmoid at the last layer for smoothing.
For each set of experiments, we randomly fix a seed, and for each iteration we reinitialize a new architecture with randomly selected weights.
We use softmax activation to generate probabilities over the labels, and we evaluate the model's soft belief using $\alpha$-loss on a one-hot-encoding of the training data.

All (dataset, architecture) tuples were trained with the same optimizer, vanilla SGD, with fixed learning rates.
In order to provide the fairest comparison to log-loss ($\alpha = 1$), for each (dataset, architecture) tuple we select a fixed learning rate from the set $\{10^{-4}, 5 \times 10^{-4}, 10^{-3}, 5 \times 10^{-3}, 10^{-2}, 5 \times 10^{-2}, 10^{-1}\}$ which provides the highest validation accuracy for a model trained with log-loss. 
Then for the chosen (dataset, architecture) tuple, we train $\alpha$-loss for each value of $\alpha$ using this fixed learning rate.
Regarding the optimization of $\alpha$-loss itself which is parameterized by $\alpha \in (0,\infty]$, in general we find that searching over $\alpha \in [.8,8]$ for noisy labels and $\alpha \in [.8,4]$ for class imbalances is sufficient, and we typically do so in step-sizes of $0.1$ or $0.05$ (near $\alpha = 1$) or a step-size of $1$ (when $\alpha > 1$). This is in line with our earlier theoretical discussions regarding the "Goldilocks zone" of $\alpha$-loss, i.e., the gradient explosion for very small values of $\alpha$, the increased difficulty of optimization for large values of $\alpha$, and the fact that relatively small values of $\alpha$ closely approximate the $\infty$-loss.

For all experiments, we employ a training batch size of $128$ samples. 
For all experiments on the MNIST and FMNIST datasets, training was allowed to progress for $50$ epochs; for all experiments on the CIFAR-10 dataset, training was allowed to progress for $120$ epochs - convergence for all values of $\alpha$ was ensured for both choices. 
Lastly, for each architecture we re-run each experiment $10$ times and report the average test accuracies calculated according to the relative accuracy gain, which we rewrite for our experimental setting as 
\begin{align} \label{eq:relativeaccuracygain}
\text{relative acc gain \%} = \frac{|\alpha\text{-loss}\text{ acc} - \text{log-loss}\text{ acc}|}{\text{log-loss}\text{ acc}} \times 100,
\end{align}
where we use $\text{acc}$ to denote test accuracy.
Also note that $\alpha^{*}$ is chosen as the $\alpha$ over the search range which maximizes the average test accuracy of its trained models.
For more details regarding architecture configurations (i.e., CNN channel sizes, kernel size, etc) and general experiment details, we refer the reader to 
the code for all of our experiments (including the implementation of $\alpha$-loss), which can be found at~\cite{codelink}.

\subsection{Noisy Labels} \label{sec:experimentsbinarynoise}
For the first set of experiments, we evaluate the robustness of $\alpha$-loss to symmetric noisy labels, and we generate symmetric noisy labels in the binary training data as follows: 
\begin{enumerate}
    \item For each run of an experiment, we randomly select $0$-$40\%$ of the training data in increments of $10\%$. 
    \item For each training sample in the randomly selected group, we flip the label of the selected training example to the other class.
\end{enumerate}
Note that for all symmetric noisy label experiments we keep the test data clean, i.e., we do not perform label flips on the test data.  
Thus, these experiments address the scenario where training data is noisy and test data is clean.
Also note that during our $10$-iteration averaging for each accuracy value presented in each table, we are also randomizing over the symmetric noisy labels in the training data.


The results on the binary MNIST dataset (composed of classes \textit{1} and \textit{7}), binary FMINIST dataset (composed of classes \textit{T-Shirt} and \textit{Shirt}), and binary CIFAR-10 (composed of classes \textit{Cat} and \textit{Dog}) are presented in Tables~\ref{table:MNISTsymmetricnoisylabels},~\ref{table:FMNISTsymmetricnoisylabels},~and~\ref{table:CIFARsymmetricnoisylabels}, respectively.
As stated previously, in order to report the fairest comparison between log-loss and $\alpha$-loss, we first find the optimal fixed learning rate for log-loss from our set of learning rates (given above), then we train each chosen architecture with $\alpha$-loss for all values of $\alpha$ also with this found fixed learning rate.
%
Following this procedure, for the binary MNIST dataset, we trained both the LR and CNN 2$+$2 architectures with a fixed learning rate of $10^{-2}$; for the binary FMNIST dataset, we trained the LR and CNN 2$+$2 architectures with fixed learning rates of $10^{-4}$ and $5 \times 10^{-3}$, respectively; for the binary CIFAR-10 dataset, we trained the CNN 2$+$1, 3$+$2, 4$+$2, and 6$+$2 architectures with fixed learning rates of $10^{-2}$, $10^{-1}$, $5 \times 10^{-2}$, and $10^{-1}$, respectively.

Regarding the results presented in Tables~\ref{table:MNISTsymmetricnoisylabels},~\ref{table:FMNISTsymmetricnoisylabels},~and~\ref{table:CIFARsymmetricnoisylabels}, in general we find for $0\%$ label flips (from now on referred to as baseline) the extra $\alpha$ hyperparameter does not offer significant gains over log-loss in the test results for each (dataset, architecture) tuple. 
However once we start to increase the percentage of label flips, we immediately find that $\alpha^{*}$ increases greater than $1$ (log-loss). 
Indeed for each (dataset, architecture) tuple, we find that as the number of symmetric label flips increases, training with $\alpha$-loss for a value of $\alpha > 1$ increases the test accuracy on clean data, often significantly outperforming log-loss.
Note that this performance increase induced by the new $\alpha$ hyperparameter is not monotonic as the number of label flips increases, i.e., there appears to be a noise threshold past which the performance of all losses decays, but this occurs for very high noise levels, which are not usually present in practice.
Recalling Section~\ref{sec:syntheticexp}, the strong performance of $\alpha$-loss for $\alpha > 1$ on binary symmetric noisy training labels can intuitively be accounted for by the quasi-convexity of $\alpha$-loss in this regime, i.e., the reduced sensitivity to outliers.
Thus, we conclude that the results in Tables~\ref{table:MNISTsymmetricnoisylabels}, \ref{table:FMNISTsymmetricnoisylabels}, and~\ref{table:CIFARsymmetricnoisylabels} on binary MNIST, FMNIST, and CIFAR-10, respectively, 
indicate that practitioners should employ $\alpha$-loss for $\alpha > 1$ when training robust architectures to combat against binary noisy training labels.
%
%
Lastly, we report two experiments for multiclass symmetric noisy training labels in Appendix~\ref{appen:multiclassexperiments}. In short, we find similar robustness to noisy labels for $\alpha > 1$, but we acknowledge that further empirical study of $\alpha$-loss on multiclass datasets is needed.

\begin{table}[]
\begin{center}
\begin{tabular}{cccccc}
\hlineB{2.5}
Architecture & Label Flip \% & LL Acc \%  & $\alpha^{*}$ Acc \% & $\alpha^{*}$    & Rel Gain \% \\ \hlineB{2.5}
& 0             & 99.26 & 99.26    & 0.95,1    & 0.00       \\ \cline{2-6} 
& 10            & 99.03 & 99.13    & 6         & 0.10      \\ \cline{2-6} 
LR           & 20            & 98.65  & 99.03    & 7         & 0.39       \\ \cline{2-6} 
& 30            & 97.89 & 98.96    & 3.5       & 1.10      \\ \cline{2-6} 
& 40            & 92.10 & 98.53     & 8         & 6.98       \\ \clineB{1-6}{2}
& 0             & 99.83 & 99.84    & 4-8 & $0.01$  \\ \cline{2-6} 
& 10            & 95.27 & 99.68    & 6,7       & 4.63       \\ \cline{2-6} 
CNN 2$+$2      & 20            & 87.41 & 98.72    & 8         & 12.94      \\ \cline{2-6} 
& 30            & 77.56 & 87.86    & 8         & 13.28     \\ \cline{2-6} 
& 40            & 62.89 & 66.10    & 8         & 5.12       \\ \hlineB{2}
\end{tabular}
\caption{Symmetric binary noisy label experiment on \textbf{MNIST} classes \textit{1} and \textit{7}. Note that LL Acc and $\alpha$ Acc stand for log-loss accuracy and $\alpha$-loss accuracy for $\alpha^{*}$, respectively and that Rel \% Gain is calculated according to~\eqref{eq:relativeaccuracygain}.
Also note that each reported accuracy is averaged over 10 runs.} 
\label{table:MNISTsymmetricnoisylabels}
\end{center}
\end{table}

\begin{table}[]
\begin{center}
\begin{tabular}{cccccc}
\hlineB{2.5}
 Architecture & Label Flip \% & LL Acc \%  & $\alpha^{*}$ Acc \% & $\alpha^{*}$ & Rel Gain \% \\ \hlineB{2.5}
& 0             & 84.51  & 84.78     & 1.5    & 0.32       \\ \cline{2-6} 
 & 10            & 83.80   & 84.41    & 2      & 0.72    \\ \cline{2-6} 
LR           & 20            & 83.11 & 83.94     & 2.5    & 1.01      \\ \cline{2-6} 
& 30            & 81.29 & 83.43    & 3      & 2.63      \\ \cline{2-6} 
& 40            & 74.39  & 92.02    & 8      & 23.69     \\ \clineB{1-6}{2}
& 0             & 86.96 & 87.19     & 1.1    & 0.27       \\ \cline{2-6} 
& 10            & 81.14  & 83.74    & 5      & 3.20      \\ \cline{2-6} 
CNN 2$+$2      & 20            & 72.96 & 78.00       & 8      & 6.93       \\ \cline{2-6} 
& 30            & 66.17  & 69.21    & 8      & 4.59       \\ \cline{2-6} 
& 40            & 57.90 & 58.56     & 3      & 1.15       \\ \hlineB{2}
\end{tabular}
\caption{Symmetric noisy label experiment on \textbf{FMNIST} classes \textit{T-Shirt} and \textit{Shirt}.}
\label{table:FMNISTsymmetricnoisylabels}
\end{center}
\end{table}

\begin{table}[]
\begin{center}
\begin{tabular}{cccccc}
\hlineB{2.5}
Architecture & Label Flip \% & LL Acc \%  & $\alpha^{*}$ Acc \% & $\alpha^{*}$ & Rel Gain \% \\ \hlineB{2.5}
              & 0             & 80.59 & 80.68    & 0.99   & 0.11        \\ \cline{2-6} 
              & 10            & 79.61 & 79.89    & 1.1    & 0.35        \\ \cline{2-6} 
 CNN 2$+$1      & 20            & 77.01 & 77.15     & 0.99   & 0.19        \\ \cline{2-6} 
              & 30            & 73.67 & 74.78    & 2.5    & 1.51        \\ \cline{2-6} 
                                       & 40            & 63.54  & 68.12     & 4      & 7.21        \\ \clineB{1-6}{2}
                                       & 0             & 85.80 & 85.80    & 1      & 0.00        \\ \cline{2-6} 
                                       & 10            & 82.92 & 83.15    & 0.99   & 0.28        \\ \cline{2-6} 
                          CNN 3$+$2      & 20            & 77.61 & 80.88    & 3      & 4.21        \\ \cline{2-6} 
                                       & 30            & 69.53  & 76.72     & 5      & 10.34       \\ \cline{2-6} 
               & 40            & 59.44 & 67.19    & 6      & 13.04       \\ \clineB{1-6}{2}
                                       & 0             & 87.49 & 87.59     & 0.9    & 0.12        \\ \cline{2-6} 
                                      & 10            & 83.65 & 84.69     & 1.2    & 1.25        \\ \cline{2-6} 
                          CNN 4$+$2      & 20            & 78.96  & 81.39    & 3.5    & 3.07        \\ \cline{2-6} 
                                       & 30            & 69.24  & 75.56     & 6      & 9.13        \\ \cline{2-6} 
                                       & 40            & 59.12 & 64.53    & 8      & 9.15        \\ \clineB{1-6}{2}
                                       & 0             & 87.31  & 87.93    & 1.2    & 0.70        \\ \cline{2-6} 
                                       & 10            & 84.91 & 85.33    & 2      & 0.49        \\ \cline{2-6} 
                         CNN 6$+$2      & 20            & 78.92  & 81.80    & 6      & 3.64        \\ \cline{2-6} 
                                       & 30            & 68.88 & 77.20      & 7      & 12.09       \\ \cline{2-6} 
                                       & 40            & 58.54 & 65.16     & 7      & 11.32       \\ \hlineB{2}
\end{tabular}
\caption{Symmetric binary noisy label experiment on \textbf{CIFAR-10} classes \textit{Cat} and \textit{Dog}.}
\label{table:CIFARsymmetricnoisylabels}
\end{center}
\end{table}

\subsection{Class Imbalance}
For the second set of experiments, we evaluate the sensitivity of $\alpha$-loss to class imbalances, and we generate binary class imbalances in the training data as follows: 
\begin{enumerate}
    \item Given a dataset, select two classes, Class 1 and Class 2, and generate baseline $50/50$ data where $\left|\text{Class 1}\right| = \left|\text{Class 2}\right| = 2500$ training samples.
    Throughout all experiments ensure that $\left|\text{Class 1}\right| + \left|\text{Class 2}\right| = 5000$ randomly drawn training samples.
    \item Starting at the baseline ($2500/2500$) and drawing from the available training samples in each dataset when necessary, increase the number of training samples of Class 1 by $500$, $1000$, $1500$, $2000$, and $2250$ and reduce the number of training samples of Class 2 by the same amounts in order to generate training sample splits of $60/40$, $70/30$, $80/20$, $90/10$, and $95/5$, respectively.
    \item Repeat the previous step where the roles of Class 1 and Class 2 are reversed.
\end{enumerate}
Note that the test set is balanced for all experiments with 2000 test samples (1000 for each class). 
Thus, these experiments address the scenario where training data is imbalanced and the test data is balanced.
Also note that during our $10$-iteration averaging for each accuracy value presented in each table, we are also randomizing over the training samples present in each class imbalance split, according to the procedure above.

The results on binary FMNIST (composed of classes \textit{T-Shirt} and \textit{Shirt}) and binary CIFAR-10 (composed of classes \textit{Cat} and \textit{Dog}) are presented in Tables~\ref{table:FMNISTclassimbalanceLR},~\ref{table:CIFARclassimbalance4+2},~and~\ref{table:CIFARclassimbalance6+2}.
For this set of experiments, note that $\alpha^{*}$ is the optimal $\alpha \in [0.8,4]$ (in our search set) which maximizes the average test accuracy of the minority class, and also note that there are slight test accuracy discrepancies between the baselines in the symmetric noisy labels and class imbalance experiments because of the reduced training and test set size for the class imbalance experiments. 
As before, for the binary FMNIST dataset, we trained the LR and CNN 2$+$2 architectures with fixed learning rates of $10^{-4}$ and $5 \times 10^{-3}$, respectively; for the binary CIFAR-10 dataset, we trained the CNN 2$+$1, 3$+$2, 4$+$2, and 6$+$2 architectures with fixed learning rates of $10^{-2}$, $10^{-1}$, $5 \times 10^{-2}$, and $10^{-1}$, respectively.

In general, we find that the minority class is almost always favored by the smaller values of $\alpha$, i.e., we typically have that $\alpha^{*} < 1$.
Further, we observe that as the percentage of class imbalance increases, the relative accuracy gain on the minority class typically increases through training with $\alpha$-loss. 
This aligns with our intuitions articulated in Section~\ref{sec:syntheticexp} regarding the benefits of "stronger" convexity of $\alpha$-loss when $\alpha < 1$ over log-loss ($\alpha = 1$), particularly when the practitioner desires models which are more sensitive to outliers.
Nonetheless, sometimes there does appear to exist a trade-off between how well learning the majority class influences predictions on the minority class, see e.g., recent work in the area of \textit{stiffness} by Fort \textit{et al.}~\cite{fort2019stiffness}. 
This is a possible explanation for why $\alpha < 1$ is not always preferred for the minority class, e.g., $30\%$ and $40\%$ imbalance in Table~\ref{table:CIFARclassimbalance4+2} when \textit{Dog} is the minority class.
Thus we conclude that the results in Tables~\ref{table:FMNISTclassimbalanceLR},~\ref{table:CIFARclassimbalance4+2},~and~\ref{table:CIFARclassimbalance6+2}, 
on binary FMNIST and CIFAR-10, respectively, indicate that practitioners 
should employ $\alpha$-loss (typically) for $\alpha < 1$ when training architectures to be sensitive to the minority class in the training data.
%

\begin{table}[]
\begin{center}
\begin{tabular}{cccccccccc}
\cline{3-8}
                                & & \multicolumn{3}{c}{Log-Loss}  &  \multicolumn{3}{c}{$\alpha$-Loss} &  &                   \\ \hlineB{2.5}
Imb \%              & Min     & Min Acc \%     & Ov Acc \%  & LL-$\text{F}_{1}$  & Min Acc  \%    & Ov Acc \% & $\alpha^{*}$-$\text{F}_{1}$ & $\alpha^{*}$ & Rel Gain \% \\ \hlineB{2.5}
\multirow{2}{*}{50}  & T-Shirt      & 85.4          & 84.31     & 0.8448      & 85.7      & 84.17 & 0.8441        & 1.5    & 0.35      \\ \clineB{2-10}{.5}
                      & Shirt   & 83.2       & 84.31            & 0.8413      & 83.4.     & 84.33 & 0.8418 & 0.85   & 0.24      \\ \hlineB{2}
\multirow{2}{*}{40}  & T-Shirt & 80.0        & 83.68            & 0.8306      & 80.2.   & 83.73 &  0.8313      & 1.1    & 0.25      \\ \clineB{2-10}{.5} 
                      & Shirt   & 77.7       & 83.88            & 0.8282      & 77.7    & 83.90 &  0.8284      & 0.99   & 0.00      \\ \hlineB{2}
\multirow{2}{*}{30}  & T-Shirt & 72.9      & 81.89              & 0.8010      & 73.0         & 81.88      &  0.8011 & 0.99   & 0.14       \\ \clineB{2-10}{.5}
                      & Shirt   & 70.8       & 82.04            & 0.7977      & 72.3        & 82.52       & 0.8053       & 0.8    & 2.12       \\ \hlineB{2}
\multirow{2}{*}{20}  & T-Shirt & 60.9       & 77.97             & 0.7344      & 61.7       & 78.20        & 0.7389  & 0.8    & 1.31       \\ \clineB{2-10}{.5}
                      & Shirt   & 63.1       & 79.81            & 0.7576      & 64.5        & 80.40       & 0.7669       & 0.8    & 2.22       \\ \hlineB{2}
\multirow{2}{*}{10}  & T-Shirt & 43.0       & 70.50             & 0.5931      & 45.2        & 71.50       & 0.6133  & 0.8    & 5.12       \\ \clineB{2-10}{.5}
                      & Shirt   & 55.2       & 76.97            & 0.7056      & 56.0         & 77.25      & 0.7111       & 0.8    & 1.45       \\ \hlineB{2}
\multirow{2}{*}{5} & T-Shirt & 24.6      & 61.85                & 0.3920      & 26.0         & 62.54      & 0.4097       & 0.8    & 5.69      \\ \clineB{2-10}{.5}
                      & Shirt   & 47.5       & 73.52            & 0.6421      & 47.6        & 73.48       & 0.6422        & 0.8    & 0.21      \\ \hlineB{2}
\end{tabular}
\caption{Binary \textbf{FMNIST} Logistic Regression Imbalance Experiments on the \textit{T-Shirt} and \textit{Shirt} classes.
Note that LL-$\text{F}_{1}$ corresponds to the $\text{F}_{1}$ score of log-loss on the imbalanced class; similarly $\alpha^{*}$-$\text{F}_{1}$ corresponds to the $\text{F}_{1}$ score of $\alpha^{*}$-loss on the imbalanced class. See Appendix~\ref{sec:f1review} for a brief review of the definition of the $\text{F}_{1}$ score.
The relative \% gain is defined as the relative percent gain~\eqref{eq:relativeaccuracygain} on the average minority class accuracy (on test data) of models trained with log-loss vs. the average minority class accuracy of models trained with $\alpha$-loss. Note that Ov $=$ Overall. Lastly, observe that for the baseline ($50\%$ imbalance) experiments, we present the accuracy and $\alpha^{*}$ for both classes.}
\label{table:FMNISTclassimbalanceLR}
\end{center}
\end{table}

\begin{table}[]
\begin{center}
\begin{tabular}{cccccccccc}
\cline{3-8}
                                & & \multicolumn{3}{c}{Log-Loss}  &  \multicolumn{3}{c}{$\alpha$-Loss} &  &                   \\
 \hlineB{2.5}
Imb \%                   & Min                  & Min Acc \%     & Ov Acc \% & LL-$\text{F}_{1}$   & Min Acc \%     & Ov Acc \%  & $\alpha^*$-$\text{F}_{1}$        & $\alpha^{*}$               & Rel Gain \%          \\ \hlineB{2.5}
\multirow{2}{*}{50}     & Cat                  & 83.7       & 83.48          & 0.8352          & 87.2        & 83.86       &  0.8438                     & 1.1                  & 4.18             \\ \clineB{2-10}{.5}                                                          
                      & Dog                  & 83.3       & 83.48            & 0.8345          & 86.1        & 84.06       & 0.8438                      & 0.99      & 3.36   \\\hlineB{2}
\multirow{2}{*}{40}  & Cat                  & 79.8       & 83.34             & 0.8273          & 82.7        & 83.39        & 0.8327                     & 0.95                 & 3.63               \\ \clineB{2-10}{.5}
                      & Dog                  & 78.4       & 83.85            & 0.8292          & 82.4        & 83.20        & 0.8306                     & 2.5                  & 5.10               \\ \hlineB{2}
\multirow{2}{*}{30}  & Cat                  & 73.0        & 81.98            & 0.8020          & 74.6        & 82.40        & 0.8000                     & 0.99                 & 2.19              \\ \clineB{2-10}{.5}
                      & Dog                  & 72.0        & 82.00           & 0.8091           & 74.9        & 83.18       & 0.8166                     & 1.2                  & 4.03                \\ \hlineB{2}
\multirow{2}{*}{20}  & Cat                  & 64.6       & 78.96             & 0.7543          & 66.2        & 78.85        &  0.7579                    & 0.8                  & 2.48           \\ \clineB{2-10}{.5}
                      & Dog                  & 63.1       & 78.94            & 0.7498          & 65.0         & 79.79       & 0.7628                     & 0.8                  & 3.01               \\ \hlineB{2}
\multirow{2}{*}{10}  & Cat                  & 39.1       & 68.04             & 0.5502          & 41.6        & 68.88        & 0.5721

                     & 0.9                  & 6.39               \\ \clineB{2-10}{.5}
                      & Dog                  & 42.1       & 70.03            & 0.5842          & 48.5        & 72.53        & 0.6384                     & 0.8                  & 15.20              \\ \hlineB{2}
\multirow{2}{*}{5} & Cat    & \hphantom{0}0.0            & 50.00            & 0.0000           & \hphantom{0}9.6 & 54.48      & 0.1742                       & 0.8                  & $\infty$                  \\ \clineB{2-10}{.5}
                      & Dog                  & 10.0         & 54.94         & 0.1816            & 23.2        & 61.31        & 0.3749                     & 0.8                  & 132.00              \\ \hlineB{2}
\end{tabular}
\caption{Binary \textbf{CIFAR-10} CNN 4$+$2 Imbalance Experiments on \textit{Cat} and \textit{Dog} classes. 
Note that LL-$\text{F}_{1}$ corresponds to the $\text{F}_{1}$ score of log-loss on the imbalanced class; similarly $\alpha^{*}$-$\text{F}_{1}$ corresponds to the $\text{F}_{1}$ score of $\alpha^{*}$-loss on the imbalanced class.
Note that due to our calculation of Rel \% Gain that division by 0 is $\infty$, and thus absolute \% gain for the minority
class \textit{Cat} at a 5\% imbalance is 9.6\%.}
\label{table:CIFARclassimbalance4+2}
\end{center}
\end{table}

\begin{table}[]
\begin{center}
\begin{tabular}{cccccccccc}
\cline{3-8}
                                & & \multicolumn{3}{c}{Log-Loss}  &  \multicolumn{3}{c}{$\alpha$-Loss} &  &                   \\ \hlineB{2.5}
Imb \%                   & Min      & Min Acc \%    & Ov Acc \% & LL-$\text{F}_{1}$   & Min Acc \%     & Ov Acc \%  & $\alpha^{*}$-$\text{F}_{1}$   & $\alpha^{*}$               & Rel Gain \%          \\ \hlineB{2.5}
\multirow{2}{*}{50}  & Cat          & 84.4       & 84.30        & 0.8432       & 85.2        & 84.93         & 0.8497                    & 0.99                 & 0.95            \\ \clineB{2-10}{.5}
                      & Dog         & 84.1       & 84.30        & 0.8427       & 87.0         & 83.91        & 0.8439                       & 2                    & 3.45               \\ \hlineB{2}
\multirow{2}{*}{40}  & Cat          & 80.3       & 83.79        & 0.8320       & 82.4        & 84.87         & 0.8449                       & 0.8                  & 2.62                \\ \clineB{2-10}{.5}
                      & Dog         & 81.2       & 84.91        & 0.8433       & 84.0         & 84.83        & 0.8470                       & 0.9                  & 3.45               \\ \hlineB{2}
\multirow{2}{*}{30}  & Cat          & 74.2       & 82.72        & 0.8111       & 78.2        & 83.32         & 0.8242                       & 0.8                  & 5.39               \\ \clineB{2-10}{.5}
                      & Dog         & 73.0        & 82.92       & 0.8104       & 77.2        & 83.60         & 0.8248                        & 0.9                  & 5.75               \\ \hlineB{2}
\multirow{2}{*}{20}  & Cat          & 64.6       & 78.98        & 0.7545       & 64.6        & 78.98          & 0.7545                   & 1                    & 0.00          \\ \clineB{2-10}{.5}
                      & Dog         & 67.4       & 81.02        & 0.7803       & 70.2        & 81.75          & 0.7937                   & 0.99                 & 4.15               \\ \hlineB{2}
\multirow{2}{*}{10}  & Cat          & 38.0        & 67.69       & 0.5405       & 41.8        & 69.34          & 0.5769                   & 0.85                 & 10.00            \\ \clineB{2-10}{.5}
                      & Dog         & 46.4       & 72.14        & 0.6248        & 50.1        & 73.53          & 0.6543                   & 0.9                  & 7.97            \\ \hlineB{2}
\multirow{2}{*}{5} & Cat            & \hphantom{0}1.7  & 50.80  & 0.0334       & 13.6        & 56.26          & 0.2372                    & 0.8                  & 700.00          \\ \clineB{2-10}{.5}
                      & Dog         & 23.7       & 61.44        & 0.3807        & 31.0         & 64.90         & 0.4690                   & 0.8                  & 30.80          \\ \hlineB{2}
\end{tabular}
\caption{Binary \textbf{CIFAR-10} CNN 6$+$2 Imbalance Experiments on \textit{Cat} and \textit{Dog} classes. }
\label{table:CIFARclassimbalance6+2}
\end{center}
\end{table}



\subsection{Key Takeaways}

We conclude this section by highlighting the key takeaways from our experimental results.

\textbf{Overall Performance Relative to Log-loss:} The experimental results as evidenced through \Cref{table:MNISTsymmetricnoisylabels,table:FMNISTsymmetricnoisylabels,table:CIFARsymmetricnoisylabels,table:FMNISTclassimbalanceLR,table:CIFARclassimbalance4+2,table:CIFARclassimbalance6+2} suggest that $\alpha$-loss, more often than not, yields models with improvements in test accuracy over models trained with log-loss, with more prominent gains in the canonical settings of noisy labels and class imbalances in the training data. 
%
In order to remedy the extra hyperparameter tuning induced by the seemingly daunting task of searching over $\alpha \in (0,\infty]$, we find that searching over $\alpha \in [.8,8]$ in the noisy label experiments or $\alpha \in [.8,4]$ in the class imbalance experiments is sufficient. 
This aligns with our earlier theoretical investigations (Section~\ref{sec:landscapelogisticmodel}) regarding the so-called "Goldilocks zone", i.e., most of the meaningful action induced by $\alpha$ occurs in a narrow region. 
Notably in the class imbalance experiments, we find that the relevant region is even narrower than our initial choice, i.e., $\alpha^{*} \in [.8,2.5]$ (in our search set) for all imbalances. 
For the noisy label experiments, we always find that $\alpha^{*} > 1$ and usually $\alpha$ is not too large, and for the class imbalance experiments, we almost always find that $\alpha^{*} < 1$.
These two heuristics enable the practitioner to readily determine a very good $\alpha$ in these two canonical scenarios.
Consequently, $\alpha$-loss seems to be a principled generalization of log-loss for the practitioner, and it perhaps remedies the concern of Janocha \textit{et al.} in~\cite{janocha2017loss} regarding the lack of canonical alternatives to log-loss (cross-entropy loss) in modern machine learning. 

\section{Conclusions}


In this work, we introduced a tunable loss function called $\alpha$-loss, $\alpha \in (0,\infty]$, which interpolates between the exponential loss ($\alpha = 1/2$), the log-loss ($\alpha = 1$), and the 0-1 loss ($\alpha = \infty$),
for the machine learning setting of classification.
We illustrated the connection between $\alpha$-loss and Arimoto conditional entropy (Section~\ref{sec:Prelim}), and then
we studied the statistical calibration (Section~\ref{sec:alphalossbinaryclassification}), optimization landscape (Section~\ref{sec:landscapelogisticmodel}), and generalization capabilities (Sec~\ref{sec:generalizationoptimality}) of $\alpha$-loss induced by navigating the $\alpha$ hyperparameter.
Regarding our main theoretical results, we showed that $\alpha$-loss is classification-calibrated for all $\alpha \in (0,\infty]$; we also showed that in the logistic model there is a "Goldilocks zone", such that most of the meaningful action induced by $\alpha$ occurs in a narrow region (usually $\alpha \in [.8,8]$); finally, we showed (under standard distributional assumptions) that empirical minimizers of $\alpha$-loss for all $\alpha \in (0,\infty]$ are asymptotically optimal with respect to the true 0-1 loss.
Practically, following intuitions developed in Section~\ref{sec:syntheticexp}, we performed noisy label and class imbalance experiments on MNIST, FMNIST, and CIFAR-10 using logistic regression and convolutional-neural-networks (Section~\ref{sec:Experiments}). 
We showed that models trained with $\alpha$-loss can be more robust or sensitive to outliers (depending on the application) over models trained with log-loss ($\alpha = 1$).
Thus, we argue that $\alpha$-loss seems to be a principled generalization of log-loss for classification algorithms in modern machine learning.
%
%
%
Regarding promising avenues to further explore the role of $\alpha$-loss in machine learning, 
the robustness of neural-networks to adversarial influence has recently drawn much attention \cite{zhang2019theoretically,madry2017towards,schmidt2018adversarially} in addition to learning censored and fair representations that ensure statistical fairness for all downstream learning tasks~\cite{kairouz2019censored}.

\bibliographystyle{IEEEtran}
\bibliography{TS_ML}

\appendix
\subsection{$\alpha$-loss in Binary Classification} \label{appen:binaryclass}
\begin{repprop}{Prop:relationhardtosoft}
Consider a soft classifier $g(x)=P_{\hat{Y}|X}(1 | x)$.
If $f(x) = \sigma^{-1}(g(x))$, then, for every $\alpha\in (0,\infty]$,
\begin{equation}
    l^{\alpha}(y,g(x)) = \tilde{l}^{\alpha}(yf(x)).
\end{equation}
Conversely, if $f$ is a classification function, then the set of beliefs $P_{\hat{Y}|X}$ associated to $g(x) := \sigma(f(x))$ satisfies \eqref{eq:lglf}. In particular, for every $\alpha\in(0,\infty]$,
\begin{equation}
\min_{g} \mathbb{E}_{X,Y}(l^\alpha(Y,g(x))) = \min_f \mathbb{E}_{X,Y}(\tilde{l}^\alpha(Yf(X))).
\end{equation}
\end{repprop}
%
\begin{proof}
Consider a soft classifier $g$ and let $P_{\hat{Y}|X}$ be the set of beliefs associated to it. Suppose $f(x) = \sigma^{-1}(g(x))$, where $g(x) = P_{\hat{Y}|X}(1|x)$. We want to show that 
\begin{equation} \label{desiredeq} l^{\alpha}(y,P_{\hat{Y}|X=x}) = \tilde{l}^{\alpha}(yf(x)).\end{equation} 
We assume that $\alpha \in (0,1) \cup (1,\infty)$. Note that the cases where $\alpha = 1$ and $\alpha = \infty$ follow similarly. 
\par
Suppose that $g(x) = P_{\hat{Y}|X}(1|x) = \sigma(f(x))$. If $y = 1$, then \begin{align} l^{\alpha}(1,P_{\hat{Y}|X}(1|x)) &= l^{\alpha}(1,\sigma(f(x))) \\ &= \frac{\alpha}{\alpha - 1}\left[1 - \sigma(f(x))^{1 - 1/\alpha}\right] \\ 
&= \tilde{l}^{\alpha}(f(x)).\end{align} 
If $y = -1$, then \begin{align} l^{\alpha}(-1,P_{\hat{Y}|X}(-1|x)) &= l^{\alpha}(-1,1 - P_{\hat{Y}|X}(1|x))
\\ &= l^{\alpha}(-1,1 - \sigma(f(x))) \\ \label{step} &= l^{\alpha}(-1,\sigma(-f(x))) \\ &= \frac{\alpha}{\alpha - 1}[1 - \sigma(-f(x))^{1 - 1/\alpha}]\\ &= \tilde{l}^{\alpha}(-f(x)),\end{align} where \eqref{step} follows from
\begin{equation}\label{simprop} \sigma(x) + \sigma(-x) = 1, \end{equation}
which can be observed by \eqref{eq:DefSigmoid}.
To show the reverse direction of \eqref{desiredeq} we substitute \begin{equation} f(x) = \sigma^{-1}(g(x)) = \sigma^{-1}(P_{\hat{Y}|X}(1|x))\end{equation}
in $\tilde{l}^{\alpha}(yf(x))$. 
For $y = 1$, \begin{align} \tilde{l}^{\alpha}(f(x)) &= \tilde{l}^{\alpha}(\sigma^{-1}(P_{\hat{Y}|X}(1|x))) 
\\ &= \frac{\alpha}{\alpha - 1}[1 - (\sigma(\sigma^{-1}(P_{\hat{Y}|X}(1|x))))^{1 - 1/\alpha}]
\\&=\frac{\alpha}{\alpha - 1}[1 - P_{\hat{Y}|X}(1|x)^{1 - 1/\alpha}]
\\&= l^{\alpha}(1,P_{\hat{Y}|X}(1|x)).\end{align} 
For $y = -1$, \begin{align} \tilde{l}^{\alpha}(-f(x)) &= \tilde{l}^{\alpha}(-\sigma^{-1}(P_{\hat{Y}|X}(1|x))) 
\\ &= \frac{\alpha}{\alpha - 1}[1 - \sigma(-\sigma^{-1}(P_{\hat{Y}|X}(1|x)))^{1 - 1/\alpha}]
\\ \label{step2} &= \frac{\alpha}{\alpha - 1}[1 - (1-\sigma(\sigma^{-1}(P_{\hat{Y}|X}(1|x))))^{1 - 1/\alpha}]
\\ &= \frac{\alpha}{\alpha - 1}[1 - P_{\hat{Y}|X}(-1|x)^{1 - 1/\alpha}]
\\&= l^{\alpha}(-1, P_{\hat{Y}|X}(-1|x)),\end{align} where \eqref{step2} follows from \eqref{simprop}.
\par
The equality in the results of the minimization procedures follows from the equality between $l^{\alpha}$ and $\tilde{l}^{\alpha}$. As was shown in \cite{liao2018tunable}, the minimizer of the left-hand-side is 
\begin{equation} 
P^{*}_{\hat{Y}|X}(y|x) = \dfrac{P_{Y|X}(y|x)^{\alpha}}{\sum\limits_{y} P_{Y|X} (y|x)^{\alpha}}.
\end{equation} 
Using $f(x) = \sigma^{-1}(P_{\hat{Y}|X}(1|x))$, $f^{*}(x) = \sigma^{-1}(P^{*}_{\hat{Y}|X}(1|x))$.
\end{proof}

\begin{repprop}{Prop:alpha-loss-convex}
As a function of the margin, $\tilde{l}^{\alpha}:\overline{\mathbb{R}}\to\mathbb{R}_+$ is convex for $\alpha \leq 1$ and quasi-convex for $\alpha > 1$.
\end{repprop}
\begin{proof}
The second derivative of the margin-based $\alpha$-loss for $\alpha \in (0,\infty]$ with respect to the margin is given by
\begin{equation} \label{2ndderiv}
\dfrac{d^2}{dz^{2}} \tilde{l}^{\alpha}(z) = \dfrac{(e^{-z}+1)^{1/\alpha}e^{z}(\alpha e^z - \alpha + 1)}{\alpha(e^{z} + 1)^{3}}.
\end{equation}
Observe that if $\alpha \in (0,1]$, then we have that, for all $z \in \overline{\mathbb{R}}$, $\dfrac{d^2}{dz^{2}} \tilde{l}^{\alpha}(z) \geq 0$, which implies that $\tilde{l}^{\alpha}$ is convex \cite{boyd2004convex}.
If $\alpha \in (1,\infty]$, then note that $\alpha e^z - \alpha + 1 < 0$ for all $z \in \overline{\mathbb{R}}$ such that $z < \log{\left(1-\alpha^{-1}\right)}$.
Thus, the margin-based $\alpha$-loss, $\tilde{l}^{\alpha}$, is not convex for $\alpha \in (1,\infty]$.
%
%
%
However, observe that 
\begin{equation}
\dfrac{d}{dz}\tilde{l}^{\alpha}(z) = \dfrac{-(e^{-z} + 1)^{1/\alpha}e^{z}}{(1+e^{z})^{2}}.
\end{equation}
Since $\dfrac{d}{dz}\tilde{l}^{\alpha}(z) < 0$ for $\alpha \in [1,\infty]$ and for all $z \in \overline{\mathbb{R}}$,
$\tilde{l}^{\alpha}$ is monotonically decreasing. Furthermore, since monotonic functions are quasi-convex \cite{boyd2004convex}, we have that $\tilde{l}^{\alpha}$ is quasi-convex for $\alpha > 1$.
%
\end{proof}

\begin{reptheorem}{thm:alphalossclassificationcalibration}
For $\alpha\in (0,\infty]$, the margin-based $\alpha$-loss $\tilde{l}^{\alpha}$ is classification-calibrated. In addition, its optimal classification function is given by
\begin{equation}
    f^{*}_{\alpha}(\eta) = \alpha \cdot \sigma^{-1}(\eta).
\end{equation}
\end{reptheorem}
\begin{proof}
We first show that $\tilde{l}^{\alpha}$ is classification-calibrated for all $\alpha \in (0,\infty]$.
Suppose that $\alpha \in (0,1]$; we rely on the following result by Bartlett \textit{et al.} in \cite{bartlett2006convexity}.
\begin{prop}[{\cite[Theorem 6]{bartlett2006convexity}}] \label{Prop:bartlett}
Suppose $\phi: \mathbb{R} \rightarrow \mathbb{R}$ is a convex function in the margin. Then $\phi$ is classification-calibrated if and only if it is differentiable at $0$ and $\phi'(0) < 0$. 
\end{prop}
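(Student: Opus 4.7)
The plan is to prove the biconditional by handling each direction using convex analysis on the conditional $\phi$-risk $C_\phi(\eta, f) = \eta \phi(f) + (1-\eta) \phi(-f)$, which inherits convexity in $f$ from $\phi$. Recall that $\phi$ is classification-calibrated exactly when, for every $\eta \neq 1/2$, every minimizer $f^*_\eta$ of $C_\phi(\eta, \cdot)$ satisfies $\sign(f^*_\eta) = \sign(2\eta-1)$. Throughout, I will use that for a convex $\phi$ the left and right derivatives $\phi'_-(0)$ and $\phi'_+(0)$ exist and $\partial \phi(0) = [\phi'_-(0), \phi'_+(0)]$.

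For the sufficiency direction, suppose $\phi$ is differentiable at $0$ with $\phi'(0) < 0$. I would differentiate $C_\phi(\eta, f)$ in $f$ at $f = 0$, obtaining $(2\eta-1)\phi'(0)$, whose sign is opposite to $\sign(2\eta-1)$. Convexity of $C_\phi(\eta, \cdot)$ in $f$ then forces strict decrease in the direction $\sign(2\eta-1)$ starting from $0$, so every minimizer must lie strictly on that side of $0$, which is exactly classification-calibration.

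For the necessity direction, I would argue by contraposition: if $\phi$ is either non-differentiable at $0$ or $\phi'(0) \geq 0$, I would produce some $\eta \neq 1/2$ for which $f^* = 0$ is a minimizer of $C_\phi(\eta, \cdot)$, violating classification-calibration. A direct chain-rule computation yields
\[
\partial_f C_\phi(\eta, 0) = [\eta \phi'_-(0) - (1-\eta)\phi'_+(0), \; \eta \phi'_+(0) - (1-\eta)\phi'_-(0)],
\]
and $f = 0$ is a minimizer iff $0$ lies in this interval. I would then split into cases: if $\phi'(0)$ exists and equals zero, every $\eta$ gives $0 \in \partial_f C_\phi(\eta,0)$; if $\phi'(0)$ exists and is strictly positive, the derivative at $f=0$ is $(2\eta-1)\phi'(0)$, and by convexity the minimizer for $\eta > 1/2$ is strictly negative, violating calibration; and if $\phi$ is non-differentiable at $0$ one can solve the boundary equation explicitly---e.g.\ the choice $\eta^* = -\phi'_-(0)/(\phi'_+(0) - \phi'_-(0))$ (or its mirror, depending on the sign configuration) places $0$ on the boundary of the subdifferential interval for some $\eta^* \neq 1/2$.

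The main technical obstacle I anticipate is the non-differentiable case when $\phi'_-(0)$ and $\phi'_+(0)$ share a common sign: there one must both verify that the constructed $\eta^*$ lies in $(0,1)$ and differs from $1/2$, and also rule out the existence of any minimizer with the ``correct'' sign, which requires a further subdifferential/monotonicity check of $C_\phi(\eta^*,\cdot)$ on the relevant half-line beyond $0$. The sign-straddling subcase $\phi'_-(0) \leq 0 \leq \phi'_+(0)$ is by contrast easy, since $0 \in \partial \phi(0)$ makes $f=0$ a minimizer of $C_\phi(\eta, \cdot)$ for every $\eta$ simultaneously. Once these subcases are assembled, the whole argument reduces to a clean first-order optimality exercise on a one-dimensional convex function.
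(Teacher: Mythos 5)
First, note that the paper does not actually prove this statement: it is imported verbatim as Theorem~6 of Bartlett \emph{et al.}~\cite{bartlett2006convexity} and used as a black box inside the proof of Theorem~\ref{thm:alphalossclassificationcalibration}, so there is no in-paper argument to compare against. Judged on its own terms, your plan follows the standard route for this result (first-order optimality for the one-dimensional convex conditional risk $C_\phi(\eta,\cdot)$), and the load-bearing parts are sound: the sufficiency direction via the derivative $(2\eta-1)\phi'(0)$ at $f=0$, the subdifferential formula $\partial_f C_\phi(\eta,0) = [\eta\phi'_-(0)-(1-\eta)\phi'_+(0),\, \eta\phi'_+(0)-(1-\eta)\phi'_-(0)]$, and the easy necessity cases ($\phi'(0)=0$, $\phi'(0)>0$, and the straddling case $\phi'_-(0)\le 0\le \phi'_+(0)$) are all correct. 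One minor caveat for sufficiency: since the infimum of $C_\phi(\eta,\cdot)$ need not be attained on $\mathbb{R}$ (e.g.\ exponential loss with $\eta$ near $1$), the argument should be phrased as $\inf_{f(2\eta-1)\le 0}C_\phi(\eta,f) = C_\phi(\eta,0) > \inf_f C_\phi(\eta,f)$, which your "strict decrease from $0$" observation already delivers.

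The genuine gap is the one you flagged yourself: the non-differentiable case with $\phi'_-(0)$ and $\phi'_+(0)$ of the same strict sign is left unfinished, and the one concrete formula you propose for it is not right. Your $\eta^* = -\phi'_-(0)/(\phi'_+(0)-\phi'_-(0))$ solves $\eta\phi'_+(0)+(1-\eta)\phi'_-(0)=0$, which is not an endpoint equation of the interval $\partial_f C_\phi(\eta,0)$ you wrote down, and it can land outside $(0,1)$ (take $\phi'_-(0)=-2$, $\phi'_+(0)=-1$, which gives $\eta^*=2$). The correct boundary values are $\eta = \phi'_+(0)/(\phi'_-(0)+\phi'_+(0))$ and $\eta=\phi'_-(0)/(\phi'_-(0)+\phi'_+(0))$; when both one-sided derivatives share a sign these bracket $1/2$ strictly (because $\phi'_-(0)\neq\phi'_+(0)$), so any $\eta^*\neq 1/2$ strictly between them puts $0$ in the interior of $\partial_f C_\phi(\eta^*,0)$ and makes $f=0$ a global minimizer. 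Under the Bartlett criterion $\inf_{f(2\eta-1)\le 0}C_\phi(\eta,f) > \inf_f C_\phi(\eta,f)$ this already contradicts calibration, with no further monotonicity check needed; only under the weaker "the infimum is attained by \emph{some} correctly signed $f^*$" reading would you additionally have to rule out a second, correctly signed minimizer (which you can do by taking $\eta^*$ in the interior so that $0\in\operatorname{int}\partial_f C_\phi(\eta^*,0)$, forcing $C_\phi(\eta^*,\cdot)$ to be strictly increasing away from a neighborhood of $0$ is not automatic, but the strict inequality version of the criterion sidesteps this entirely). With that endpoint computation fixed and the case assembled, the proof is complete.
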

Observe that $\tilde{l}^{\alpha}$ is smooth and monotonically decreasing for all $\alpha \in (0,\infty]$, and for $\alpha \in (0,1]$, $\tilde{l}^{\alpha}$ is convex by Proposition \ref{Prop:alpha-loss-convex}. Thus, $\tilde{l}^{\alpha}$ satisfies Proposition \ref{Prop:bartlett}, which implies that $\tilde{l}^{\alpha}$ is classification-calibrated for $\alpha \in (0,1)$. 

Now consider $\alpha \in (1, \infty)$. 
Since classification-calibration requires proving that the minimizer of \eqref{eq:mincondrisk} agrees in sign with the Bayes predictor, we 
first obtain the minimizer of the conditional risk for all $\eta \neq 1/2$. We have that
\begin{align} \label{plug2}
\inf\limits_{f \in \mathbb{R}} C_{\tilde{l}^{\alpha}}(\eta,f) &= \inf\limits_{f \in \mathbb{R}} \eta \tilde{l}^{\alpha}(f) + (1-\eta)\tilde{l}^{\alpha}(-f) \\
&= \dfrac{\alpha}{\alpha - 1} \left(1 - \sup\limits_{f\in\mathbb{R}} \left[\eta\left(\dfrac{1}{1 + e^{-f}}\right)^{1 - 1/\alpha} + (1 - \eta)\left(\dfrac{1}{1 + e^{f}}\right)^{1 - 1/\alpha}\right]\right),
\end{align}
where we substituted $\tilde{l}^{\alpha}$ into \eqref{plug2} and pulled the infimum through.
We take the derivative of the expression inside the supremum, which we denote $g(\eta,\alpha,f)$, and obtain
\begin{equation} \label{derg}
\begin{split}
\dfrac{d}{df} g(\eta,\alpha,f) = \left(1 - \dfrac{1}{\alpha}\right)\left(\dfrac{1}{e^{f} + 2 + e^{-f}}\right)\left[\eta\left({1+e^{-f}}\right)^{1/\alpha} - (1-\eta)\left({1+e^{f}}\right)^{1/\alpha}\right].
\end{split}
\end{equation}
One can then obtain the $f^{*}$ minimizing \eqref{plug2} by setting $\dfrac{d}{df} g(\eta,\alpha,f) = 0$, i.e., \begin{equation} \label{deriv}
\eta\left(1+e^{-f^{*}}\right)^{1/\alpha} = (1-\eta)\left(1+e^{f^{*}}\right)^{1/\alpha},
\end{equation} 
and solving for $f^{*}$ we have
\begin{equation} 
\label{attainer} 
f^{*}_{\alpha}(\eta) = \alpha \log{\Big(\dfrac{\eta}{1-\eta}\Big)} = \alpha \cdot \sigma^{-1}(\eta). 
\end{equation} 
Recall that the Bayes predictor is given by $h_{\text{Bayes}}(\eta) = \text{sign}{(2\eta - 1)}$, and notice that the classification function representation is simply
$f_{\text{Bayes}}(\eta) = 2\eta - 1$.
Observe that for all $\eta \neq 1/2$ and for $\alpha \in [1,\infty)$ (indeed $\alpha < 1$ as well), we have that $\text{sign}(f_{\text{Bayes}}(\eta)) = \text{sign}(f^{*}_{\alpha}(\eta))$. Thus, $\tilde{l}^{\alpha}$ is classification-calibrated for $\alpha \in (0,\infty)$.
Lastly, if $\alpha = +\infty$, then $\tilde{l}^{\alpha}$ becomes 
\begin{equation}
\tilde{l}^{\infty}(z) = 1 - \sigma(z) = \dfrac{e^{z}}{1+e^{z}},
\end{equation}
which is sigmoid loss. Similarly, sigmoid loss can be shown to be classification-calibrated as is given in \cite{bartlett2006convexity}.
Therefore, $\tilde{l}^{\alpha}$ is classification-calibrated for all $\alpha \in (0,\infty]$.

Finally, note that the proof of classification-calibration yielded the optimal classification function given in \eqref{attainer} for all $\alpha \in (0,\infty]$. 
Alternatively, the optimal classification function can be obtained from Proposition \ref{Prop:Liao} by Liao \textit{et al.} Specifically, substitute the $\alpha$-tilted distribution \eqref{eq:tilteddistribution} for a binary label $\mathcal{Y} = \{-1,+1\}$ into \eqref{eq:DefInverseSigmoid} as stated by Proposition \ref{Prop:relationhardtosoft}. Indeed, we have that
\begin{align}
f^{*}(x) = \sigma^{-1}(P^{*}_{\hat{Y}|X}(1|x)) = \log{\left(\frac{P_{Y|X}(1|x)^{\alpha}}{P_{Y|X}(-1|x)^{\alpha}} \right)} = \alpha \log{\left(\frac{\eta(x)}{1 - \eta(x)} \right)},
\end{align}
which aligns with \eqref{eq:optimalclassifier}.
\end{proof}

\begin{repcorollary}{cor:condrisk}
For $\alpha \in (0,\infty]$, the minimum conditional risk of $\tilde{l}^{\alpha}$ is given by
\begin{equation} 
C_{\alpha}^{*}(\eta) =
\begin{cases} 
    \frac{\alpha}{\alpha - 1} \left(1 - (\eta^{\alpha} + (1-\eta)^{\alpha})^{1/\alpha} \right) & \alpha \in (0,1) \cup (1,+\infty), \\
    -\eta \log{\eta} - (1-\eta) \log{(1-\eta)} & \alpha = 1, \\
    \min\{\eta, 1- \eta \} & \alpha \rightarrow +\infty.
\end{cases}
\end{equation}
\end{repcorollary}
\begin{proof}
For $\alpha = 1$, we recover logistic loss and we know from \cite{masnadi2009design} and \cite{sypherd2019tunable} that the minimum conditional risk is given by 
\begin{equation}
C_{1}^{*}(\eta) = -\eta \log{\eta} - (1-\eta) \log{(1-\eta)}.
\end{equation}
Similarly, for $\alpha = \infty$, we recover the sigmoid loss and we know from \cite{bartlett2006convexity} and \cite{sypherd2019tunable} that the minimum conditional risk is given by 
\begin{align}
C_{\infty}^{*}(\eta) = \min\{\eta, 1- \eta \}.
\end{align}
Thus, we now consider the case where $\alpha \in (0,\infty) \setminus \{1\}$. The conditional risk of $\tilde{l}^{\alpha}$ is given by
\begin{align} \label{eq:condrisk}
C_{\alpha}(\eta,f) &= \eta \tilde{l}^{\alpha}(f) + (1-\eta)\tilde{l}^{\alpha}(-f) \\
\label{eq:condriskstep1} &= \frac{\alpha}{\alpha-1} \left[1 - \eta \left(\frac{1}{1 + e^{-f}} \right)^{1 - 1/\alpha} - (1-\eta) \left(\frac{1}{1+e^{f}} \right)^{1 - 1/\alpha}  \right],
\end{align}
where we substituted \eqref{eq:marginalphaloss} into \eqref{eq:condrisk}.
We can obtain the minimum conditional risk upon substituting \eqref{eq:optimalclassifier} into \eqref{eq:condriskstep1} which yields
\begin{align}
C_{\alpha}^{*}(\eta) &= \frac{\alpha}{\alpha - 1} \left[1 - \eta \left(\frac{\eta^{\alpha}}{\eta^{\alpha} + (1-\eta)^{\alpha}} \right)^{1-1/\alpha} - (1-\eta) \left(\frac{(1-\eta)^{\alpha}}{\eta^{\alpha} + (1-\eta)^{\alpha}} \right)^{1-1/\alpha} \right] \\
&= \frac{\alpha}{\alpha - 1} \left[1 - (\eta^{\alpha} + (1-\eta)^{\alpha})^{1/\alpha} \right],
\end{align}
where the last equation is obtained after some algebra.
Finally, observe that $C_{1/2}^{*}(\eta) = 2 \sqrt{\eta(1-\eta)}$, which aligns with \cite{masnadi2009design}.
\end{proof}

\subsection{Optimization Guarantees for $\alpha$-loss in the Logistic Model} \label{appen:logisticmodel}
\begin{reptheorem}{Thm:SLQClessthan1}
Let $\Sigma := \mathbb{E}[XX^{\intercal}]$. If $\alpha \in (0,1]$, then $R_{\alpha}(\theta)$ 
is $\Lambda(\alpha,r\sqrt{d}) \min_{i \in [d]}\lambda_{i}\left(\Sigma\right)$-strongly convex in $\theta \in \mathbb{B}_{d}(r)$,
where 
\begin{align}
\Lambda(\alpha,r\sqrt{d}) := \sigma(r\sqrt{d})^{1-1/\alpha}\left(\sigma'(r\sqrt{d}) - \left(1 - \alpha^{-1} \right)\sigma(-r\sqrt{d})^{2}\right).
\end{align}
\end{reptheorem}
\begin{proof}
%
\noindent For each $\alpha \in (0,1]$, it can readily be shown that each component of $F_{2}(\alpha,\theta,x,y)$ is positive and monotonic in $\langle \theta,x \rangle$, which implies that $F_{2}(\alpha,\theta,x,y) \geq \Lambda(\alpha,r\sqrt{d}) > 0$.
%
%
Now, consider $R_{\alpha}(\theta) = \mathbb{E}[l^{\alpha}(Y,g_{\theta}(X))]$. We have
\begin{align}
\nonumber \nabla_{\theta}^{2} R_{\alpha}(\theta) &= \mathbb{E}_{X,Y}[\nabla_{\theta}^{2} l^{\alpha}(Y,g_{\theta}(X))] \\
\label{eq:thm1_0} &= \mathbb{E}_{X,Y}[F_{2}(\alpha,\theta,X,Y) XX^{\intercal}] \\ 
\label{eq:thm1_1} &\succeq \Lambda(\alpha,r\sqrt{d}) \mathbb{E}[XX^{\intercal}] \\
\label{eq:thm1_2} &= \Lambda(\alpha,r\sqrt{d}) \Sigma \succeq 0,
\end{align}
where we used an identity of positive semi-definite matrices for \eqref{eq:thm1_1} (see, e.g., \cite[Ch.~7]{horn2012matrix}); for \eqref{eq:thm1_2}, we used the fact that $\Lambda(\alpha,r\sqrt{d}) \geq 0$ and we recognize that $\Sigma$ is positive semi-definite as it is the 
correlation of the random vector $X \in [0,1]^{d}$ (see, e.g., \cite[Ch.~7]{papoulis2002probability}). We also note that $\min_{i \in [d]}\lambda_{i}\left(\Sigma\right) \geq 0$ (see, e.g., \cite[Ch.~7]{horn2012matrix}).
Thus, $\nabla_{\theta}^{2} R_{\alpha}(\theta)$ is positive semi-definite for every $\theta \in \mathbb{B}_{d}(r)$. 
Therefore, since $\lambda_{\min}(\nabla^{2} R_{\alpha}(\theta)) \geq \Lambda(\alpha,r\sqrt{d}) \min_{i \in [d]}\lambda_{i}\left(\Sigma\right) \geq 0$ for every $\theta \in \mathbb{B}_{d}(r)$, which follows by the Courant-Fischer min-max theorem~\cite[Theorem 4.2.6]{horn2012matrix}, we have that $R_{\alpha}$ is $\Lambda(\alpha,r\sqrt{d}) \min_{i \in [d]}\lambda_{i}\left(\Sigma\right)$-strongly convex for $\alpha \in (0,1]$.
\end{proof}

\begin{repcorollary}{cor:convexforalphagreaterthan1}
Let $\Sigma := \mathbb{E}[XX^{\intercal}]$.
If $r\sqrt{d} < \arcsinh{(1/2)}$, then $R_{\alpha}(\theta)$ is $\tilde{\Lambda}(\alpha,r\sqrt{d}) \min_{i \in [d]}\lambda_{i}\left(\Sigma\right)$-strongly convex in $\theta \in \mathbb{B}_{d}(r)$ for $\alpha \in \left(0,(e^{2r\sqrt{d}}-e^{r\sqrt{d}})^{-1}\right]$, where 
\begin{align}
\tilde{\Lambda}(\alpha,r\sqrt{d}) := \sigma(-r \sqrt{d})^{2-1/\alpha}\sigma(r \sqrt{d})\left(1 - e^{r\sqrt{d}} + \alpha^{-1} e^{-r\sqrt{d}} \right).
\end{align}
\end{repcorollary} 
\begin{proof}
Let $\theta \in \mathbb{B}_{d}(r)$ be arbitrary. We similarly have that 
\begin{align}
\nabla_{\theta}^{2} R_{\alpha}(\theta) &= \mathbb{E}_{X,Y}[\nabla_{\theta}^{2} l^{\alpha}(Y,g_{\theta}(X))] \\
&= \mathbb{E}_{X,Y}[g_{\theta}(YX)^{1-1/\alpha}(g_{\theta}'(YX) - (1-1/\alpha) g_{\theta}(-YX)^{2})XX^{\intercal}] \\
\label{eq:cor2_1} &= \mathbb{E}_{X,Y}[g_{\theta}(YX)^{1-1/\alpha}g_{\theta}(-YX)\left(g_{\theta}(YX) - (1-1/\alpha) g_{\theta}(-YX)\right)XX^{\intercal}], 
\end{align}
where we recall~\eqref{eq:thm1_0} and factored out $g_{\theta}(-YX)$. Considering the expression in parentheses in \eqref{eq:cor2_1}, we note that this is the only part of the Hessian which can become negative.
%
Examining this term more closely, we find that 
\begin{align}
g_{\theta}(YX) - \left(1-\frac{1}{\alpha}\right) g_{\theta}(-YX) &= \frac{1}{1+e^{-\langle \theta, YX \rangle}} - \left(1-\frac{1}{\alpha}\right) \frac{1}{1+e^{\langle \theta, YX \rangle}} \\
&= g_{\theta}(YX) \left[1 - \left(1-\frac{1}{\alpha}\right) \frac{1+e^{-\langle \theta, YX \rangle}}{1+e^{\langle \theta, YX \rangle}} \right] \\
&= \label{eq:cor2important} g_{\theta}(YX) \left[1 - \left(1-\frac{1}{\alpha}\right) e^{-\langle \theta, Y X \rangle} \right].
\end{align}
Continuing, observe that 
\begin{align}
1 - \left(1-\frac{1}{\alpha}\right) e^{-\langle \theta, Y X \rangle} &= 1 - e^{-\langle \theta, Y X \rangle} + \frac{e^{-\langle \theta, Y X \rangle}}{\alpha} \\
\label{eq:cor2_3} &\geq 1 - e^{r\sqrt{d}} + \frac{e^{-r\sqrt{d}}}{\alpha} \geq 0, 
\end{align}
where we lowerbound using the radius of the balls $\langle \theta, YX \rangle \leq |Y| \|\theta\|\|X\| \leq r\sqrt{d}$ (Cauchy-Schwarz) and the last inequality in \eqref{eq:cor2_3} holds if $\alpha \leq e^{-r\sqrt{d}}(e^{r\sqrt{d}} - 1)^{-1}$.
Thus, returning to \eqref{eq:cor2_1}, we have that 
\begin{align}
\nabla_{\theta}^{2} R_{\alpha}(\theta) &= \mathbb{E}_{X,Y}\left[g_{\theta}(YX)^{1-1/\alpha}g_{\theta}(-YX)\left(g_{\theta}(YX) - (1-1/\alpha) g_{\theta}(-YX)\right)XX^{\intercal}\right] \\
\label{eq:cor2_4.5} &= \mathbb{E}_{X,Y}\left[g_{\theta}(YX)^{1-1/\alpha}g_{\theta}'(YX)\left(1 - \left(1-\frac{1}{\alpha}\right) e^{-\langle \theta, Y X \rangle}\right)XX^{\intercal}\right] \\
\label{eq:cor2_4} &\succeq \sigma(-r \sqrt{d})^{2-1/\alpha}\sigma(r \sqrt{d})\left(1 - e^{r \sqrt{d}} + \frac{e^{-r \sqrt{d}}}{\alpha}\right) \mathbb{E}\left[XX^{\intercal}\right] \\
\label{eq:cor2_5} &=\sigma(-r \sqrt{d})^{2-1/\alpha}\sigma(r \sqrt{d})\left(1 - e^{r\sqrt{d}} + \frac{e^{-r\sqrt{d}}}{\alpha}\right) \Sigma \succeq 0,
\end{align}
where in~\eqref{eq:cor2_4.5} we used~\eqref{eq:cor2important} and the fact that $\sigma'(z) = \sigma(z)\sigma(-z)$ (as given in~\eqref{eq:sigprop1}),
and in
\eqref{eq:cor2_4} and \eqref{eq:cor2_5} we use the upper-bound derived above and the same arguments as Theorem \ref{Thm:SLQClessthan1}, \textit{mutatis mudandis}.
%
Thus, if $\alpha \leq e^{-r\sqrt{d}} (e^{r\sqrt{d}} - 1)^{-1}$, then $R_{\alpha}(\theta)$ is $\tilde{\Lambda}(\alpha,r\sqrt{d}) \min_{i \in [d]}\lambda_{i}\left(\Sigma\right)$-strongly convex in $\theta \in \mathbb{B}_{d}(r)$, 
where $\tilde{\Lambda}(\alpha,r\sqrt{d}) := \sigma(-r \sqrt{d})^{2-1/\alpha}\sigma(r \sqrt{d})\left(1 - e^{r\sqrt{d}} + \alpha^{-1} e^{-r\sqrt{d}} \right)$.

Finally, recall that $\sinh(x) = (e^{x} - e^{-x})/2$ and $\arcsinh{x} = \log{(x+\sqrt{x^{2}+1})}$. Observe that $r\sqrt{d} \leq \arcsinh{(1/2)}$ implies that $e^{-r\sqrt{d}} (e^{r\sqrt{d}} - 1)^{-1} \geq 1$. Also note that $e^{-r\sqrt{d}} (e^{r\sqrt{d}} - 1)^{-1}$ is monotonically decreasing in $r\sqrt{d}$ and that $\arcsinh{(1/2)} \approx 0.48$.
\end{proof}
\begin{repprop}{prop:SLQCpriortoevolution}
Suppose that $\Sigma \succ 0$ and $\theta_0 \in \mathbb{B}_{d}(r)$ is fixed. We have one of the following:
\begin{itemize}
    \item If $r\sqrt{d} < \arcsinh{(1/2)}$, then, for every $\epsilon>0$, $R_{\alpha}$ is $(\epsilon,C_{d}(r,\alpha),\theta_0)$-SLQC at $\theta$ for every $\theta\in\mathbb{B}_{d}(r)$ for $\alpha \in \left(0,(e^{2r\sqrt{d}}-e^{r\sqrt{d}})^{-1}\right]$ where $C_{d}(r,\alpha)$ is given in \eqref{eq:alpharisklipintheta};
    \item Otherwise, for every $\epsilon>0$, $R_{\alpha}$ is $(\epsilon,C_{d}(r,\alpha),\theta_0)$-SLQC at $\theta$ for every $\theta\in\mathbb{B}_{d}(r)$ for $\alpha \in (0, 1]$. 
\end{itemize}
\end{repprop}
\begin{proof}
In order to prove the result, we apply a result by Hazan, \textit{et al.}~\cite{hazan2015beyond} where they show that if a function $f$ is $G$-Lipschitz and strictly-quasi-convex, then for all $\epsilon > 0$, $f$ is $(\epsilon,G,\theta_{0})$-SLQC in $\theta$. Thus, one may view $\kappa$ as approximately quantifying the growth of the gradients of general functions.

First, we show that $R_{\alpha}$ is $C_{d}(r,\alpha)$-Lipschitz in $\theta \in \mathbb{B}_{d}(r)$ where 
\begin{align} 
C_{d}(r,\alpha) := \begin{cases}
			\sqrt{d} \sigma(r\sqrt{d})\sigma(-r\sqrt{d})^{1-1/\alpha}, & \alpha \in (0,1] \\
            \sqrt{d} \left(\frac{\alpha-1}{2 \alpha - 1} \right)^{1-1/\alpha} \left(\frac{\alpha}{2\alpha -1} \right), & \alpha \in (1,\infty] \quad \text{and} \quad r\sqrt{d} \geq \log{\left(1 - 1/\alpha \right)} \\
            \sqrt{d} \sigma(r\sqrt{d})\sigma(-r\sqrt{d})^{1-1/\alpha}, & \alpha \in (1,\infty] \quad \text{and} \quad r\sqrt{d} < \log{\left(1 - 1/\alpha \right)}. \\
		 \end{cases}
\end{align}
Formally, we want to show that for all $\theta, \theta' \in \mathbb{B}_{d}(r)$,
\begin{align}
|R_{\alpha}(\theta) - R_{\alpha}(\theta')| \leq C \|\theta - \theta' \|,
\end{align}
where $C:= \sup_{\theta \in \mathbb{B}_{d}(r)} \|\nabla R_{\alpha}(\theta) \|$. 
Recall from~\eqref{eq:alphagradLR} that 
\begin{align}
\nabla_{\theta} R_{\alpha}(\theta) = \mathbb{E}[\nabla_{\theta} l^{\alpha}(Y,g_{\theta}(X)] = \mathbb{E}[F_{1}(\alpha,\theta,X,Y)X],
\end{align}
where from~\eqref{eq:alphaderLR} we have
\begin{align}
F_{1}(\alpha,\theta,x,y) = - y g_\theta(yx)^{1-1/\alpha}(1-g_\theta(yx)).
\end{align}
It can be shown that for $\alpha \leq 1$, $|F_{1}(\alpha,\theta,x,y)| = g_\theta(yx)^{1-1/\alpha}(1-g_\theta(yx))$ is monotonically decreasing in $\langle \theta, x \rangle$. Thus for $\alpha \leq 1$,
\begin{align}
C = \sqrt{d} \sigma(r\sqrt{d})\sigma(-r\sqrt{d})^{1-1/\alpha}.
\end{align}
It can also be shown that for $\alpha > 1$, $|F_{1}(\alpha,\theta,x,y)|$ is unimodal and quasi-concave with maximum obtained at~$\langle \theta, x \rangle^{*} = \log{\left(1 - 1/\alpha \right)}$.
If $r\sqrt{d} \geq \log{\left(1 - 1/\alpha \right)}$, we obtain upon plugging in $\langle \theta, x \rangle^{*}$ for $\alpha > 1$, 
\begin{align}
C = \sqrt{d} \left(\frac{\alpha - 1}{2 \alpha - 1} \right)^{1 - 1/\alpha} \left(\frac{\alpha}{2 \alpha - 1} \right).
\end{align}
Otherwise, if $r\sqrt{d} < \log{\left(1 - 1/\alpha \right)}$, then, using the local monotonicity of $|F_{1}(\alpha,\theta,x,y)|$, we obtain for $\alpha > 1$, 
\begin{align}
C = \sqrt{d} \sigma(r\sqrt{d})\sigma(-r\sqrt{d})^{1-1/\alpha},
\end{align}
which mirrors the $\alpha < 1$ case.
Therefore, combining the two regimes of $\alpha$ we have that $R_{\alpha}$ is $C_{d}(r,\alpha)$-Lipschitz in $\theta \in \mathbb{B}_{d}(r)$ for $\alpha \in (0,\infty]$ where $C_{d}(r,\alpha)$ is given in~\eqref{eq:alpharisklipintheta}.

Finally when $R_{\alpha}$ is strongly-convex, this implies that $R_{\alpha}$ is strictly-quasi-convex.
That is, since $\Sigma \succ 0$, we merely apply Corollary~\ref{cor:convexforalphagreaterthan1} to obtain strong-convexity of $R_{\alpha}$ for $\alpha \in (0,(e^{2r\sqrt{d}}-e^{r\sqrt{d}})^{-1}]$ when $r\sqrt{d} < \arcsinh{(1/2)}$.
Similarly, we apply Theorem~\ref{Thm:SLQClessthan1} to obtain strong-convexity of $R_{\alpha}$ for $\alpha \in (0,1]$, otherwise.
\end{proof}

\subsubsection{Fundamentals of SLQC and Reformulation} \label{appen:prelim&reformSLQC}

In this subsection, we briefly review \textit{strictly locally quasi-convexity} (SLQC) which was introduced by Hazan \textit{et al.} in \cite{hazan2015beyond}. 
Recall that in \cite{hazan2015beyond} Hazan \textit{et al.} refer to a function as SLQC \textit{in $\theta$}, whereas for the purposes of our analysis we refer to a function as SLQC \textit{at} $\theta$. We recover the uniform SLQC notion of Hazan \textit{et al.} by articulating a function is SLQC \textit{at $\theta$ for every $\theta$}. Our later analysis of the $\alpha$-risk in the logistic model benefits from this pointwise consideration.
Intuitively, the notion of SLQC functions extends quasi-convex functions in a parameterized manner.  
Regarding notation, for $\theta_{0}\in\mathbb{R}^d$ and $r>0$, we let $\mathbb{B}(\theta_0,r) := \{\theta\in\mathbb{R}^d : \|\theta-\theta_0\| \leq r\}$.
\begin{repdefinition}{def:SLQC}[{\!\cite[Definition 3.1]{hazan2015beyond}}]
Let $\epsilon,\kappa>0$ and $\theta_{0} \in \mathbb{R}^{d}$. A function $f: \mathbb{R}^{d} \rightarrow \mathbb{R}$ is called $(\epsilon, \kappa, \theta_{0})$-strictly locally quasi-convex (SLQC) at $\theta\in\mathbb{R}^d$ if at least one of the following applies:
\begin{enumerate}
\item $f(\theta) - f(\theta_{0}) \leq \epsilon$,
\item $\|\nabla f(\theta)\| > 0$ and $\langle -\nabla f(\theta), \theta' - \theta \rangle \geq 0$ for every $\theta' \in \mathbb{B}(\theta_{0}, \epsilon/\kappa)$.
\end{enumerate}
\end{repdefinition}
Observe that the notion of SLQC implies quasi-convexity about $\mathbb{B}(\theta_{0}, \epsilon/\kappa)$ on $\{\theta\in\Theta : f(\theta) - f(\theta_{0}) > \epsilon\}$; see Figure \ref{fig:quasiconvexityvsSLQC} for an illustration of the difference between classical quasi-convexity and SLQC in this regime.
In \cite{hazan2015beyond}, Hazan \textit{et al.} note that if a function $f$ is $G$-Lipschitz and strictly-quasi-convex, then for all $\tilde{\theta}_{1},\tilde{\theta}_{2} \in \mathbb{R}^{d}$, for all $\epsilon > 0$, it holds that $f$ is $(\epsilon,G,\tilde{\theta}_{1})$-SLQC at $\tilde{\theta}_{2}$ for every $\tilde{\theta}_{2} \in \mathbb{R}^{d}$; this will be useful in the sequel.

As shown by Hazan \textit{et al.} in \cite{hazan2015beyond}, the convergence guarantees of Normalized Gradient Descent (Algorithm~\ref{algo:NGD}) for SLQC functions are similar to those of Gradient Descent for convex functions.
%
\begin{algorithm}[h]
\caption{Normalized Gradient Descent (NGD)}\label{algo:NGD}
\begin{algorithmic}[1]
\State \textbf{Input:} $T\in\mathbb{N}$ no.\ of iterations, $\theta_{0} \in \mathbb{R}^{d}$ initial parameter, $\eta > 0$ learning rate
\For {$t = 0, 1, \ldots, T-1$}
\State Update: $\theta_{t+1} = \theta_{t} - \eta \dfrac{\nabla f(\theta_{t})}{\|\nabla f(\theta_{t})\|}$
\EndFor
\State \textbf{Return} $\bar{\theta}_{T} = \argmin\limits_{\theta_{1}, \ldots, \theta_{T}} f(\theta_{t})$
\end{algorithmic}
\end{algorithm}

\begin{repprop}{prop:NGDiterations}[{Theorem~4.1},\cite{hazan2015beyond}]
Let $f: \mathbb{R}^{d} \rightarrow \mathbb{R}$, and $\theta^{*} = \argmin_{\theta \in \mathbb{R}^{d}} f(\theta)$. If $f$ is $(\epsilon, \kappa, \theta^{*})$-SLQC at $\theta$ for every $\theta \in \mathbb{R}^{d}$, then Algorithm \ref{algo:NGD} with $T \geq \kappa^{2}\|\theta_{1} - \theta^{*}\|^{2}/\epsilon^{2}$ and $\eta = \epsilon/\kappa$ satisfies that $f(\bar{\theta}_{T}) - f(\theta^{*}) \leq \epsilon$.
\end{repprop}
For an $(\epsilon,\kappa,\theta_0)$-SLQC function, a smaller $\epsilon$ provides better optimality guarantees. Given $\epsilon>0$, smaller $\kappa$ leads to faster optimization as the number of required iterations increases with $\kappa^2$.
Hazan, \textit{et al.}~\cite{hazan2015beyond} show that if a function $f$ is $G$-Lipschitz and strictly-quasi-convex, then for all $\epsilon > 0$, $f$ is $(\epsilon,G,\theta_{0})$-SLQC in $\theta$. Thus, one may view $\kappa$ as approximately quantifying the growth of the gradients of general functions.
Finally, by using projections, NGD can be easily adapted to work over convex and closed sets (e.g., $\mathbb{B}(\theta_0,r)$ for some $\theta_0\in\mathbb{R}^d$ and $r>0$).

We conclude this subsection by studying the behavior of $(\epsilon,\kappa,\theta_0)$-SLQC functions on the ball $\overline{\mathbb{B}_{d}(\theta_0,\epsilon/\kappa)}$, which is articulated by the following novel result.
\begin{prop} \label{prop:epsregioncontainsball}
Let $\epsilon,\kappa>0$ and $\theta_0\in\mathbb{R}^d$.
Suppose that $f$ is $(\epsilon,\kappa,\theta_0)$-SLQC at $\theta \in \mathbb{R}^{d}$.
If $\theta\in\mathbb{B}_{d}(\theta_0,\epsilon/\kappa)$, then $f(\theta) - f(\theta_0) \leq \epsilon$. In particular, if $f$ is $(\epsilon,\kappa,\theta_0)$-SLQC on $\Theta$, then 
\begin{equation*}
    \overline{\mathbb{B}_{d}(\theta_0,\epsilon/\kappa) \cap \Theta} \subset \{\theta\in\Theta : f(\theta) - f(\theta_0) \leq \epsilon\}.
\end{equation*}
\end{prop}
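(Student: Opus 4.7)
The plan is to show that Condition~2 in Definition~\ref{def:SLQC} is incompatible with $\theta$ lying in the open ball $\{\|\theta - \theta_0\| < \epsilon/\kappa\}$, thereby forcing Condition~1 to hold. The ``In particular'' statement will then follow by continuity, extending the open-ball inclusion to the closure.

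First I would handle the strictly interior case. Suppose $\|\theta - \theta_0\| < \epsilon/\kappa$ and $f$ is $(\epsilon,\kappa,\theta_0)$-SLQC at $\theta$. Argue by contradiction: if $f(\theta) - f(\theta_0) > \epsilon$, then Condition~2 must be active, yielding $\|\nabla f(\theta)\| > 0$ together with $\langle -\nabla f(\theta), \theta' - \theta\rangle \geq 0$ for every $\theta' \in \mathbb{B}_{d}(\theta_0, \epsilon/\kappa)$. Because $\theta$ is interior, I can set $\delta := \epsilon/\kappa - \|\theta - \theta_0\| > 0$ and $v := \nabla f(\theta)/\|\nabla f(\theta)\|$; the triangle inequality gives $\|\theta + \delta v - \theta_0\| \leq \|\theta-\theta_0\|+\delta = \epsilon/\kappa$, so $\theta' := \theta + \delta v$ still lies in the ball. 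Plugging this choice into Condition~2 collapses it to $-\delta \|\nabla f(\theta)\| \geq 0$, directly contradicting $\|\nabla f(\theta)\| > 0$.

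For the ``In particular'' claim, apply the interior case pointwise to each element of $\mathbb{B}_{d}(\theta_0, \epsilon/\kappa) \cap \Theta$ lying in the open ball, so every such point sits in the sublevel set $\{\theta \in \Theta : f(\theta) - f(\theta_0) \leq \epsilon\}$. Since $f$ is differentiable and therefore continuous, this sublevel set is relatively closed in $\Theta$, so it contains $\overline{\mathbb{B}_{d}(\theta_0, \epsilon/\kappa) \cap \Theta}$, yielding the stated inclusion.

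The main obstacle I anticipate is that the interior contradiction argument does \emph{not} port over to boundary points $\|\theta - \theta_0\| = \epsilon/\kappa$: there Condition~2 can legitimately be satisfied, since the gradient is permitted to point radially outward from $\theta_0$ (e.g.\ $f(\theta) = \|\theta-\theta_0\|^2$ at any boundary point). The closure/continuity step in the ``In particular'' portion is precisely what sidesteps this difficulty, upgrading the open-ball conclusion to the closed-ball conclusion without ever having to confront the boundary gradient geometry head-on.
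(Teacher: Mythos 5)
Your proposal is correct and follows essentially the same route as the paper's proof: assume Condition~2 holds, perturb $\theta$ a small step in the direction of $\nabla f(\theta)$ to produce a $\theta'$ still inside $\mathbb{B}_d(\theta_0,\epsilon/\kappa)$, derive $-\delta\|\nabla f(\theta)\|>0$ as a contradiction, and then pass to the closure by continuity of $f$. The only cosmetic difference is that you normalize the gradient and take the maximal admissible step $\delta=\epsilon/\kappa-\|\theta-\theta_0\|$, whereas the paper takes an unnormalized step with $\delta$ ``small enough''; your explicit remark that the contradiction argument genuinely fails on the boundary sphere matches the (implicit) structure of the paper's argument.
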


\begin{proof}
Since $f$ is $(\epsilon,\kappa,\theta_0)$-SLQC at $\theta\in\mathbb{R}^d$ we have that at least one condition of Definition~\ref{def:SLQC} holds. Suppose that Condition 2 holds. In this case, we have that $\|\nabla f(\theta)\| > 0$ and $\langle -\nabla f(\theta), \theta' - \theta \rangle \geq 0$ for every $\theta' \in \mathbb{B}(\theta_{0}, \epsilon/\kappa)$. 
Since $\|\theta - \theta_0\| < \epsilon/\kappa$, choose $\delta > 0$ small enough such that
\begin{equation}
    \theta' := \theta + \delta \nabla f(\theta) \in \mathbb{B}(\theta_{0}, \epsilon/\kappa).
\end{equation}
Thus, we have that
\begin{align}
\nonumber 0 &\leq \langle -\nabla f(\theta), \theta' - \theta \rangle \\
\nonumber & = \langle -\nabla f(\theta), \theta + \delta \nabla f(\theta) - \theta \rangle \\
\nonumber & = - \delta \langle \nabla f(\theta), \nabla f(\theta) \rangle \\
& = - \delta \|\nabla f(\theta)\|^{2},
\end{align}
which is a contradiction since $\delta>0$ and $\|\nabla f(\theta)\| > 0$.
Therefore, we must have that Condition~1 of Definition~\ref{def:SLQC} holds, i.e., $f(\theta) - f(\theta_0) \leq \epsilon$. Finally, a continuity argument shows that $f(\theta) - f(\theta_0) \leq \epsilon$ whenever $\theta\in\overline{\mathbb{B}_{d}(\theta_0,\epsilon/\kappa) \cap \Theta}$.
\end{proof}

The following is the formal statement and proof of Lemma~\ref{lemma:SLQCreformulation}, which provides a useful characterization of the gradient of $(\epsilon,\kappa,\theta_0)$-SLQC functions outside the set $\overline{\mathbb{B}_{d}(\theta_{0},\epsilon/\kappa)}$. 
Refer to Figure~\ref{fig:LemmaGoodGradients} for a visual description of the relavant quantities.
\begin{figure}[h]
\centering
\begin{tikzpicture}
        \coordinate (A) at (5, 0) {};
        \coordinate (B) at (4.5500,-1.4309) {};
        \coordinate (C) at (4.4500, -1.4000) {};
\filldraw[color=black, fill=none, very thick](5,0) circle (1.5); 
\filldraw[black] (5,0) circle (2pt) node[anchor=west] {$\theta_{0}$};
\draw[black, very thick] (0,0) -- (5,0);
\filldraw [black] (0,0) circle (2pt) node[anchor=east] {$\theta$};
\draw[black, very thick] (0,0) -- (4.5500,-1.4309);
\filldraw [black] (4.5500,-1.4309) circle (2pt) node[anchor=north] {$\theta'$};
\draw[black, very thick] (5,0) -- (4.5500,-1.4309) node[right,midway] {$\rho$};
\tkzMarkRightAngle[draw=black,size=.25, very thick](A,B,C);
\draw[black, very thick, ->, name path=B] (0,0) -- (1.25,1.0625) node[anchor=south] {$-\nabla f(\theta)$}; 
\draw[black, thick] (.3,0) arc (0:40:.3) node[anchor=west] {$\psi$};
\draw[black, thick] (1,-.33) arc (-30:50:.8) node[anchor=west,midway] {$\phi$};
\draw[black, thick] (1.5,-.475) arc (-45:27.5:.4) node[anchor=west,midway] {$\delta$};
%
%
\end{tikzpicture}
\caption{A companion illustration for Lemma~\ref{lemma:SLQCreformulation} which depicts the relevant quantities involved. Note that there are three different configurations of the angles $\delta$, $\phi$ and $\psi$. Refer to Figure~\ref{fig:slqcreformulationproof} for this illustration.}
    \label{fig:LemmaGoodGradients}
\end{figure}

\begin{replemma}{lemma:SLQCreformulation}
Assume that $f:\mathbb{R}^d\to\mathbb{R}$ is differentiable, $\theta_{0} \in \mathbb{R}^d$ and $\rho>0$. If $\theta\in\mathbb{R}^d$ is such that $\|\theta-\theta_0\| > \rho$ and $\|\nabla f(\theta)\| > 0$, then the following are equivalent:
\begin{itemize}
    \item[\textnormal{(1)}] $\langle -\nabla f(\theta), \theta' - \theta \rangle > 0$ for all $\theta' \in \mathbb{B}_{d}\left(\theta_{0},\rho\right)$;
    
    \item[\textnormal{(2)}] $\langle -\nabla f(\theta), \theta' - \theta \rangle \geq 0$ for all $\theta' \in \mathbb{B}_{d}\left(\theta_{0},\rho\right)$;
    
    \item[\textnormal{(3)}] $\langle -\nabla f(\theta), \theta_{0} - \theta \rangle \geq \rho \|\nabla f(\theta)\|$. 
\end{itemize}
\end{replemma}

    
    

\begin{proof}
Clearly \textnormal{(1)} $\Rightarrow$ \textnormal{(2)}.
\textnormal{(2)} $\Rightarrow$ \textnormal{(3)}: Let $\theta'$ be the point of tangency of a line tangent to $\overline{\mathbb{B}_{d}(\theta_0,\rho)}$ passing through $\theta$, as depicted in Figure~\ref{fig:LemmaGoodGradients}. We define
\begin{itemize}
    \item[$\delta$:] the angle between $\theta_{0} - \theta$ and $\theta' - \theta$;
    
    \item[$\phi$:] the angle between $-\nabla f(\theta)$ and $\theta' - \theta$;
    
    \item[$\psi$:] the angle between $-\nabla f(\theta)$ and $\theta_0 - \theta$.
\end{itemize}
Recall that the inner product satisfies that
\begin{equation}
\label{eq:GoodGradientsInnerProduct}
    \langle u, v\rangle = \|u\| \|v\| \cos(\varphi_{u,v}),
\end{equation}
where $\varphi_{u,v}\in[0,\pi]$ is the angle between $u$ and $v$. By continuity and Condition~(2),
\begin{equation}
    \|\nabla f(\theta)\| \|\theta' - \theta\| \cos(\phi) = \langle -\nabla f(\theta), \theta' - \theta\rangle \geq 0,
\end{equation}
which implies that $\phi \leq \frac{\pi}{2}$. Observe that, by construction, $\phi = \psi + \delta$. In particular, we have that $\psi \leq \frac{\pi}{2} - \delta$. Since $\cos(\cdot)$ is decreasing over $[0,\pi]$, we have that
\begin{equation}
    \cos(\psi) \geq \cos\left(\frac{\pi}{2} - \delta\right) = \sin(\delta).
\end{equation}
Since the triangle $\triangle \theta\theta'\theta_0$ is a right triangle, we have that $\sin(\delta) = \frac{\rho}{\|\theta_{0} - \theta\|}$ and thus
\begin{equation}
\label{eq:GoodGradientsPsi}
    \cos(\psi) \geq \frac{\rho}{\|\theta_{0} - \theta\|}.
\end{equation}
Therefore, we conclude that
\begin{equation}
    \langle -\nabla f(\theta), \theta_{0} - \theta\rangle = \|\nabla f(\theta)\| \|\theta_{0} - \theta\| \cos(\psi) \geq \rho \|\nabla f(\theta)\|,
\end{equation}
as we wanted to prove.

\textnormal{(3)} $\Rightarrow$ \textnormal{(1)}: For a given $\theta'\in\mathbb{B}_{d}(\theta_{0},\rho)$, we define $\psi$, $\phi$ and $\delta$ as above. By assumption,
\begin{equation}
\label{eq:GoodGradientsPrePsiBound}
    \|\nabla f(\theta)\| \|\theta_{0} - \theta\| \cos(\psi) = \langle -\nabla f(\theta), \theta_{0} - \theta\rangle \geq \rho \|\nabla f(\theta)\| \geq 0.
\end{equation}
Since $\cos^{-1}(\cdot)$ is decreasing over $[-1,1]$, \eqref{eq:GoodGradientsPrePsiBound} implies that
\begin{equation}
\label{eq:GoodGradientsPsiBound}
    \psi \leq \cos^{-1}\left(\frac{\rho}{\|\theta_{0} - \theta\|}\right).
\end{equation}
Also, an immediate application of the law of cosines shows that
\begin{equation}
    \delta = \cos^{-1}\left(\frac{\|\theta_{0} - \theta\|^2 + \|\theta' - \theta\|^2 - \|\theta' - \theta_{0}\|^2}{2 \|\theta_{0} - \theta\| \|\theta' - \theta\|}\right).
\end{equation}
Since $\|\theta' - \theta_{0}\| < \rho$, we have that
\begin{equation}
\label{eq:GoodGradientsOptimizationDelta}
    \delta < \cos^{-1}\left(\frac{\|\theta_{0} - \theta\|^2 + \|\theta' - \theta\|^2 - \rho^2}{2 \|\theta_{0} - \theta\| \|\theta' - \theta\|}\right).
\end{equation}
A routine minimization argument further implies that
\begin{equation}
\label{eq:GoodGradientsDeltaBound}
    \delta < \cos^{-1}\left(\sqrt{1 - \left(\frac{\rho}{\|\theta_{0} - \theta\|}\right)^2}\right) = \sin^{-1}\left(\frac{\rho}{\|\theta_{0} - \theta\|}\right),
\end{equation}
where the equality follows from the trigonometric identity $\cos(\sin^{-1}(x)) = \sqrt{1 - x^2}$. Observe that, in order to prove that
\begin{equation}
    \langle -\nabla f(\theta), \theta' - \theta\rangle = \|\nabla f(\theta)\| \|\theta'- \theta\| \cos(\phi) > 0,
\end{equation}
it is enough to show that $\phi < \frac{\pi}{2}$. Depending on the position of $\theta'$, the angles $\delta$, $\phi$ and $\psi$ can be arranged in three different configurations, as depicted in Figure~\ref{fig:slqcreformulationproof}. 
\begin{itemize}
    \item[a)] Since $\frac{\rho}{\|\theta_{0} - \theta\|} > 0$, \eqref{eq:GoodGradientsPsiBound} implies that $\psi < \frac{\pi}{2}$. Therefore, $\phi < \frac{\pi}{2}$ as $\phi \leq \psi$.
    
    \item[b)] Since $\frac{\rho}{\|\theta_{0} - \theta\|} < 1$, \eqref{eq:GoodGradientsDeltaBound} implies that $\delta < \frac{\pi}{2}$. Therefore, $\phi < \frac{\pi}{2}$ as $\phi \leq \delta$.
    
    \item[c)] Since $\sin^{-1}(x) + \cos^{-1}(x) = \frac{\pi}{2}$, \eqref{eq:GoodGradientsPsiBound} and \eqref{eq:GoodGradientsDeltaBound} imply that $\phi = \psi + \delta < \frac{\pi}{2}$.
\end{itemize}
Since in all cases $\phi < \frac{\pi}{2}$, the result follows.
\end{proof}


\begin{figure}[h]
\centering
\begin{tikzpicture}
        \coordinate (A) at (-5, 0) {};
        \coordinate (B) at (0,0) {};
        \coordinate (C) at (5, 0) {};
\draw[black, very thick] (A) -- (-2,0) node[anchor = north,midway] {a)};
\draw[black, very thick] (A) -- (-2,2);
\draw[black, very thick] (A) -- (-2,1);
\draw[black, thick] (-4,0) arc (-30:50:.25) node[pos=.6,right] {$\phi$};
\draw[black, thick] (-3.25,0) arc (-30:62.5:.7) node[near start, right] {$\psi$};
\draw[black, thick] (-2.5,.8333) arc (-30:62.5:.5) node[pos=.6,right] {$\delta$};
\draw[black, very thick] (B) -- (3,0) node[anchor = north,midway] {b)};
\draw[black, very thick] (B) -- (3,2);
\draw[black, very thick] (B) -- (3,1);
\draw[black, thick] (1,0) arc (-30:50:.25) node[pos=.6,right] {$\psi$};
\draw[black, thick] (1.75,0) arc (-30:62.5:.7) node[near start, right] {$\delta$};
\draw[black, thick] (2.5,.8333) arc (-30:62.5:.5) node[pos=.6,right] {$\phi$};
\draw[black, very thick] (C) -- (8,0) node[anchor = north,midway] {c)};
\draw[black, very thick] (C) -- (8,2);
\draw[black, very thick] (C) -- (8,1);
\draw[black, thick] (6,0) arc (-30:50:.25) node[pos=.6,right] {$\delta$};
\draw[black, thick] (6.75,0) arc (-30:62.5:.7) node[near start, right] {$\phi$};
\draw[black, thick] (7.5,.8333) arc (-30:62.5:.5) node[pos=.6,right] {$\psi$};
\end{tikzpicture}
\caption{Three different configurations of the angles $\delta$, $\phi$ and $\psi$.}
    \label{fig:slqcreformulationproof}
\end{figure}

\subsubsection{Lipschitz Inequalities in $\alpha^{-1}$ and Main SLQC Result for the $\alpha$-risk} \label{appen:lemmas&mainSLQC}

\begin{replemma}{lemma:inversealphalip}
If $\alpha, \alpha' \in [1,\infty]$, then for all $\theta \in \mathbb{B}_{d}(r)$,
\begin{align}
|R_{\alpha}(\theta) - R_{\alpha'}(\theta)| \leq L_{d}(\theta) \left|\frac{\alpha-\alpha'}{\alpha \alpha'}\right| \quad \text{and} \quad \|\nabla R_{\alpha}(\theta) - \nabla R_{\alpha'}(\theta)\| \leq J_{d}(\theta) \left|\frac{\alpha- \alpha'}{\alpha \alpha'}\right|,
\end{align}
where, 
\begin{align} 
L_{d}(\theta) := \dfrac{\left(\log{\left(1 + e^{\|\theta\|\sqrt{d}}\right)}\right)^{2}}{2} \quad \text{and} \quad J_{d}(\theta) := \sqrt{d} \log{\left(1+e^{\|\theta \| \sqrt{d}}\right)} \sigma(\|\theta\|\sqrt{d}).
\end{align}
\end{replemma}
\begin{proof} 
Here, we present proofs for both Lipschitz inequalities. 
\newline \textbf{Proof of First Inequality:}
For ease of notation, we let $\beta = 1/\alpha$. 
Thus, we have that for $\alpha \in [1,\infty]$, i.e., $\beta \in [0,1]$,
\begin{equation}
R_{\alpha}(\theta) = \mathbb{E}[l^{\alpha}(Y,g_{\theta}(X))] = \mathbb{E}\left[\frac{1}{1-\beta}\left(1 - g_{\theta}(yx)^{1-\beta}\right)\right] = R_{\beta}(\theta).
\end{equation}
To show that $R_{\alpha}$ is Lipschitz in $\alpha^{-1} = \beta \in [0,1]$, it suffices to show $\dfrac{d}{d\beta} R_{\beta}(\theta) \leq L$ for some $L > 0$.
Observe that 
\begin{equation}
\dfrac{d}{d\beta} R_{\beta}(\theta) = \mathbb{E}\left[\dfrac{d}{d\beta} \frac{1}{1-\beta}\left(1 - g_{\theta}(yx)^{1-\beta}\right)\right],
\end{equation}
where the equality follows since we assume well-behaved integrals. 
Consider without loss of generality the expression in the brackets; we denote this expression as 
\begin{equation}
f(\beta,\theta,yx) =\dfrac{d}{d\beta} \frac{1}{1-\beta}\left(1 - g_{\theta}(yx)^{1-\beta}\right).
\end{equation}
It can be shown that 
\begin{align}
f(\beta,\theta,yx) = \dfrac{g_{\theta}(yx)^{1-\beta}\log\left(g_{\theta}(yx)\right)}{1-\beta}+\dfrac{1-g_{\theta}(yx)^{1-\beta}}{\left(1-\beta\right)^2}
\end{align}
and 
\begin{equation}
f(1,\theta,yx) = \frac{\left(\log{g_{\theta}(yx)}\right)^{2}}{2}.
\end{equation}
In addition, it can be shown that for any $y \in \{-1,+1\}$, $x \in [0,1]^{d}$, and $\theta \in \mathbb{B}_{d}(r)$ that $f(\beta,\theta,yx)$ is monotonically increasing in $\beta \in [0,1]$. 
Therefore, for any $\beta \in [0,1]$, $y \in \{-1,+1\}$, $x \in [0,1]^{d}$, and $\theta \in \mathbb{B}_{d}(r)$,
\begin{equation}
f(\beta,\theta,yx) \leq f(1,\theta,yx) = \frac{\left(\log{g_{\theta}(yx)}\right)^{2}}{2} \leq \frac{\left(\log{\sigma(- \|\theta\|\sqrt{d})}\right)^{2}}{2}.
\end{equation}

\noindent \textbf{Proof of Second Inequality:}
For ease of notation, we let $\beta = 1/\alpha$. Since $\alpha \in [1,\infty]$, $\beta \in [0,1]$. Thus, we have that for $\alpha \in [1,\infty]$, i.e., $\beta \in [0,1]$,
\begin{equation}
\nabla R_{\alpha}(\theta) = \mathbb{E}[F_{1}(\alpha,\theta,X,Y)X] = \mathbb{E}[- Y g_\theta(YX)^{1-\beta}(1-g_\theta(YX))X],
\end{equation}
and we let $\tilde{F}_1(\beta,\theta,X,Y) := - Y g_\theta(YX)^{1-\beta}(1-g_\theta(YX))$.
For any $\theta \in \mathbb{B}_{d}(r)$ we have 
\begin{align}
\nonumber \|\nabla R_{{\alpha}} (\theta) -  \nabla R_{{\alpha'}} (\theta)\| &= \|\mathbb{E}[(\tilde{F}_1(\beta,\theta,X,Y) - \tilde{F}_1(\beta',\theta,X,Y))X] \|\\ 
\nonumber &\leq \mathbb{E}[|(\tilde{F}_1(\beta,\theta,X,Y) - \tilde{F}_1(\beta',\theta,X,Y))|\|X\|]\\ 
&\leq \sqrt{d} \mathbb{E}[|(\tilde{F}_1(\beta,\theta,X,Y) - \tilde{F}_1(\beta',\theta,X,Y))|],
\end{align}
where we used the fact that $X$ has support $[0,1]^{d}$ for the second inequality.
Here, we obtain a Lipschitz inequality on $\tilde{F}_{1}$ by considering the variation of $\tilde{F}_{1}$ with respect to $\beta$ for any $\theta \in \mathbb{B}_{d}(r)$, $x \in [0,1]^{d}$, and $y \in \{-1,+1\}$. 
Taking the derivative of $\tilde{F}_{1}(\beta,\theta,x,y)$ with respect to $\beta$ we obtain
\begin{align}
\nonumber \dfrac{d}{d \beta} F_{1}(\beta,\theta,x,y) &= \dfrac{d}{d \beta} - y g_\theta(yx)^{1-\beta}(1-g_\theta(yx)) \\
\nonumber &= y(1-g_\theta(yx)) g_\theta(yx)^{1-\beta} \log{g_{\theta}(yx)},
\end{align}
where we used the fact that $\dfrac{d}{dx} a^{1-x} = -a^{1-x} \log{a}$. Continuing, we have
\begin{align}
y(1-g_\theta(yx)) g_\theta(yx)^{1-\beta} \log{g_{\theta}(yx)} \nonumber &\leq \log{\left(1+e^{\|\theta \| \sqrt{d}}\right)} \sigma(\|\theta \| \sqrt{d}) \sigma(\|\theta\|\sqrt{d})^{1-\beta} \\
&= \log{\left(1+e^{\|\theta \| \sqrt{d}}\right)} \sigma(\|\theta\|\sqrt{d})^{2-\beta} \\
&\leq \log{\left(1+e^{\|\theta \| \sqrt{d}}\right)} \sigma(\|\theta\|\sqrt{d}).
\end{align}
Thus, we have that, for any $\theta \in \mathbb{B}_{d}(r)$, 
\begin{equation}
\|\nabla R_{{\alpha}} (\theta) -  \nabla R_{{\alpha'}} (\theta)\| \leq J_{d}(\theta) |\beta - \beta'|,
\end{equation}
where $\beta, \beta' \in [0,1]$ ($\alpha, \alpha' \in [1,\infty]$). Therefore, we have that, for any $\theta \in \mathbb{B}_{d}(r)$, 
\begin{equation}
\|\nabla R_{{\alpha}} (\theta) -  \nabla R_{{\alpha'}} (\theta)\| \leq J_{d}(\theta) \left|\dfrac{1}{\alpha} - \dfrac{1}{\alpha'}\right|,
\end{equation}
where $\alpha, \alpha' \in [1,\infty]$. 
\end{proof}
\begin{figure}[h]
    \centering
    \centerline{\includegraphics[width=.5\linewidth]{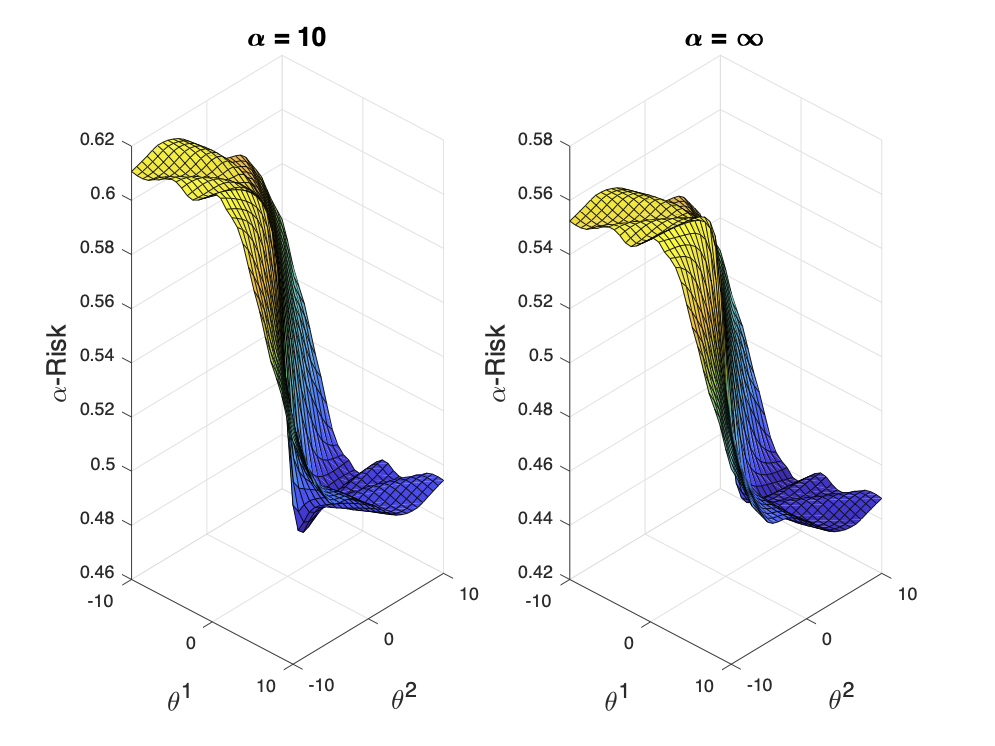}}
    \caption{Another illustration highlighting the saturation of $\alpha$-loss ($R_{\alpha}$ for $\alpha = 10, \infty$) in the logistic model for a 2D-GMM with $\mathbb{P}[Y=1]=\mathbb{P}[Y=-1]$, $\mu_{X|Y=-1} =[.5, .5]$, $\mu_{X|Y=1} = [1,1]$, and shared covariance matrix $\Sigma = [1, .5; .5, 3]$. }
\end{figure}
\begin{figure}[h] 
    \centering
        \begin{subfigure}[b]{0.475\textwidth}
            \centering
            \includegraphics[width=.85\textwidth]{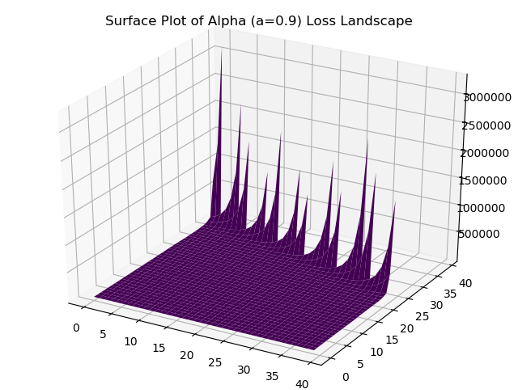}
            \caption[Network2]%
            {{\small $\alpha = .9$ loss landscape}}    
            \label{fig:mean and std of net14}
        \end{subfigure}
        \hfill
        \begin{subfigure}[b]{0.475\textwidth}  
            \centering 
            \includegraphics[width=.85\textwidth]{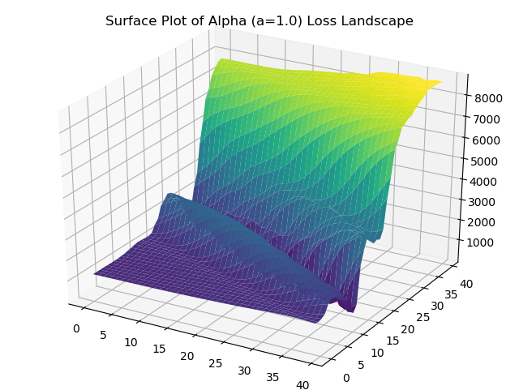}
            \caption[]%
            {{\small $\alpha = 1$ loss landscape}}    
            \label{fig:mean and std of net24}
        \end{subfigure}
        \vskip\baselineskip
        \begin{subfigure}[b]{0.475\textwidth}   
            \centering 
            \includegraphics[width=.85\textwidth]{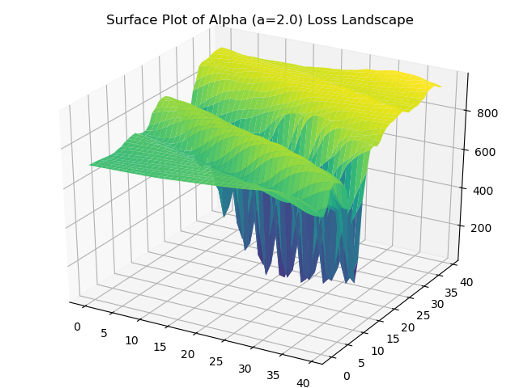}
            \caption[]%
            {{\small $\alpha = 2$ loss landscape}}    
            \label{fig:mean and std of net34}
        \end{subfigure}
        \hfill
        \begin{subfigure}[b]{0.475\textwidth}   
            \centering 
            \includegraphics[width=.85\textwidth]{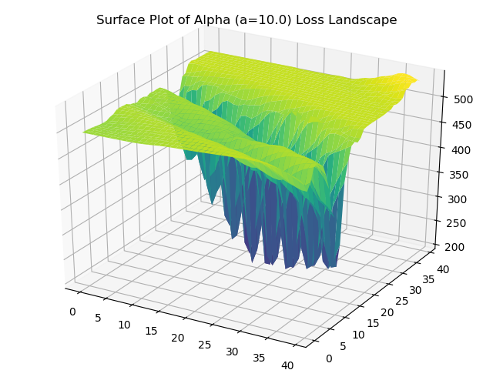}
            \caption[]%
            {{\small $\alpha = 10$ loss landscape}}    
            \label{fig:mean and std of net44}
        \end{subfigure}
    \caption{
    Loss landscape visualizations using~\cite{li2017visualizing} for $\alpha \in \{.9,1,2,10\}$ training a ResNet-18 on the MNIST dataset. The visualization technique finds two ``principal directions'' of the model to allow for a 3D plot. We note that similar themes as theoretically articulated in Section~\ref{sec:landscapelogisticmodel} for the simpler logistic model are also evident here; i.e., exploding gradients for $\alpha$ too small, a loss of convexity (and increasing ``flatness'') as $\alpha$ increases greater than $1$, and also a saturation effect as exhibited by the visual similarity between the $\alpha=2$ and $\alpha =10$ loss landscapes. This hints at the generality of the theory presented in Section~\ref{sec:landscapelogisticmodel}.
}
    \label{fig:fourmorelandscapes}
\end{figure}
\begin{reptheorem}{thm:SLQCresult1}
Let $\alpha_{0} \in [1,\infty]$, $\epsilon_{0}, \kappa_{0}>0$, and $\theta_{0}, \theta \in \mathbb{B}_{d}(r)$.  
If $R_{{\alpha_{0}}}$ is $(\epsilon_{0},\kappa_{0},\theta_{0})$-SLQC at $\theta$ and
\begin{equation} 
0 \leq \alpha-\alpha_{0} < \dfrac{\alpha_{0}^{2} \|\nabla R_{\alpha_{0}}(\theta)\|}{2J_{d}(\theta)\left(1 + r \frac{\kappa_{0}}{\epsilon_{0}}\right)}, 
\end{equation}
then $R_{{\alpha}}$ is $(\epsilon,\kappa,\theta_{0})$-SLQC at $\theta$ with 
\begin{equation} \label{eq:thmeps}
\epsilon = \epsilon_{0} +2L_{d}(\theta)\left(\frac{\alpha-\alpha_{0}}{\alpha \alpha_{0}}\right),
\end{equation}
and 
\begin{align} \label{eq:thmrho}
\dfrac{\epsilon}{\kappa} = \frac{\epsilon_{0}}{\kappa_{0}} \left(1  - \frac{\left(1 +2r\frac{\kappa_{0}}{\epsilon_{0}}\right)J_{d}(\theta)(\alpha-\alpha_{0})}{\alpha\alpha_{0}\|\nabla R_{\alpha_{0}}(\theta)\| - J_{d}(\theta)(\alpha-\alpha_{0})}\right).
\end{align}
\end{reptheorem}
\begin{proof}
For ease of notation let $\displaystyle \rho_{0} = \frac{\epsilon_{0}}{\kappa_{0}}$ and $\displaystyle \rho = \frac{\epsilon}{\kappa}$, and consider the following two cases.

\noindent \textbf{Case 1}: Assume that $R_{\alpha_{0}}(\theta) - R_{\alpha_{0}}(\theta_{0}) \leq \epsilon_{0}$. Then,
\begin{align}
\nonumber R_{\alpha}(\theta) - R_{\alpha}(\theta_{0}) &= R_{\alpha}(\theta) - R_{\alpha_{0}}(\theta) + R_{\alpha_{0}}(\theta) - R_{\alpha_{0}}(\theta_{0}) + R_{\alpha_{0}}(\theta_{0}) - R_{\alpha}(\theta_{0}) \\
&\leq L_{d}(\theta) \left(\frac{\alpha-\alpha_{0}}{\alpha \alpha_{0}}\right) + \epsilon_{0} + L_{d}(\theta)\left(\frac{\alpha-\alpha_{0}}{\alpha \alpha_{0}}\right).
\end{align}
Since $\epsilon_{0} +2L_{d}(\theta)\left(\frac{\alpha-\alpha_{0}}{\alpha \alpha_{0}}\right) = \epsilon$, we have $R_{\alpha}(\theta) - R_{\alpha}(\theta_{0}) \leq \epsilon$. 

\noindent \textbf{Case 2}:
Assume that $R_{\alpha_{0}}(\theta) - R_{\alpha_{0}}(\theta_{0}) > \epsilon_{0}$. Since $R_{\alpha_{0}}$~is $(\epsilon_{0},\kappa_{0},\theta_{0})$-SLQC at $\theta$ by assumption, we have that $\|\nabla R_{\alpha_{0}}(\theta)\| > 0$ and $\langle -\nabla R_{\alpha_{0}}(\theta), \theta' - \theta \rangle \geq 0$ for every $\theta' \in \mathbb{B}(\theta_{0}, \rho_{0})$.

Let $\rho = \epsilon/\kappa$ be given as in \eqref{eq:thmrho}. If we can show that $\|\theta - \theta_{0}\|>\rho$, $\|\nabla R_{\alpha}(\theta)\|>0$ and
\begin{equation}
\label{eq:ProofSLQCresult1}
\langle-\nabla R_{\alpha}(\theta), \theta_{0} - \theta \rangle \geq \rho \|\nabla R_{\alpha}(\theta) \|,
\end{equation}
then Lemma~\ref{lemma:SLQCreformulation}
would imply that $R_{\alpha}$ is $(\epsilon,\kappa,\theta_{0})$-SLQC at $\theta$. 
In order to show these three expressions, we make ample use of the following three inequalities:
The first is the reverse triangle inequality associated with $\nabla R_{\alpha}$ and $\nabla R_{\alpha_{0}}$, i.e., 
\begin{align} \label{eq:reversetriangleinequality}
\|\nabla R_{\alpha_{0}}(\theta) - \nabla R_{\alpha}(\theta)\| \geq \lvert \|\nabla R_{\alpha}(\theta)\| - \|\nabla R_{\alpha_{0}}(\theta)\| \lvert.
\end{align}
The second is the fact that $\nabla R_{\alpha}(\theta)$ is $J_{d}(\theta)$-Lipschitz in $\alpha^{-1}$, i.e.,
\begin{align} \label{eq:slqcjdlip}
\left|\frac{1}{\alpha_{0}} - \frac{1}{\alpha} \right| J_{d}(\theta) \geq \|\nabla R_{\alpha_{0}}(\theta) - \nabla R_{\alpha}(\theta)\|.
\end{align}
The third follows from a manipulation of~\eqref{eq:radchange}, i.e.,
\begin{equation} \label{eq:slqclowerbound}
    \lVert \nabla R_{\alpha_0}(\theta) \rVert > 2J_{d}(\theta) \left(1 + r \rho_{0}^{-1} \right) (\alpha_{0}^{-1} - \alpha^{-1}) > J_{d}(\theta) (\alpha_{0}^{-1} - \alpha^{-1}),
\end{equation}
which uses the fact that $\alpha_{0}^{2} \leq \alpha \alpha_{0}$ and since $r\rho_{0}^{-1} \geq 1$.
With these inequalities in hand, we are now in a position to complete the three steps required to show that $R_{\alpha}$ is $(\epsilon,\kappa,\theta_{0})$-SLQC at $\theta$. 

First, we show that $\|\theta - \theta_{0}\|>\rho$. 
Since $R_{{\alpha_{0}}}$ is $(\epsilon_{0},\kappa_{0},\theta_{0})$-SLQC at $\theta$ and $R_{\alpha_{0}}(\theta) - R_{\alpha_{0}}(\theta_{0}) > \epsilon_{0}$ by assumption, we have by the contrapositive of Proposition~\ref{prop:epsregioncontainsball} that $\theta \notin \mathbb{B}_{d}(\theta_{0},\rho_{0})$.
Thus, we have that $\|\theta - \theta_{0}\|>\rho_{0}$.
Next, note that $\rho$ is related to $\rho_{0}$ by~\eqref{eq:thmrho}. If we can show that $\rho_{0} > \rho$, then we have the desired conclusion. 
Rearranging the left-hand-side of~\eqref{eq:slqclowerbound}, we have that 
\begin{align}
\|\nabla R_{\alpha_{0}}(\theta)\| (\alpha_{0}^{-1}-\alpha^{-1})^{-1} > 2 J_{d}(\theta) (1+r\rho_{0}^{-1}),
\end{align}
which can be rewritten to obtain
\begin{align} \label{eq:slqcthmint1}
\|\nabla R_{\alpha_{0}}(\theta)\| (\alpha_{0}^{-1}-\alpha^{-1})^{-1} - J_{d}(\theta) > J_{d}(\theta) (1+2r\rho_{0}^{-1}).
\end{align}
Since by the right-hand-side of~\eqref{eq:slqclowerbound} we have that 
\begin{align}
  \lVert \nabla R_{\alpha_0}(\theta) \rVert (\alpha_{0}^{-1} - \alpha^{-1})^{-1} - J_{d}(\theta) > 0,
\end{align}
it follows by~\eqref{eq:slqcthmint1} that 
\begin{align} \label{eq:slqcthmint2}
1 > \frac{J_{d}(\theta) (1+2r\rho_{0}^{-1})}{\|\nabla R_{\alpha_{0}}(\theta)\| (\alpha_{0}^{-1}-\alpha^{-1})^{-1} - J_{d}(\theta)}.
\end{align}
Thus examining~\eqref{eq:thmrho} in light of~\eqref{eq:slqcthmint2}, we have that $\rho_{0} > \rho$, which implies that 
$\|\theta - \theta_{0}\|>\rho$, as desired.

Second, we show that $\|\nabla R_{\alpha}(\theta)\|>0$. 
Applying~\eqref{eq:reversetriangleinequality} to~\eqref{eq:slqcjdlip} we obtain 
\begin{align} \label{eq:thm2_3}
\|\nabla R_{\alpha}(\theta)\| \geq \|\nabla R_{\alpha_{0}}(\theta)\| - J_{d}(\theta)(\alpha_{0}^{-1} - \alpha^{-1}) > 0,
\end{align}
where the right-hand-side inequality again follows by~\eqref{eq:slqclowerbound}.
Thus, we have that $\|\nabla R_{\alpha}(\theta) \| > 0$, as desired.

Finally, we show the expression in~\eqref{eq:ProofSLQCresult1}, i.e., $\langle-\nabla R_{\alpha}(\theta), \theta_{0} - \theta \rangle \geq \rho \|\nabla R_{\alpha}(\theta) \|$.
%
%
By the Cauchy-Schwarz inequality, 
\begin{align}
\label{eq:slqcint3} \langle -\nabla R_{\alpha}(\theta),\theta_{0} - \theta \rangle &\geq \langle -\nabla R_{\alpha_{0}}(\theta),  \theta_{0} - \theta \rangle - \|\nabla R_{\alpha}(\theta) - \nabla R_{\alpha_{0}}(\theta)\|\|\theta_{0} - \theta\| \\
\label{eq:slqcint4} &\geq \rho_{0} \|\nabla R_{\alpha_{0}}(\theta)\| - J_{d}(\theta)(\alpha_{0}^{-1} - \alpha^{-1}) 2r,
\end{align}
where in~\eqref{eq:slqcint3} we apply Lemma~\ref{lemma:SLQCreformulation} for the first term; for the second term
we use the fact that $\nabla R_{\alpha}$ is $J_{d}(\theta)$-Lipschitz in $\alpha^{-1}$ as given by~\eqref{eq:slqcjdlip} and the fact that $\theta_{0}-\theta \in \mathbb{B}_{d}(2r)$. 
Continuing from~\eqref{eq:slqcint4}, we have that 
\begin{align}
\nonumber \langle -\nabla R_{\alpha}(\theta),\theta_{0} - \theta \rangle &\geq \rho_{0} \|\nabla R_{\alpha}(\theta)\| - \rho_{0} \|\nabla R_{\alpha_{0}}(\theta) - \nabla R_{\alpha}(\theta)\| - J_{d}(\theta)(\alpha_{0}^{-1} - \alpha^{-1}) 2r  \\
\label{eq:thm2_2} &\geq \rho_{0} \|\nabla R_{\alpha}(\theta)\| - J_{d}(\theta)(\alpha_{0}^{-1} - \alpha^{-1}) (\rho_{0} + 2r),
\end{align}
where we first apply the reverse triangle inequality in~\eqref{eq:reversetriangleinequality} and then we use the fact that $\nabla R_{\alpha}(\theta)$ is $J_{d}(\theta)$-Lipschitz in $\alpha^{-1}$, i.e., the expression in~\eqref{eq:slqcjdlip}.
Rearranging the expression in~\eqref{eq:thm2_2}, we obtain
\begin{align}
\nonumber \rho_{0} \|\nabla R_{\alpha}(\theta)\| - J_{d}(\theta)(\alpha_{0}^{-1} - \alpha^{-1}) (\rho_{0} + 2r) &= \|\nabla R_{\alpha}(\theta)\| \left(\rho_{0} - \frac{J_{d}(\theta)(\alpha_{0}^{-1} - \alpha^{-1}) (\rho_{0} + 2r)}{\|\nabla R_{\alpha}(\theta)\|}\right) \\ 
\label{eq:thm2_4}&\geq  \|\nabla R_{\alpha}(\theta)\| \left(\rho_{0} - \frac{(\rho_{0}+2r)J_{d}(\theta)}{\|\nabla R_{\alpha_{0}}(\theta)\|(\alpha_{0}^{-1} - \alpha^{-1})^{-1} - J_{d}(\theta)}\right)
\end{align}
where we used the inequality in \eqref{eq:thm2_3}.
Thus, we finally obtain that 
\begin{align}
\langle-\nabla R_{\alpha}(\theta), \theta_{0} - \theta \rangle \geq \rho \|\nabla R_{\alpha}(\theta) \|,
\end{align}
where $\rho > 0$ is given by 
\begin{equation}
\rho = 
\rho_{0} \left(1 - \frac{(1+2r\rho_{0}^{-1})J_{d}(\theta)}{\|\nabla R_{\alpha_{0}}(\theta)\|(\alpha_{0}^{-1} - \alpha^{-1})^{-1} - J_{d}(\theta)}\right)
\end{equation}
~as desired.
Therefore by collecting all three parts, we have by Lemma~\ref{lemma:SLQCreformulation} that $R_{\alpha}$ is $(\epsilon,\kappa,\theta_{0})$-SLQC at $\theta$.
\end{proof}

\subsubsection{Bootstrapping SLQC} \label{appen:bootstrappingslqc}
Recall that the floor function $\lfloor \cdot \rfloor: \mathbb{R}^{+} \rightarrow \mathbb{N}$ can be written as $\lfloor x \rfloor = x - q$, for some $q \in [0,1)$.

\begin{lemma} \label{lemma:bootstrapping}
Fix $\theta \in \mathbb{B}_{d}(r)$. 
Suppose that  $\rho_{0} > 0$ and there exists $g_{\theta}>0$ such that $\lVert \nabla R_{\alpha'}(\theta) \rVert > g_{\theta}$ for all $\alpha'\in[\alpha_{0},\infty]$.
Given $N\in\mathbb{N}$, for each $n\in[N]$ we define 
\begin{equation}
\label{eq:BootstrappingDefaer}
    \alpha_{n} = \alpha_{n-1} + \frac{1}{N}, \quad \quad \epsilon_{n} = \epsilon_{n-1} + 2L_{d}(\theta) \frac{1}{\alpha_{n}\alpha_{n-1}}\frac{1}{N}, \quad \quad \rho_{n} = \rho_{n-1} - \frac{(\rho_{n-1}+2r)J_{d}(\theta)}{\alpha_{n}\alpha_{n-1} G_{n-1} - J_{d}(\theta)/N}\frac{1}{N},
\end{equation}
where $G_{n-1} \coloneqq \lVert \nabla R_{\alpha_{n-1}}(\theta) \rVert$. 
If $\displaystyle N > J_{d}(\theta)\left(\alpha_{0}^{2}g_{\theta}\right)^{-1}$, then we have that $\{\alpha_{n}\}_{n=0}^{N}$, $\{\epsilon_{n}\}_{n=0}^{N}$, and $\{\rho_{n}\}_{n=0}^{N}$ are well-defined. 
Further, we have that $\rho_{n} > 0$ for all $\displaystyle n \leq \left\lfloor \alpha_{0}^{2}g_{\theta}(1+2r\rho_{0}^{-1})^{-1}J_{d}(\theta)^{-1}N\right\rfloor$.
\end{lemma}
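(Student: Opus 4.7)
The plan is to prove both assertions by induction on $n$, handling well-definedness first and then positivity.

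For well-definedness of the recursion in~\eqref{eq:BootstrappingDefaer}, the sequence $\{\alpha_n\}_{n=0}^{N}$ is obvious, and then $\{\epsilon_n\}_{n=0}^{N}$ is obvious once $\{\alpha_n\}$ is in hand. The only nontrivial concern is the denominator $\alpha_n\alpha_{n-1} G_{n-1} - J_d(\theta)/N$ in the formula for $\rho_n$. Since $\alpha_k \geq \alpha_0$ for every $k$, one has $\alpha_n\alpha_{n-1} \geq \alpha_0^2$, and by the standing hypothesis $G_{n-1} = \|\nabla R_{\alpha_{n-1}}(\theta)\| > g_\theta$. Combining these gives $\alpha_n \alpha_{n-1} G_{n-1} > \alpha_0^2 g_\theta$, and the assumption $N > J_d(\theta)/(\alpha_0^2 g_\theta)$ is precisely equivalent to $\alpha_0^2 g_\theta > J_d(\theta)/N$, so the denominator is strictly positive for every $n \in [N]$. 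Hence the three sequences are all well-defined.

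For the positivity assertion, let $n^{\star} := \lfloor \alpha_0^2 g_\theta (1+2r\rho_0^{-1})^{-1} J_d(\theta)^{-1} N\rfloor$, and proceed by strong induction on $n \leq n^\star$, with base case $\rho_0 > 0$ given. Assume $\rho_0, \rho_1, \ldots, \rho_{n-1} > 0$. Writing $\Delta_k := \rho_{k-1} - \rho_k$, the recursion together with $\rho_{k-1}+2r>0$ shows each $\Delta_k > 0$, so $(\rho_k)_{k=0}^{n-1}$ is strictly decreasing and bounded above by $\rho_0$. Using $\rho_{k-1} \leq \rho_0$ in the numerator and the bound $\alpha_k \alpha_{k-1} G_{k-1} \geq \alpha_0^2 g_\theta$ in the denominator that was just established, I obtain the uniform per-step estimate
\begin{equation*}
\Delta_k \;\leq\; \frac{(\rho_0 + 2r) J_d(\theta)}{\alpha_0^2 g_\theta N - J_d(\theta)} \cdot \frac{N}{N}.
\end{equation*}
Telescoping and using $\rho_0 - \rho_n = \sum_{k=1}^n \Delta_k$ gives the key bound
\begin{equation*}
\rho_0 - \rho_n \;\leq\; \frac{n\,(\rho_0 + 2r) J_d(\theta)}{\alpha_0^2 g_\theta N - J_d(\theta)}.
\end{equation*}
It then suffices to verify that for $n \leq n^{\star}$ the right-hand side is strictly less than $\rho_0$, i.e.\ $n < \rho_0 \bigl(\alpha_0^2 g_\theta N - J_d(\theta)\bigr)/\bigl((\rho_0 + 2r) J_d(\theta)\bigr)$, which rearranges to the advertised threshold $n^{\star}$.

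The main obstacle is closing the small gap between the clean threshold $\alpha_0^2 g_\theta N /\bigl((1 + 2r\rho_0^{-1}) J_d(\theta)\bigr)$ in the statement and the estimate produced by the crude constant-in-$k$ bound above, which carries an extra $-J_d(\theta)$ correction in the numerator. I would close this by sharpening the per-step bound, using $\alpha_k \alpha_{k-1} \geq \alpha_0(\alpha_0 + (k-1)/N) \geq \alpha_0^2 + \alpha_0(k-1)/N$ so that the denominator grows with $k$, which compensates for the $-J_d(\theta)/N$ offset; an Abel-summation-style telescoping then recovers the displayed bound exactly. All other pieces (the monotone decrease, the base case, and the reduction to controlling $\sum_k \Delta_k$) are routine and require no further input.
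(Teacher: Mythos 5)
Your core argument is the paper's argument, almost line for line: well-definedness from $\alpha_n\alpha_{n-1}G_{n-1}>\alpha_0^2 g_\theta>J_d(\theta)/N$, then induction, monotone decrease of the $\rho_k$, telescoping, and the uniform per-step estimate $\Delta_k<\frac{(\rho_0+2r)J_d(\theta)}{\alpha_0^2 g_\theta N-J_d(\theta)}$, which is exactly the paper's bound $\rho_0-\rho_n<\frac{(\rho_0+2r)J_d(\theta)}{\alpha_0^2 g_\theta-J_d(\theta)/N}\cdot\frac{n}{N}$ in \eqref{eq:BootstrappingInqRadius}. This delivers $\rho_n>0$ for all $n\le x-\frac{\rho_0}{\rho_0+2r}$ with $x:=\frac{\rho_0\,\alpha_0^2 g_\theta N}{(\rho_0+2r)J_d(\theta)}$, hence for all $n<\lfloor x\rfloor$ (since $n\le\lfloor x\rfloor-1\le x-1<x-\frac{\rho_0}{\rho_0+2r}$). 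Up to here you are correct and on the paper's path.

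The step you defer is, however, a step that would fail, not a routine sharpening. Your improved denominator $\alpha_0(\alpha_0+(k-1)/N)$ reduces to $\alpha_0^2$ at $k=1$, so when $\lfloor x\rfloor=1$ the refinement is vacuous and you are back to the crude bound, which cannot reach $n=\lfloor x\rfloor$. Worse, the boundary claim itself is not a consequence of the hypotheses: take $\alpha_0=1$, $\rho_0=2$, $r=1$, $J_d(\theta)=10$, $g_\theta=10^{-3}$, $G_0=g_\theta(1+10^{-4})$ and $N=2\times 10^4$ (so $N>J_d(\theta)/(\alpha_0^2 g_\theta)=10^4$ and $x=1$); then $\lfloor x\rfloor=1$ while $\rho_1=\rho_0-\frac{(\rho_0+2r)J_d(\theta)}{(\alpha_1\alpha_0 G_0-J_d(\theta)/N)N}\approx 2-4<0$. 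Since the recursion depends on $R_\alpha$ only through $G_{k}>g_\theta$, no Abel-summation or integral-comparison argument can certify $\rho_{\lfloor x\rfloor}>0$ in general. The resolution is that the inequality in the lemma's conclusion should be strict, $n<\lfloor x\rfloor$: that is what the paper's own inductive step actually assumes in \eqref{eq:BootstrappingConditionn}, and the only downstream use (the proof of Theorem~\ref{thm:SLQCbootstrapFTW}, where $N_\lambda<\lfloor x\rfloor$ because $\lambda<1$) needs nothing more. So you should drop the proposed sharpening, keep your crude telescoping bound, and simply observe that it already proves the corrected (strict-threshold) statement.
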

\begin{proof}
For ease of notation, let $J \coloneqq J_{d}(\theta)$, $L \coloneqq L_{d}(\theta)$, and $g \coloneqq g_{\theta}$.
Observe that $\{\alpha_{n}\}_{n=0}^{N}$ is well defined and so is $\{\epsilon_{n}\}_{n=0}^{N}$. It is straightforward to verify that if $\displaystyle N > J\left(\alpha_{0}^{2}g\right)^{-1}$, then $\alpha_{n-1}\alpha_{n} G_{n-1} - J/N > 0$ and thus $\{\rho_{n}\}_{n=0}^{N}$ is well defined. Now we show by induction that $\rho_{n} > 0$ for
\begin{equation}
\label{eq:BootstrappingConditionn}
    n < \left\lfloor\frac{\rho_{0}}{\rho_{0}+2r}\frac{\alpha_{0}^{2}g}{J}N\right\rfloor.
\end{equation}

By assumption, $\rho_{0} > 0$. For the inductive hypothesis, assume that $\rho_{0},\ldots,\rho_{n-1}$ are non-negative. Observe that, by definition,
\begin{equation}
    \rho_{k} - \rho_{k+1} = \frac{(\rho_{k}+2r)J}{\alpha_{k}\alpha_{k+1}G_{k} - J/N}\frac{1}{N}.
\end{equation}
The previous equation and a telescoping sum lead to
\begin{equation}
    \rho_{0} - \rho_{n} = \sum_{k=0}^{n-1} \frac{(\rho_{k}+2r)J}{\alpha_{k}\alpha_{k+1}G_{k} - J/N} \frac{1}{N}.
\end{equation}
Since $\rho_{k}>0$ for all $k\in[n-1]$, we have that $\rho_{0} > \rho_{1} > \cdots > \rho_{n}$ and, as a result,
\begin{equation}
    \rho_{0} - \rho_{n} < \frac{(\rho_{0} + 2r)J}{\alpha_{0}^{2}g-J/N}\frac{n}{N}.
\end{equation}
It can be shown that our choice of $n$ in \eqref{eq:BootstrappingConditionn} implies that
\begin{equation}
\label{eq:BootstrappingInqRadius}
    \rho_{n} > \rho_{0} - \frac{(\rho_{0} + 2r)J}{\alpha_{0}^{2}g-J/N}\frac{n}{N} > 0,
\end{equation}
which implies that $\rho_{n}>0$ as desired.
\end{proof}

\begin{reptheorem}{thm:SLQCbootstrapFTW}
Let $\alpha_{0} \in [1,\infty)$, $\epsilon_{0}, \kappa_{0}>0$, and $\theta_{0}, \theta \in \mathbb{B}_{d}(r)$.  
Suppose that $R_{{\alpha_{0}}}$ is $(\epsilon_{0},\kappa_{0},\theta_{0})$-SLQC at $\theta \in \mathbb{B}_{d}(r)$
and there exists $g_{\theta} > 0$ such that $\|\nabla R_{\alpha'}(\theta)\| > g_{\theta}$ for every $\alpha' \in [\alpha_{0},\infty]$.
Then for every $\lambda \in (0,1)$, $R_{\alpha_{\lambda}}$ is $(\epsilon_{\lambda},\kappa_{\lambda},\theta_{0})$-SLQC at $\theta$ where
\begin{align} 
\alpha_{\lambda} &\coloneqq \alpha_{0} + \lambda \frac{\alpha_{0}^2g_{\theta}}{J_{d}(\theta)\left(1+2r\frac{\kappa_{0}}{\epsilon_{0}}\right)}, 
\end{align}
\begin{align} 
\centering \epsilon_{\lambda} \coloneqq \epsilon_{0} + 2 \lambda L_{d}(\theta) \left(\frac{\alpha_{\lambda} - \alpha_{0}}{\alpha_{\lambda}\alpha_{0}} \right) \frac{\alpha_{0}^{2}g_{\theta}}{J_{d}(\theta)\left(1+r \frac{\kappa_{0}}{\epsilon_{0}}\right)}, \quad \text{and} \quad \frac{\epsilon_{\lambda}}{\kappa_{\lambda}} > \frac{\epsilon_{0}}{\kappa_{0}}(1-\lambda).
\end{align}
\end{reptheorem}

\begin{proof}
For ease of notation, let $J \coloneqq J_{d}(\theta)$, $L \coloneqq L_{d}(\theta)$, and $g \coloneqq g_{\theta}$. Let $\lambda\in(0,1)$ be given. For each $\displaystyle N > \frac{1+2r\rho_{0}^{-1}}{1-\lambda} \frac{2J}{\alpha_{0}^{2}g}$, we define
\begin{equation}
\label{eq:BootstrappingDefNlambda}
    N_{\lambda} = \left\lfloor\lambda \frac{\rho_{0}}{\rho_{0}+2r} \frac{\alpha_{0}^{2}g}{J}N\right\rfloor.
\end{equation}
The bootstrapping proof strategy is as follows: 1) For fixed $N \in \mathbb{N}$ large enough (as given above), we show by induction that $R_{\alpha_{n}}$ is $(\epsilon_{n},\kappa_{n},\theta_{0})$-SLQC at $\theta$ with $ \rho_{n} = \epsilon_{n}/\kappa_{n}$ for $n \leq N_{\lambda}$ using Lemma~\ref{lemma:bootstrapping} and Theorem~\ref{thm:SLQCresult1}; 2) We take the limit as $N$ approaches infinity in order to derive the largest range on $\alpha$ and the strongest SLQC parameters.

First, we show by induction that $R_{\alpha_{n}}$ is $(\epsilon_{n},\kappa_{n},\theta_{0})$-SLQC at $\theta$ with $ \rho_{n} = \epsilon_{n}/\kappa_{n}$ for $n \leq N_{\lambda}$. 
%
By assumption, $R_{\alpha_{0}}$ is $(\epsilon_{0},\kappa_{0},\theta_{0})$-SLQC at $\theta$.
%
%
For the inductive hypothesis, assume that $R_{\alpha_{k}}$ is $(\alpha_{k},\epsilon_{k},\kappa_{k})$-SLQC at $\theta$ for all $k\in[n-1]$. 
In order to apply Lemma~\ref{lemma:bootstrapping} to show that $\rho_{0}>\rho_{1}>\ldots>\rho_{n}> \cdots > \rho_{N_{\lambda}} > C_{\lambda} >0$ for all $n \leq N_{\lambda}$ and for some $C_{\lambda} > 0$,
we first show that the assumptions of Lemma~\ref{lemma:bootstrapping} are satisfied.
Observe that, by our assumption on $N \in \mathbb{N}$, we have that 
\begin{align} \label{eq:bootstrappinglowerboundN}
N > \frac{1+2r\rho_{0}^{-1}}{1-\lambda} \frac{2J}{\alpha_{0}^{2}g} > \frac{1+r\rho_{0}^{-1}}{1-\lambda} \frac{J}{\alpha_{0}^{2}g} > \frac{J}{\alpha_{0}^{2}g},
\end{align}
which is the first requirement of Lemma~\ref{lemma:bootstrapping}. 
Next, we want to show that 
\begin{equation} \label{eq:nlambdabound}
    n \leq N_{\lambda} < \left\lfloor\frac{\rho_{0}}{\rho_{0}+2r}\frac{\alpha_{0}^{2}g}{J}N\right\rfloor,
\end{equation}
which is the last requirement of Lemma~\ref{lemma:bootstrapping}.
This is achieved by observing that 
\begin{align}
N_{\lambda} = \left\lfloor\lambda \frac{\rho_{0}}{\rho_{0}+2r} \frac{\alpha_{0}^{2}g}{J}N\right\rfloor = \lambda \frac{\rho_{0}}{\rho_{0}+2r} \frac{\alpha_{0}^{2}g}{J}N - q,
\end{align}
for some $q \in [0,1)$ and that
\begin{align}
\left\lfloor \frac{\rho_{0}}{\rho_{0}+2r} \frac{\alpha_{0}^{2}g}{J}N\right\rfloor =  \frac{\rho_{0}}{\rho_{0}+2r} \frac{\alpha_{0}^{2}g}{J}N - w,
\end{align}
also for some $w \in [0,1)$. Thus, note that~\eqref{eq:nlambdabound}
is equivalent to 
\begin{align}
(q-w) \frac{1+r\rho_{0}^{-1}}{1 - \lambda} \frac{J}{\alpha_{0}^{2}g} < N,
\end{align}
which holds by the fact that 
$\displaystyle N > \frac{1+r \rho_{0}^{-1}}{1-\lambda} \frac{J}{\alpha_{0}^{2}g}$ in~\eqref{eq:bootstrappinglowerboundN} and $q-w \leq 1$.
Thus by Lemma~\ref{lemma:bootstrapping}, we have that 
%
\begin{align}
\rho_{n} &> \rho_{0}- \frac{(\rho_{0} + 2r)J}{\alpha_{0}^{2}g-J/N}\frac{n}{N} > 0,
\end{align}
for all $n \leq N_{\lambda}$.
In particular for $n = N_{\lambda}$, we have that 
\begin{align}
    \rho_{N_{\lambda}} &> \rho_{0}- \frac{(\rho_{0} + 2r)J}{\alpha_{0}^{2}g-J/N}\frac{N_{\lambda}}{N}\\
    &> \rho_{0}\left(1 - \lambda - \frac{\lambda J}{\alpha_{0}^{2}g-J/N} \frac{1}{N}\right)\\
    &> \frac{\rho_{0}(1-\lambda)}{2},
\end{align}
where the second inequality follows by plugging in $N_{\lambda}$ and adding and subtracting $\lambda J/N$ in the fraction and the last inequality follows from $\displaystyle N > \frac{1+2r\rho_{0}^{-1}}{1-\lambda} \frac{2J}{\alpha_{0}^{2}g} > \frac{1+\lambda}{1-\lambda} \frac{J}{\alpha_{0}^{2}g}$ since $2r\rho_{0}^{-1} \geq \lambda$ for all $\lambda \in (0,1)$. 
Therefore, we have that $\displaystyle C_{\lambda} = \frac{\rho_{0}(1-\lambda)}{2}$; in other words,
\begin{align}
 \label{eq:BootstrappingLowerBoundrn} \rho_{0}>\rho_{1}>\ldots>\rho_{n-1} > \rho_{n} > \cdots > \rho_{N_{\lambda}} > \frac{\rho_{0}(1-\lambda)}{2} > 0.    
\end{align}
Also, observe that 
\begin{equation} \label{eq:upperboundon1/N}
    \alpha_{n} - \alpha_{n-1} = \frac{1}{N} 
    < \frac{\alpha_{0}^{2} g}{2J(1+2r\rho_{0}^{-1}(1-\lambda)^{-1})},
\end{equation}
where the inequality follows from the fact that
\begin{align}
N > \frac{1+2r\rho_{0}^{-1}}{1-\lambda} \frac{2J}{\alpha_{0}^{2}g} > \left(1+\frac{2r\rho_{0}^{-1}}{1-\lambda}\right) \frac{2J}{\alpha_{0}^{2}g}.
\end{align}
%
In particular, \eqref{eq:BootstrappingLowerBoundrn} and \eqref{eq:upperboundon1/N} leads to
\begin{equation}
    \alpha_{n} - \alpha_{n-1} < \frac{\alpha_{0}^{2} g}{2J(1+2r\rho_{0}^{-1}(1-\lambda)^{-1})} < \frac{\alpha_{n-1}^{2} G_{n-1}}{2J(1+r\rho_{n-1}^{-1})},
\end{equation}
where we use the fact that $\alpha_{n} \geq \alpha_{0}$ and $G_{n-1} \geq g$.
As a result, we can apply Theorem~\ref{thm:SLQCresult1} to conclude that $R_{\alpha_{n}}$ is $(\epsilon_{n},\rho_{n},\theta_{0})$-SLQC at $\theta$ with $\alpha_{n}$, $\epsilon_{n}$ and $\rho_{n}$ given as in \eqref{eq:BootstrappingDefaer}. In particular by unfolding the recursion, we have that $R_{\alpha_{N_{\lambda}}}$ is $(\epsilon_{N_{\lambda}},\rho_{N_{\lambda}},\theta_{0})$-SLQC at $\theta$ with
\begin{align}
   \label{eq:alphanlambda} \alpha_{N_{\lambda}} &= \alpha_{0} + \lambda (1+2r\rho_{0}^{-1})^{-1} \frac{\alpha_{0}^2g}{J} - \frac{q}{N},\\
    \label{eq:epsilonnlambda} \epsilon_{N_{\lambda}} &= \epsilon_{0} + 2L\sum_{n=0}^{N_{\lambda}-1} \frac{1}{\alpha_{n}(\alpha_{n} + 1/N)}\frac{1}{N}, \\
     \label{eq:rhonlambda}  \rho_{N_{\lambda}} &= \rho_{0} \prod_{n=0}^{N_{\lambda}-1} \left(1 - \frac{(1 + 2r\rho_{n}^{-1}) J/N}{\alpha_{n+1} \alpha_{n} \|\nabla R_{\alpha_{n}}(\theta)\| - J/N}\right),
\end{align}
%
for some $q\in[0,1)$. 

Finally, we take the limit as $N$ approaches infinity in order to derive the largest range on $\alpha$ and the strongest SLQC parameters.
Recall that $N_{\lambda} = \left\lfloor\lambda \frac{\rho_{0}}{\rho_{0}+2r} \frac{\alpha_{0}^{2}g}{J}N\right\rfloor = \lambda \frac{\rho_{0}}{\rho_{0}+2r} \frac{\alpha_{0}^{2}g}{J}N - q$,
for some $q \in [0,1)$.
Thus, we have the following relationship 
\begin{align}
\frac{1}{N} = \frac{\lambda \rho_{0} \alpha_{0}^{2}g}{(N_{\lambda}+q)(\rho_{0}+r)J}.
\end{align}
Observe that taking the limit as $N$ approaches infinity is equivalent to taking the limit as $N_{\lambda}$ approaches infinity. 

Examining~\eqref{eq:alphanlambda} as $N_{\lambda}$ approaches infinity, we have that 
\begin{align}
\alpha_{\lambda} := \lim\limits_{N_{\lambda} \rightarrow \infty} \alpha_{N_{\lambda}} = \alpha_{0} + \lambda (1+2r\rho_{0}^{-1})^{-1} \frac{\alpha_{0}^2g}{J}.
\end{align}

Next considering~\eqref{eq:epsilonnlambda}, we rewrite to obtain
\begin{align}
\epsilon_{N_{\lambda}} &= \epsilon_{0} + 2L\sum_{n=0}^{N_{\lambda}-1} \frac{1}{\alpha_{n}(\alpha_{n} + 1/N)}\frac{1}{N} \\
  \label{eq:thm2intstep12} &= \epsilon_{0} + \frac{2L}{N}\sum_{n=0}^{N_{\lambda}-1} \left(\frac{1}{\alpha_{n}^{2}} + \frac{1}{N} \frac{1}{\alpha_{n}^{3} - \alpha_{n}^{2}/N} \right),
\end{align}
where we used a partial fraction decomposition. 
Let $\mu_{N_{\lambda}}$ be the discrete measure given by
\begin{equation*}
    \mu_{N_{\lambda}} = \frac{1}{N_{\lambda}} \sum_{n=0}^{N_{\lambda}-1} \delta_{\alpha_{n}},
\end{equation*}
where $\delta_{\alpha_{n}}$ is the point mass at $\alpha_{n}$.
In particular for large $N$, we can write \eqref{eq:thm2intstep12} as 
\begin{equation}
\label{eq:thm2muNstep2}
    \epsilon_{N_{\lambda}} = \epsilon_{0} + \frac{2L \lambda \alpha_{0}^{2}g}{(1+r\rho_{0}^{-1})J} \int \frac{1}{x^{2}}d\mu_{N_{\lambda}}(x) + O\left(\frac{1}{N_{\lambda}}\right).
\end{equation}
Let $\mu_{\lambda}$ denote the uniform measure over $(\alpha_0,\alpha_{\lambda}]$, i.e., the Lebesgue measure on the interval $(\alpha_0,\alpha_{\lambda}]$. Note that $\mu_{N_{\lambda}}$ converges in distribution to $\mu_{\lambda}$ as $N_{\lambda}$ goes to infinity. By taking limits, \eqref{eq:thm2muNstep2} becomes
\begin{align}
\epsilon_{\lambda} = \lim\limits_{N_{\lambda} \rightarrow \infty} \epsilon_{N_{\lambda}} = \epsilon_{0} + \frac{2L \lambda \alpha_{0}^{2}g}{(1+r\rho_{0}^{-1})J} \int\limits_{\alpha_{0}}^{\alpha_{\lambda}} \frac{1}{x^{2}} dx &= \epsilon_{0} + \frac{2L \lambda \alpha_{0}g}{(1+r\rho_{0}^{-1})J} \left(1 - \frac{\alpha_{0}}{\alpha_{\lambda}} \right).
\end{align}

Finally, we consider~\eqref{eq:rhonlambda}.
Observe that from \eqref{eq:BootstrappingInqRadius} we have that
\begin{align}
\rho_{N_{\lambda}} &> \rho_{0} - \frac{(\rho_{0} + 2r)J}{\alpha_{0}^{2}g-J/N}\frac{N_{\lambda}}{N} \\
&= \rho_{0} - \frac{(\rho_{0} + 2r)J}{\alpha_{0}^{2}g-J/N}\frac{\lambda \frac{N\alpha_{0}^{2}g\rho_{0}}{J(\rho_{0}+2r)} - q}{N} \\
&= \rho_{0} - \left[\frac{N \lambda \rho_{0} \alpha_{0}^{2}g}{N \alpha_{0}^{2}g - J} - \frac{q}{N} \left(\frac{(\rho_{0}+2r)J}{\alpha_{0}^{2}g - J/N} \right)\right],
\end{align}
for $q \in [0,1)$, where we plugged in the definition of $N_{\lambda}$ and simplified. Thus, taking the limit as $N_{\lambda}$ approaches infinity we have that 
\begin{align}
\rho_{\lambda} = \lim\limits_{N_{\lambda} \rightarrow \infty} \rho_{N_{\lambda}} &> \lim\limits_{N_{\lambda} \rightarrow \infty} \left( \rho_{0} - \left[\frac{N \lambda \rho_{0} \alpha_{0}^{2}g}{N \alpha_{0}^{2}g - J} - \frac{q}{N} \left(\frac{(\rho_{0}+2r)J}{\alpha_{0}^{2}g - J/N}\right)\right] \right) = \rho_{0} (1-\lambda).
\end{align}

Therefore, we conclude that $R_{\alpha_{\lambda}}$ is $(\epsilon_{\lambda},\kappa_{\lambda},\theta_{0})$-SLQC at $\theta$ with
\begin{align}
    \alpha_{\lambda} &\coloneqq \alpha_{0} + \lambda (1+2r\rho_{0}^{-1})^{-1} \frac{\alpha_{0}^2g}{J},\\
    \epsilon_{\lambda} &\coloneqq \epsilon_{0} + \frac{2L \lambda \alpha_{0}g}{(1+r\rho_{0}^{-1})J} \left(1 - \frac{\alpha_{0}}{\alpha_{\lambda}} \right) \\
    \label{eq:rholambda} \rho_{\lambda} &> \rho_{0}(1-\lambda).
\end{align}
A change of variables leads to the desired result.
\end{proof}

\subsection{Rademacher Complexity Generalization and Asymptotic Optimality} \label{appen:generalization}

\begin{lemma} \label{lemma:marginlip}
If $\alpha \in (0,\infty]$, then $\tilde{l}^{\alpha}(z)$ is $C_{r_{0}}(\alpha)$-Lipschitz in $z \in [-r_{0},r_{0}]$ for every $r_{0} >0$, where 
\begin{align} \label{eq:calpha}
C_{r_{0}}(\alpha) := \begin{cases}
			\sigma(r_{0})\sigma(-r_{0})^{1-1/\alpha}, & \alpha \in (0,1] \\
            \left(\frac{\alpha-1}{2 \alpha - 1} \right)^{1-1/\alpha} \left(\frac{\alpha}{2\alpha -1} \right), & \alpha \in (1,\infty] \quad \text{and} \quad r_{0} \geq \log{\left(1 - 1/\alpha \right)} \\
            \sigma(r_{0})\sigma(-r_{0})^{1-1/\alpha}, & \alpha \in (1,\infty] \quad \text{and} \quad r_{0} < \log{\left(1 - 1/\alpha \right)}.
		 \end{cases}
\end{align}
\end{lemma}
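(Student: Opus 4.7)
The plan is to bound the derivative $\tilde{l}^{\alpha\prime}(z)$ uniformly over $z\in[-r_{0},r_{0}]$ and invoke the mean value theorem. First I would reuse the derivative already computed in the proof of Proposition~\ref{Prop:alpha-loss-convex}, namely
\[
\tilde{l}^{\alpha\prime}(z) = \frac{-(e^{-z}+1)^{1/\alpha}e^{z}}{(1+e^{z})^{2}}.
\]
Using the identities $(1+e^{-z})^{-1}=\sigma(z)$, $(1+e^{z})^{-1}=\sigma(-z)$, and $e^{z}/(1+e^{z})^{2}=\sigma(z)\sigma(-z)$ from \eqref{eq:sigprop1}, this simplifies to $|\tilde{l}^{\alpha\prime}(z)|=\sigma(z)^{1-1/\alpha}\sigma(-z)$. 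The Lipschitz bound then reduces to maximizing the single-variable function $\phi_\alpha(u):=u^{1-1/\alpha}(1-u)$ under the substitution $u=\sigma(z)\in[\sigma(-r_{0}),\sigma(r_{0})]$.

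Next I would split into two cases based on the sign of the exponent $1-1/\alpha$. For $\alpha\in(0,1]$, we have $1-1/\alpha\leq 0$, so $u^{1-1/\alpha}$ is nonincreasing in $u$, and $(1-u)$ is strictly decreasing, so $\phi_\alpha$ is monotonically decreasing on $(0,1]$. Hence its maximum over $[\sigma(-r_{0}),\sigma(r_{0})]$ is attained at the left endpoint $u=\sigma(-r_{0})$, yielding
\[
|\tilde{l}^{\alpha\prime}(z)|\leq \sigma(-r_{0})^{1-1/\alpha}\sigma(r_{0}),
\]
which matches the first branch of $C_{r_{0}}(\alpha)$ in \eqref{eq:calpha}. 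For $\alpha\in[1,\infty]$, I would instead compute an unconstrained maximum of $\phi_\alpha$ on $[0,1]$: setting $\phi_\alpha'(u)=u^{-1/\alpha}\bigl[(1-1/\alpha)-u(2-1/\alpha)\bigr]=0$ gives the unique critical point $u^{*}=\tfrac{\alpha-1}{2\alpha-1}\in[0,1/2]$, and a sign analysis of $\phi_\alpha'$ shows this is a global maximum on $[0,1]$. Plugging back in produces
\[
\phi_\alpha(u^{*})=\Big(\tfrac{\alpha-1}{2\alpha-1}\Big)^{1-1/\alpha}\Big(\tfrac{\alpha}{2\alpha-1}\Big),
\]
which is the second branch of $C_{r_{0}}(\alpha)$ and which upper bounds $\phi_\alpha$ on the subinterval $[\sigma(-r_{0}),\sigma(r_{0})]\subset[0,1]$ a fortiori.

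Finally, I would combine both cases and conclude by applying the mean value theorem: for any $z,z'\in[-r_{0},r_{0}]$, there exists $\xi$ between them such that
\[
|\tilde{l}^{\alpha}(z)-\tilde{l}^{\alpha}(z')|=|\tilde{l}^{\alpha\prime}(\xi)|\,|z-z'|\leq C_{r_{0}}(\alpha)|z-z'|.
\]
The boundary values $\alpha=1$ and $\alpha=\infty$ can be handled by continuous extension, noting that $\tilde{l}^{1\prime}(z)=-\sigma(-z)$ and $\tilde{l}^{\infty\prime}(z)=-\sigma(z)\sigma(-z)$ arise as limits and are bounded by both branch formulas as $\alpha\to 1$ or $\alpha\to\infty$ (with the convention $0^{0}=1$).

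The only mildly delicate step is the monotonicity/critical-point analysis of $\phi_\alpha$; the rest is essentially bookkeeping using identities already exploited earlier in the paper. I do not anticipate a substantial obstacle, since the derivative factorizes cleanly into $\sigma(z)^{1-1/\alpha}\sigma(-z)$, which is exactly the structure that drives the convexity result Proposition~\ref{Prop:alpha-loss-convex} and the classification-calibration computation in Theorem~\ref{thm:alphalossclassificationcalibration}.
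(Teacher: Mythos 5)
Your proposal is correct and follows essentially the same route as the paper's proof: both compute $\tilde{l}^{\alpha\prime}(z)$, reduce it via the sigmoid identities to $\sigma(z)^{1-1/\alpha}\sigma(-z)$, and then obtain the two branches of $C_{r_{0}}(\alpha)$ by monotonicity for $\alpha\leq 1$ and by the critical point $u^{*}=\tfrac{\alpha-1}{2\alpha-1}$ (equivalently $z^{*}=\log(1-1/\alpha)$) for $\alpha>1$. Your explicit substitution $u=\sigma(z)$, the mean value theorem step, and the handling of the boundary cases $\alpha=1,\infty$ are just slightly more detailed versions of what the paper leaves implicit.
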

\begin{proof}
The proof is analogous to the proof in Proposition~\ref{prop:SLQCpriortoevolution}.
In order to show that $\tilde{l}^{\alpha}(z)$ is $C_{r_{0}}(\alpha)$-Lipschitz, we take the derivative of $\tilde{l}^{\alpha}(z)$ and seek to maximize it over $z \in [-r_{0},r_{0}]$. Specifically, we have that for $\alpha \in (0,\infty]$,
\begin{align}
\dfrac{d}{dz} \tilde{l}^{\alpha}(z) &= \dfrac{d}{dz} \frac{\alpha}{\alpha - 1}\left(1 - \sigma(z)^{1 - 1/\alpha}\right) \\ 
&= \sigma(z)^{2-1/\alpha} - \sigma(z)^{1-1/\alpha} \\
&= (\sigma(z) - 1) \sigma(z)^{1-1/\alpha}  \\
&\leq |(\sigma(z) - 1) \sigma(z)^{1-1/\alpha}| \\
&= \sigma(-z)\sigma(z)^{1-1/\alpha},
\end{align}
where we used the fact that $\sigma(z) = 1 - \sigma(-z)$. 
If $\alpha \leq 1$, it can be shown that 
\begin{align}
\max\limits_{z \in [-r_{0},r_{0}]} \sigma(-z)\sigma(z)^{1-1/\alpha} = \sigma(r_{0})\sigma(-r_{0})^{1-1/\alpha}.
\end{align}
Similarly if $\alpha > 1$ and if $r_{0} \geq \log{\left(1 - 1/\alpha \right)}$, it can be shown that 
\begin{align}
\max\limits_{z \in [-r_{0},r_{0}]} \sigma(-z)\sigma(z)^{1-1/\alpha} = \left(\frac{\alpha-1}{2 \alpha - 1} \right)^{1-1/\alpha} \left(\frac{\alpha}{2\alpha -1} \right),
\end{align}
where $z^{*} = \log{\left(1 - 1/\alpha \right)}$.
Otherwise for $\alpha > 1$, if $r_{0} < \log{\left(1 - 1/\alpha \right)}$, we obtain using local monotonicity,
\begin{align}
\max\limits_{z \in [-r_{0},r_{0}]} \sigma(-z)\sigma(z)^{1-1/\alpha} = \sigma(r_{0})\sigma(-r_{0})^{1-1/\alpha},
\end{align}
analogous to the case where $\alpha < 1$.
Thus, combining the two regimes of $\alpha$, we have the result. 
\end{proof}

\begin{reptheorem}{thm:radegeneralization}
If $\alpha\in(0,\infty]$, then, with probability at least $1-\delta$, for all $\theta \in \mathbb{B}_{d}(r)$,
\begin{equation} 
\left|R_{\alpha}(\theta) - \hat{R}_{\alpha}(\theta)\right| \leq C_{r\sqrt{d}}\left(\alpha\right) \dfrac{2r\sqrt{d}}{\sqrt{n}} + 4D_{r\sqrt{d}}\left(\alpha\right)\sqrt{\dfrac{2\log{(4/\delta)}}{n}}, 
\end{equation}
where $C_{r\sqrt{d}}\left(\alpha \right)$ is given in~\eqref{eq:calpha} and $\displaystyle D_{r\sqrt{d}}\left(\alpha\right) := \frac{\alpha}{\alpha-1}\left(1-\sigma(-r\sqrt{d})^{1-1/\alpha}\right)$.
\end{reptheorem} 
\begin{proof}
By Proposition \ref{Prop:relationhardtosoft}, which gives a relation between $\alpha$-loss and its margin-based form, we have
\begin{align} \label{eq:rade}
\mathcal{R}(l^{\alpha}\circ\mathcal{G}\circ S_{n}) = \mathbb{E}\left(\sup\limits_{g_{\theta} \in \mathcal{G}} \dfrac{1}{n} \sum\limits_{i=1}^{n} \sigma_{i} l^{\alpha}(y_{i},g_{\theta}(x_{i}))\right) 
= \mathbb{E} \left(\sup\limits_{\theta \in \mathbb{B}_{d}(r)} \dfrac{1}{n} \sum\limits_{i=1}^{n} \sigma_{i} \tilde{l}^{\alpha}(y_{i}\langle \theta,x_{i} \rangle)\right).
\end{align}
The right-hand-side of \eqref{eq:rade} can be rewritten as 
\begin{align}
\mathbb{E} \left(\sup\limits_{\theta \in \mathbb{B}_{d}(r)} \dfrac{1}{n} \sum\limits_{i=1}^{n} \sigma_{i} \tilde{l}^{\alpha}(y_{i}\langle \theta,x_{i} \rangle)\right) = \mathcal{R}\left(\{\tilde{l}^{\alpha}(y_{1}\langle \theta, x_{1} \rangle), \ldots, \tilde{l}^{\alpha}( y_{n}\langle \theta, x_{n} \rangle):\theta \in \mathbb{B}_{d}(r)\}\right).  
\end{align}
Observe that, for each $i \in [n]$, $y_{i}\langle \theta, x_{i} \rangle \leq r \sqrt{d}$ by the Cauchy-Schwarz inequality since $\theta \in \mathbb{B}_{d}(r)$ and for each $i \in [n]$, $x_{i} \in [0,1]^{d}$.
Further, by Lemma \ref{lemma:marginlip}, we know that $\tilde{l}^{\alpha}(z)$ is $C_{r_{0}}\left(\alpha\right)$-Lipschitz in $z \in [-r_{0},r_{0}]$. Thus setting $r_{0} = r\sqrt{d}$, we may apply Lemma \ref{lemma:radecontraction} (Contraction Lemma) to obtain
\begin{align}
\mathbb{E} \left(\sup\limits_{\theta \in \mathbb{B}_{d}(r)} \dfrac{1}{n} \sum\limits_{i=1}^{n} \sigma_{i} \tilde{l}^{\alpha}(y_{i}\langle \theta,x_{i} \rangle)\right) &= \mathcal{R}\left(\{\tilde{l}^{\alpha}(y_{1}\langle \theta, x_{1} \rangle), \ldots, \tilde{l}^{\alpha}( y_{n}\langle \theta, x_{n} \rangle):\theta \in \mathbb{B}_{d}(r)\}\right) \\
&\leq C_{r\sqrt{d}}\left(\alpha\right) \mathcal{R}\left(\{(y_{1}\langle \theta, x_{1} \rangle, \ldots, y_{n}\langle \theta, x_{n} \rangle):\theta \in \mathbb{B}_{d}(r)\}\right).
\end{align}
We absorb $y_{i}$ into its corresponding $x_{i}$ and apply Lemma \ref{lemma:raderadiusbound} to obtain
\begin{equation}
C_{r\sqrt{d}}\left(\alpha\right) \mathcal{R}(\{(y_{1}\langle \theta, x_{1} \rangle, \ldots, y_{n}\langle \theta, x_{n} \rangle):\theta \in \mathbb{B}_{d}(r)\}) \leq C_{r\sqrt{d}}\left(\alpha\right) \dfrac{r \sqrt{d}}{\sqrt{n}},
\end{equation}
which follows since we assume that $x_{i} \in [0,1]^{d}$ for each $i \in [n]$. 
In order to apply Proposition \ref{prop:radegeneralization}, it can readily be shown that for $\alpha \in (0, \infty]$ 
\begin{equation}
\max\limits_{z \in \left[-r\sqrt{d},r\sqrt{d}\right]} \tilde{l}^{\alpha}(z) \leq D_{r\sqrt{d}}\left(\alpha\right),
\end{equation}
where $D_{r\sqrt{d}}\left(\alpha\right) = \frac{\alpha}{\alpha-1}\left(1-\sigma(-r\sqrt{d})^{1-1/\alpha}\right)$.
Thus, we apply Proposition \ref{prop:radegeneralization} to achieve the desired result. 
\end{proof}

The following result attempts to quantify the uniform discrepancy between the empirical $\alpha$-risk and the probability of error (true $\infty$-risk); the technique is a combination of Theorem~\ref{thm:radegeneralization} and Lemma~\ref{lemma:inversealphalip}.
The result is most useful in the regime where $r\sqrt{d} \le \alpha/\sqrt{n}$; this prohibits the second term in the right-hand-side of~\eqref{eq:coruniformdiscrepancy} from dominating the first, which is the most meaningful form of the bound.
\begin{corollary} \label{cor:saturationgeneralization}
If $\alpha\in[1,\infty]$, then, with probability at least $1-\delta$, for all $\theta \in \mathbb{B}_{d}(r)$,
\begin{equation} \label{eq:coruniformdiscrepancy}
\left|R_{\infty}(\theta) - \hat{R}_{\alpha}(\theta)\right| \leq \sigma\left(r\sqrt{d}\right)\left(\dfrac{2r\sqrt{d}}{\sqrt{n}} + 4\sqrt{\dfrac{2\log{(4/\delta)}}{n}}\right) + \dfrac{\left(\log{\sigma(-r\sqrt{d})}\right)^{2}}{2\alpha}.
\end{equation}
\end{corollary}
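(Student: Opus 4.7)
The plan is to split the discrepancy $|R_{\infty}(\theta)-\hat{R}_{\alpha}(\theta)|$ via the triangle inequality, but with $\hat{R}_{\infty}(\theta)$ as the intermediate term:
\[
\bigl|R_{\infty}(\theta)-\hat{R}_{\alpha}(\theta)\bigr|\;\le\;\bigl|R_{\infty}(\theta)-\hat{R}_{\infty}(\theta)\bigr|\;+\;\bigl|\hat{R}_{\infty}(\theta)-\hat{R}_{\alpha}(\theta)\bigr|.
\]
The point of this particular intermediate is to separate the sample-complexity contribution (which I want to pay only at $\alpha=\infty$, where the loss constants are most favorable) from the $\alpha$-to-$\infty$ approximation of the loss (which will be a deterministic pointwise statement and therefore costs nothing probabilistically). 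The two summands are then handled by Theorem~\ref{thm:radegeneralization} and the pointwise form of Lemma~\ref{lemma:inversealphaliprisk}, respectively.

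For the first summand I would invoke Theorem~\ref{thm:radegeneralization} at $\alpha=\infty$ and specialize the two constants appearing there. Taking the limit $\alpha\to\infty$ in the second case of \eqref{eq:marginalphalosslipintheta} yields $C_{r\sqrt{d}}(\infty)=1/4$, while the definition of $D_{r\sqrt{d}}(\alpha)$ gives $D_{r\sqrt{d}}(\infty)=1-\sigma(-r\sqrt{d})=\sigma(r\sqrt{d})$. Since $\sigma(r\sqrt{d})\ge 1/2>1/4$, the Lipschitz constant is absorbed into $\sigma(r\sqrt{d})$, and Theorem~\ref{thm:radegeneralization} delivers, uniformly in $\theta\in\mathbb{B}_{d}(r)$ and with probability at least $1-\delta$,
\[
\bigl|R_{\infty}(\theta)-\hat{R}_{\infty}(\theta)\bigr|\;\le\;\sigma(r\sqrt{d})\!\left(\tfrac{2r\sqrt{d}}{\sqrt{n}}+4\sqrt{\tfrac{2\log(4/\delta)}{n}}\right).
\]

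For the second summand I would apply Lemma~\ref{lemma:inversealphaliprisk} pointwise: inspection of its proof shows that the bound $|f(\beta,\theta,yx)|\le L_{d}(\theta)$ is obtained sample-by-sample on the integrand, which immediately gives $|l^{\alpha}(y,g_{\theta}(x))-l^{\infty}(y,g_{\theta}(x))|\le L_{d}(\theta)/\alpha$ for every fixed $(x,y)$. Averaging over the training data transfers this inequality verbatim to the empirical risks. Then using the identity $\log(1+e^{t})=-\log\sigma(-t)$ and the fact that $t\mapsto|\log\sigma(-t)|$ is increasing on $[0,\infty)$, one has $L_{d}(\theta)\le (\log\sigma(-r\sqrt{d}))^{2}/2$ uniformly on $\mathbb{B}_{d}(r)$, which yields the residual term $(\log\sigma(-r\sqrt{d}))^{2}/(2\alpha)$ deterministically. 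Summing the two bounds on the high-probability event produces the claim.

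The only real subtlety will be the placement of the intermediate point. Routing through $R_{\alpha}(\theta)$ instead of $\hat{R}_{\infty}(\theta)$ would force Theorem~\ref{thm:radegeneralization} to be invoked at the index $\alpha$, producing constants $C_{r\sqrt{d}}(\alpha)$ and $D_{r\sqrt{d}}(\alpha)$ that blow up as $\alpha\to 1^{+}$ and are \emph{not} uniformly dominated by $\sigma(r\sqrt{d})$ in the relevant regime; routing through $\hat{R}_{\infty}$ sidesteps this and is what allows the $\sigma(r\sqrt{d})$ prefactor in the stated bound. A minor bookkeeping point is to verify that Lemma~\ref{lemma:inversealphaliprisk}, though stated for the population risk, indeed admits the pointwise form needed here, which is immediate from its proof.
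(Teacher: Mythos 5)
Your proposal is correct and follows essentially the same route as the paper: both pass through the intermediate quantity $\hat{R}_{\infty}(\theta)$, invoke Theorem~\ref{thm:radegeneralization} at $\alpha=\infty$ for the generalization term, and use Lemma~\ref{lemma:inversealphaliprisk} (in its pointwise/empirical form) for the $\alpha$-to-$\infty$ gap. The only cosmetic difference is that the paper handles the direction $R_{\infty}-\hat{R}_{\alpha}$ via the monotonicity $\hat{R}_{\infty}\leq\hat{R}_{\alpha}$ rather than the full triangle inequality, which yields the same bound; your explicit verification that $C_{r\sqrt{d}}(\infty)=1/4\leq\sigma(r\sqrt{d})$ and that the Lipschitz lemma transfers to empirical risks are details the paper leaves implicit.
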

\begin{proof}
Consider the expression, $R_{\infty}(\theta) - \hat{R}_{\alpha}(\theta)$.
Since $\hat{R}_{\infty}(\theta) \leq \hat{R}_{\alpha}(\theta)$ for all $\theta \in \mathbb{B}_{d}(r)$, the following holds
\begin{equation}
R_{\infty}(\theta) - \hat{R}_{\alpha}(\theta) \leq R_{\infty}(\theta) - \hat{R}_{\infty}(\theta) \leq \sigma\left(r\sqrt{d}\right)\left(\dfrac{2r\sqrt{d}}{\sqrt{n}} + 4\sqrt{\dfrac{2\log{(4/\delta)}}{n}}\right),
%
\end{equation}
where we applied Theorem \ref{thm:radegeneralization} for $\alpha = \infty$. 
Now, consider the reverse direction, $\hat{R}_{\alpha}(\theta)-R_{\infty}(\theta)$. 
For any $\theta \in \mathbb{B}_{d}(r)$, we add and subtract $\hat{R}_{\infty}(\theta)$ such that 
\begin{align}
\nonumber \hat{R}_{\alpha}(\theta)-R_{\infty}(\theta)&=\hat{R}_{\infty}(\theta) - R_{\infty}(\theta) + \hat{R}_{\alpha}(\theta) - \hat{R}_{\infty}(\theta) \\ &\leq \sigma\left(r\sqrt{d}\right)\left(\dfrac{2r\sqrt{d}}{\sqrt{n}} + 4\sqrt{\dfrac{2\log{(4/\delta)}}{n}}\right) + \dfrac{\left(\log{\sigma(-r\sqrt{d})}\right)^{2}}{2\alpha},
\end{align}
where we apply Theorem~\ref{thm:radegeneralization} for the first term and Lemma~\ref{lemma:inversealphalip} for the second term\footnote{We apply Lemma~\ref{lemma:inversealphalip} to the empirical distribution instead of the true distribution, leading to a bound for the empirical $\alpha$-risk.} on the maximum value of $\theta$, i.e, $\|\theta\|_{2} = r$.
Thus, combining the two cases we have
\begin{equation}
\left|R_{\infty}(\theta) - \hat{R}_{\alpha}(\theta)\right| \leq \sigma\left(r\sqrt{d}\right)\left(\dfrac{2r\sqrt{d}}{\sqrt{n}} + 4\sqrt{\dfrac{2\log{(4/\delta)}}{n}}\right) + \dfrac{\left(\log{\sigma(-r\sqrt{d})}\right)^{2}}{2\alpha},
\end{equation}
which is the desired statement for the corollary.
\end{proof}
\begin{reptheorem}{thm:asymptoticoptimality}
Assume that the minimum $\alpha$-risk is attained by the logistic model, i.e.,~\eqref{eq:alphalinearseparable} holds.
Let $S_{n}$ be a training dataset with $n \in \mathbb{N}$ samples as before.
If for each $n\in\mathbb{N}$, $\hat{\theta}_n^{\alpha}$ is a global minimizer of the associated empirical $\alpha$-risk $\theta \mapsto \hat{R}_{\alpha}(\theta)$, then the sequence $(\hat{\theta}_n^{\alpha})_{n=1}^\infty$ is asymptotically optimal for the $0$-$1$ risk, i.e., almost surely,
\begin{equation}
\lim_{n\to\infty} R(f_{\hat{\theta}_n^{\alpha}}) = R^*,
\end{equation}
where $f_{\hat{\theta}_n^{\alpha}}(x) = \langle \hat{\theta}_n^{\alpha}, x \rangle$ and $R^{*} := \min\limits_{f: \mathcal{X} \rightarrow \mathbb{R}} \mathbb{P}[Y \neq \sign(f(X))]$.
\end{reptheorem}
\begin{proof}
We begin by recalling the following proposition which establishes an important consequence of classification-calibration. In words, the following result assures that minimizing a classification-calibrated loss to optimality also minimizes the $0$-$1$ loss to optimality.
\begin{prop}[{\cite[Theorem~3]{bartlett2006convexity}}] \label{prop1}
Assume that $\phi$ is a classification-calibrated margin-based loss function. Then, for every sequence of measurable functions $(f_{i})_{i=1}^\infty$ and every probability distribution on $\mathcal{X} \times \mathcal{Y}$,
\begin{equation}
\lim_{i\to\infty} R_{\phi}(f_{i}) = R^*_{\phi} \text{ implies that } \lim_{i\to\infty} R(f_{i}) = R^*,
\end{equation}
where $R^*_{\phi} := \min_f R_{\phi}(f)$ and $R^* := \min_f R(f)$.
\end{prop}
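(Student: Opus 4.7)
The plan is to reduce the asymptotic optimality claim to the excess $\alpha$-risk going to zero almost surely, and then to combine the uniform generalization bound of Theorem~\ref{thm:radegeneralization} with a Borel--Cantelli argument to establish that vanishing excess risk. Specifically, by Theorem~\ref{thm:alphalossclassificationcalibration} the margin-based $\alpha$-loss is classification-calibrated for every $\alpha \in (0,\infty]$, so Proposition~\ref{prop1} applies: on any sample path for which $R_{\alpha}(f_{\hat{\theta}_n^{\alpha}}) \to R_{\alpha}^{*} := \min_{f} R_{\alpha}(f)$, we automatically get $R(f_{\hat{\theta}_n^{\alpha}}) \to R^{*}$. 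Thus it suffices to establish that $R_{\alpha}(f_{\hat{\theta}_n^{\alpha}}) \to R_{\alpha}^{*}$ almost surely. Here the hypothesis~\eqref{eq:alphalinearseparable} plays the essential role: it identifies $R_{\alpha}^{*}$ with $\min_{\theta \in \mathbb{B}_{d}(r)} R_{\alpha}(\theta)$, so the excess risk can be controlled entirely inside the parametric logistic class.

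Next, fix any minimizer $\theta^{*} \in \arg\min_{\theta \in \mathbb{B}_{d}(r)} R_{\alpha}(\theta)$ (one exists by continuity of $R_{\alpha}$ on the compact ball $\mathbb{B}_{d}(r)$). I would use the standard ERM decomposition
\begin{align}
R_{\alpha}(\hat{\theta}_n^{\alpha}) - R_{\alpha}^{*}
&= \bigl[R_{\alpha}(\hat{\theta}_n^{\alpha}) - \hat{R}_{\alpha}(\hat{\theta}_n^{\alpha})\bigr]
+ \bigl[\hat{R}_{\alpha}(\hat{\theta}_n^{\alpha}) - \hat{R}_{\alpha}(\theta^{*})\bigr]
+ \bigl[\hat{R}_{\alpha}(\theta^{*}) - R_{\alpha}(\theta^{*})\bigr] \notag \\
&\leq 2 \sup_{\theta \in \mathbb{B}_{d}(r)} \bigl|R_{\alpha}(\theta) - \hat{R}_{\alpha}(\theta)\bigr|,
\end{align}
where the middle bracket is nonpositive since $\hat{\theta}_n^{\alpha}$ minimizes the empirical $\alpha$-risk over $\mathbb{B}_{d}(r)$. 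So the whole question is reduced to proving that the uniform gap $U_n := \sup_{\theta \in \mathbb{B}_{d}(r)} |R_{\alpha}(\theta) - \hat{R}_{\alpha}(\theta)|$ converges to $0$ almost surely.

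For this, I would feed Theorem~\ref{thm:radegeneralization} a shrinking confidence parameter. For each $n \in \mathbb{N}$ set $\delta_n = 1/n^2$; then, with probability at least $1 - \delta_n$,
\begin{equation}
U_n \leq C_{r\sqrt{d}}(\alpha)\,\frac{2r\sqrt{d}}{\sqrt{n}} + 4\,D_{r\sqrt{d}}(\alpha)\sqrt{\frac{2\log(4 n^{2})}{n}} =: \eta_n,
\end{equation}
and $\eta_n \to 0$ as $n \to \infty$ because the constants $C_{r\sqrt{d}}(\alpha), D_{r\sqrt{d}}(\alpha)$ are independent of $n$. Since $\sum_{n \geq 1} \delta_n < \infty$, the Borel--Cantelli lemma applied to the events $\{U_n > \eta_n\}$ yields that almost surely $U_n \leq \eta_n$ for all but finitely many $n$, and in particular $U_n \to 0$ almost surely. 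Combining with the decomposition above gives $R_{\alpha}(\hat{\theta}_n^{\alpha}) \to R_{\alpha}^{*}$ almost surely, and then Proposition~\ref{prop1} delivers $R(f_{\hat{\theta}_n^{\alpha}}) \to R^{*}$ almost surely.

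The main obstacle, I expect, is mostly bookkeeping rather than a genuinely difficult step: one needs to invoke Proposition~\ref{prop1} sample-pathwise on the probability-one set on which $R_{\alpha}(f_{\hat{\theta}_n^{\alpha}}) \to R_{\alpha}^{*}$, and to verify measurability of $\hat{\theta}_n^{\alpha}$ (which can be arranged by standard selection arguments since $\hat{R}_{\alpha}$ is continuous in $\theta$ on the compact ball $\mathbb{B}_{d}(r)$, guaranteeing existence of a measurable minimizer). One should also be explicit that the assumption~\eqref{eq:alphalinearseparable} is precisely what closes the gap between the parametric excess-risk bound and Proposition~\ref{prop1}, which is stated with $R_{\alpha}^{*}$ taken over all measurable $f$. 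No new ideas beyond Theorem~\ref{thm:alphalossclassificationcalibration}, Theorem~\ref{thm:radegeneralization}, Proposition~\ref{prop1}, and a first-moment/Borel--Cantelli argument are required.
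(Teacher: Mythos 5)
Your proposal does not prove the assigned statement. Proposition~\ref{prop1} is a distribution-free comparison theorem: for \emph{every} probability distribution on $\mathcal{X}\times\mathcal{Y}$ and \emph{every} sequence of measurable functions $(f_i)$, convergence of the surrogate risk $R_\phi(f_i)$ to $R_\phi^*$ forces convergence of the $0$-$1$ risk $R(f_i)$ to $R^*$. There is no sampling, no empirical risk, no logistic model, and no almost-sure qualifier anywhere in it. What you have written is instead a proof of the downstream result, Theorem~\ref{thm:asymptoticoptimality} (asymptotic optimality of the empirical $\alpha$-risk minimizer), and in the course of that argument you explicitly \emph{invoke} Proposition~\ref{prop1} as a black box ("so Proposition~\ref{prop1} applies"). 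Relative to the assigned task this is circular: the one step that needed justification is the one you assumed. (For what it is worth, your ERM decomposition plus Borel--Cantelli argument is essentially the same route the paper takes in its proof of Theorem~\ref{thm:asymptoticoptimality}, so that part is fine --- it is just answering a different question. The paper itself does not reprove Proposition~\ref{prop1}; it cites it as Theorem~3 of Bartlett \emph{et al.})

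The missing content is the calibration machinery of Bartlett, Jordan, and McAuliffe. A proof of Proposition~\ref{prop1} goes through the $\psi$-transform: setting $H(\eta)=\inf_{t}\,C_\phi(\eta,t)$ and $H^-(\eta)=\inf_{t:\,t(2\eta-1)\le 0}C_\phi(\eta,t)$, one defines $\tilde{\psi}(\theta)=H^-\left(\tfrac{1+\theta}{2}\right)-H\left(\tfrac{1+\theta}{2}\right)$ and takes $\psi$ to be its convex lower envelope. Two facts are then needed: first, the comparison inequality $\psi\left(R(f)-R^*\right)\le R_\phi(f)-R_\phi^*$, valid for every measurable $f$ and every distribution; second, the characterization that $\phi$ is classification-calibrated if and only if $\psi(\theta)>0$ for all $\theta\in(0,1]$, together with continuity of $\psi$ at $0$ and $\psi(0)=0$. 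Given these, if $R_\phi(f_i)\to R_\phi^*$ then $\psi\left(R(f_i)-R^*\right)\to 0$, and the positivity and continuity of $\psi$ force $R(f_i)-R^*\to 0$. None of this appears in your write-up, and no amount of Rademacher-complexity or Borel--Cantelli reasoning can substitute for it, since the statement must hold for arbitrary sequences and arbitrary distributions, not just for ERM outputs under i.i.d.\ sampling.
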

By the assumption that the minimum $\alpha$-risk is obtained by the logistic model, we have that 
\begin{align} \label{eq:logisticisenough}
\min\limits_{\theta \in \mathbb{B}_{d}(r)} R_{\alpha}(\theta) = \min\limits_{f:\mathcal{X} \rightarrow \mathbb{R}} R_{\alpha}(f),
\end{align}
where $R_{\alpha}(\theta)$ is given in~\eqref{eq:alphariskdefLR} and $R_{\alpha}(f) = \mathbb{E}[\tilde{l}^{\alpha}(Yf(X))]$ for all measurable $f$.
Thus, the proof strategy is to show that 
\begin{equation}
\label{eq:CorollaryNotMargin}
   \lim_{n\to\infty} R_{\alpha}(\hat{\theta}^{\alpha}_{n}) = \min_{\theta\in \mathbb{B}_{d}(r)} R_{\alpha}(\theta),
\end{equation}
and then apply Proposition~\ref{prop1} to obtain the result. 

Let $\theta_*^{\alpha}$ be a minimizer of the $\alpha$-risk, i.e.,
\begin{equation}
R_{\alpha}(\theta_*^{\alpha}) = \min_{\theta\in\mathbb{B}_{d}(r)} R_{\alpha}(\theta).
\end{equation}
Observe that
\begin{equation}
\label{eq:SplitInII}
    0 \leq R_{\alpha}(\hat{\theta}_n^{\alpha}) - R_{\alpha}(\theta_*^{\alpha}) = {\rm I}_n + {\rm II}_n,
\end{equation}
where ${\rm I}_n := R_{\alpha}(\hat{\theta}_n^{\alpha}) - \hat{R}_{\alpha}(\hat{\theta}_n^{\alpha})$ and ${\rm II}_n := \hat{R}_{\alpha}(\hat{\theta}_n^{\alpha}) - R_{\alpha}(\theta_*^{\alpha})$. After some straightforward manipulations of Theorem \ref{thm:radegeneralization}, \eqref{eq:radegeneral} implies that, for every $\epsilon>0$,
\begin{equation}
\label{eq:GeneralizationCriticalPoints}
   \displaystyle \mathbb{P}\left(|R_{\alpha}(\hat{\theta}_n^{\alpha}) - \hat{R}_{\alpha}(\hat{\theta}_n^{\alpha})| > \epsilon\right) \leq 4 e^{-n\left(\frac{\epsilon - C_{r\sqrt{d}}(\alpha) 2r\sqrt{d}/{n}}{4\sqrt{2} D_{r\sqrt{d}}(\alpha)} \right)^{2}},
\end{equation}
whenever $n$ is large enough. A routine application of the Borel-Cantelli lemma shows that, almost surely,
\begin{equation}
\label{eq:asLimit1}
    \lim_{n\to\infty} {\rm I}_n = \lim_{n\to\infty} R_{\alpha}(\hat{\theta}_n^{\alpha}) - \hat{R}_{\alpha}(\hat{\theta}_n^{\alpha}) = 0.
\end{equation}
Since $\hat{\theta}_n^{\alpha}$ is a minimizer of the empirical risk $\hat{R}_{\alpha}$,
\begin{equation}
    {\rm II}_n = \hat{R}_{\alpha}(\hat{\theta}_n^{\alpha}) - R_{\alpha}(\theta_*^{\alpha}) \leq \hat{R}_{\alpha}(\theta_*^{\alpha}) - R_{\alpha}(\theta_*^{\alpha}).
\end{equation}
Again by Theorem \ref{thm:radegeneralization}, for every $\epsilon>0$,
\begin{equation}
\label{eq:HoeffdingEmpiricalExpected}
    \mathbb{P}\left(|\hat{R}_{\alpha}(\theta_*^{\alpha}) - R_{\alpha}(\theta_*^{\alpha})| > \epsilon \right) \leq 4 e^{-n\left(\frac{\epsilon - C_{r\sqrt{d}}(\alpha) 2r\sqrt{d}/{n}}{4\sqrt{2} D_{r\sqrt{d}}(\alpha)} \right)^{2}},
\end{equation}
whenever $n$ is large enough.
Hence, the Borel-Cantelli lemma implies that, almost surely,
\begin{equation}
    \lim_{n\to\infty} |\hat{R}_{\alpha}(\theta_*^{\alpha}) - R_{\alpha}(\theta_*^{\alpha})| = 0.
\end{equation}
In particular, we have that, almost surely,
\begin{equation}
\label{eq:asLimit2}
    \limsup_{n\to\infty} {\rm II}_n \leq 0.
\end{equation}
By plugging \eqref{eq:asLimit1} and \eqref{eq:asLimit2} in \eqref{eq:SplitInII}, we obtain that, almost surely,
\begin{equation}
    0 \leq \limsup_{n\to\infty} \left[R_{\alpha}(\hat{\theta}_n^{\alpha}) - R_{\alpha}(\theta_*^{\alpha})\right] \leq 0,
\end{equation}
from which \eqref{eq:CorollaryNotMargin} follows.

For each $n\in\mathbb{N}$, let $f_{\hat{\theta}_n^{\alpha}}:\mathcal{X}\to\overline{\mathbb{R}}$ be given by $f_{\hat{\theta}_n^{\alpha}}(x) = \langle \hat{\theta}_{n}^{\alpha}, x \rangle$. Since we have
\begin{align}
f_{\hat{\theta}_n^{\alpha}}(x) = \sigma^{-1}(\sigma(\hat{\theta}_{n}^{\alpha}\cdot x)) = \sigma^{-1}(g_{\hat{\theta}_{n}^{\alpha}}(x)),
\end{align}
Proposition~\ref{Prop:relationhardtosoft},~\eqref{eq:logisticisenough}, and~\eqref{eq:CorollaryNotMargin} imply that
\begin{equation}
\label{eq:CorollaryMargin}
    \lim_{n\to\infty} R_{\alpha}(f_{\hat{\theta}_{n}^{\alpha}}) = \min_{\theta\in\mathbb{B}_{d}(r)} R_{\alpha}(f_{\theta}) = \min\limits_{f:\mathcal{X} \rightarrow \mathbb{R}} R_{\alpha}(f) =: R_{\alpha}^*.
\end{equation}
Since $\tilde{l}^{\alpha}$ is classification-calibrated as established in Theorem~\ref{thm:alphalossclassificationcalibration}, Proposition~\ref{prop1} and \eqref{eq:CorollaryMargin} imply that
\begin{equation}
\lim_{n\to\infty} R(f_{\hat{\theta}_n^{\alpha}}) = \min\limits_{f: \mathcal{X} \rightarrow \mathbb{R}} \mathbb{P}[Y \neq \sign(f(X))] =: R^*,
\end{equation}
as required.
\end{proof}


\subsection{Further Experimental Results and Details}

\subsubsection{Brief Review of the F1 Score} \label{sec:f1review}
In binary classification, the $F_{1}$ score is a measure of a model's accuracy and is particularly useful when there is an imbalanced class, since it is known to give more precise performance information for an imbalanced class than simply using accuracy itself~\cite{f1score}.
In words, the $F_{1}$ score is the harmonic mean of the precision and recall, where precision is defined as the number of true positives divided by the number of true positives plus false positives (all examples the model declares as positive) and where recall is defined as the number of true positives divided by the number of true positives plus false negatives (all the examples that the model should have declared as positive).
Formally, the definition of the $\text{F}_{1}$ score is 
\begin{align} \label{eq:f1score}
\text{F}_{1} = \frac{2}{\text{recall}^{-1} + \text{precision}^{-1}} = \frac{\text{tp}}{\text{tp} + .5(\text{fp}+\text{fn})},
\end{align}
where $\text{tp}$, $\text{fp}$, $\text{fn}$ denote true positives, false positive, and false negatives, respectively. 
In practice, $\text{tp}$, $\text{fp}$, and $\text{fn}$ are drawn from the confusion matrix of the model on test data.
Note that the use of the term ``positive'', denoting the class name is arbitrarily chosen; in practice, one lets ``positive'' class denote the imbalanced class.

\subsubsection{Experiments for Section~\ref{sec:syntheticexp}} \label{sec:extrasyntheticexp}
In this section, we provide additional synthetic experiments, which follow the same experiment protocol as Figure~\ref{fig:planeplots}. They highlight some of the main themes of the paper, namely, $\alpha^{*} < 1$ in imbalanced experiments, $\alpha^{*} > 1$ in noisy experiments,
trade-offs between computational feasibility and accuracy (for both regimes of $\alpha$), and the saturation effect.

\begin{figure}[h] 
    \centering
    \centerline{\includegraphics[scale=.15]{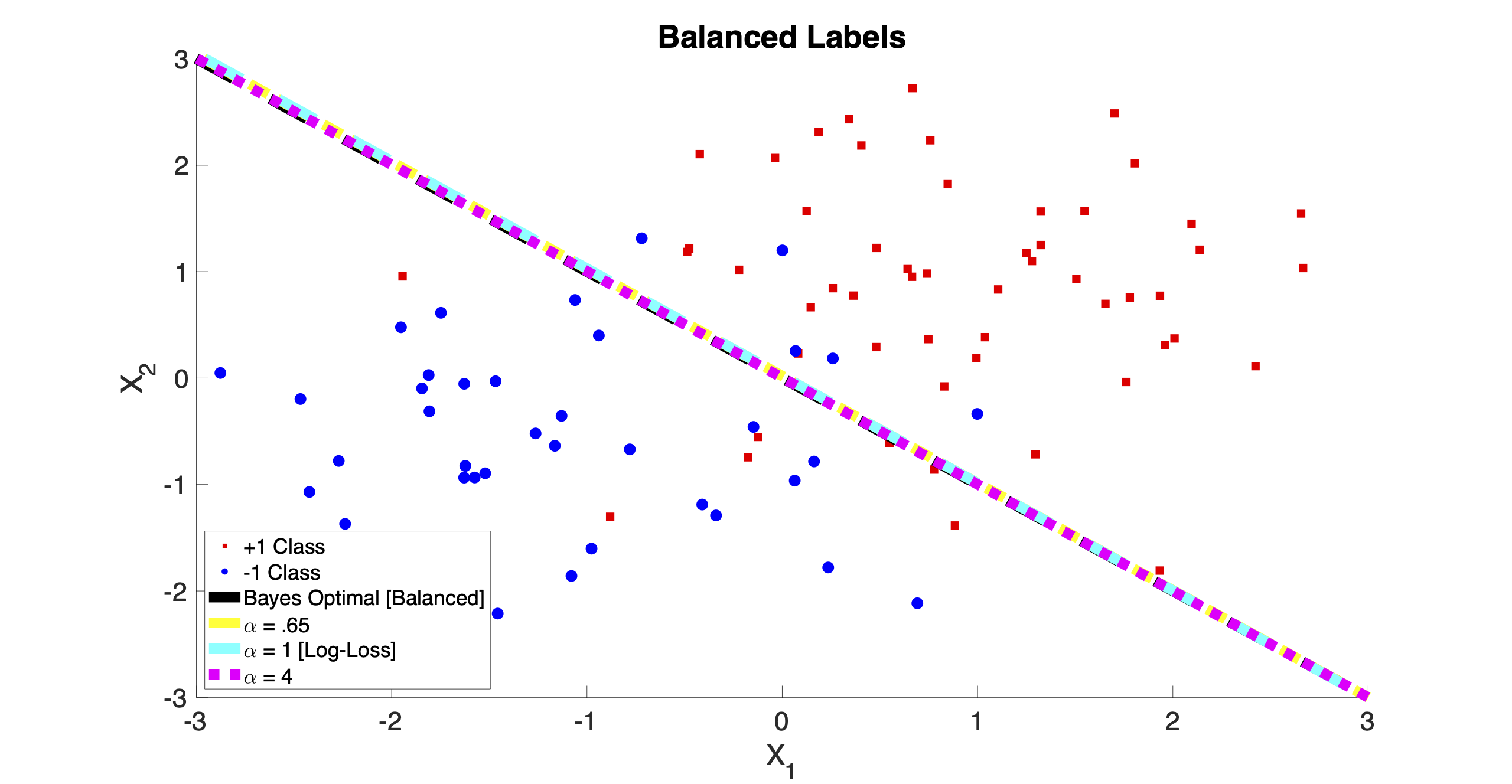}}
    \caption{A synthetic experiment highlighting the collapse in trained linear predictors of $\alpha$-loss for $\alpha \in \{0.65, 1, 4\}$ on clean, balanced data. Specifically, $\alpha$-loss is trained until convergence under the logistic model for a 2D-GMM with mixing probability $\mathbb{P}[Y=-1] = \mathbb{P}[Y=+1]$, symmetric means $\mu_{X|Y=-1} =[-1, -1] = -\mu_{X|Y=1}$, and shared covariance matrix $\Sigma = \mathbb{I}_{2}$. Averaged linear predictors generated by training of $\alpha$-loss averaged over 100 runs. Training data present in the figure is obtained from the last run in the experiment.}
    \label{fig:planeplotsclean}
\end{figure}

\begin{table}[]
\begin{center}
\begin{tabular}{cccccccccccccc}
             &          &     &            &           &           &   $\leftarrow$    &  $\alpha$'s   &   $\rightarrow$   &    &   &           & \\
             &          &     & .4         &  .5       & .65       & .8   & 1 & 2.5  & 4  & 8 & $10^{10}$ & $\infty$     \\ \clineB{2-13}{2}
             & 1        &     & 72.73      & 72.36     & 72.57    & 71.81 &  71.79  & 72.46    & 73.14   & 73.71     &  74.10 & 74.10        \\ \cline{2-13} 
             & 2        &     & 79.54      & 79.55     & 78.51         & 77.81 & 76.87   & 74.13    & 74.59   &  75.32    & 75.71 & 75.71   \\ \cline{2-13} 
            & 5        &     & 84.22      & 83.77     & 83.48         & 82.78 &  82.24  & 80.68    & 80.30   & 80.13     & 79.71 & 79.71   \\ \cline{2-13} 
$\uparrow$  & 10       &     & 87.86      & 87.54     & 87.55       & 87.30 & 87.09   &  85.59   &  85.36  & 85.08     &  84.99 & 84.99  \\ \cline{2-13} 
   Imb \%    & 15       &     & 89.01      & 88.98     & 88.74      & 88.66 & 88.63   &  88.32   & 88.09   &  88.14    &  87.97 & 87.97 \\ \cline{2-13}
$\downarrow$ & 20       &     & 90.09      & 90.11     & 89.96       & 89.88 & 89.79   & 89.61    &  89.59  &  89.73    & 89.60 & 89.60   \\ \cline{2-13}   
             & 30       &     & 91.55      & 91.36     & 91.30       & 91.27 & 91.24   & 91.16    & 91.10   & 90.90      & 90.75 & 90.75    \\ \cline{2-13}
             & 40       &     & 92.00      & 91.97     & 91.98  & 91.97 & 91.98 & 92.05 & 92.07   & 92.08   & 92.08 & 92.08      \\ \cline{2-13}
             & 50       &     & 92.08 & 92.09 & 92.08 & 92.08 & 92.08 & 92.08 & 92.07 & 92.06 & 92.06 & 92.06 \\ \clineB{2-13}{2}
\end{tabular}
\caption{Further quantitative results associated with Figure~\ref{fig:planeplots}(a) in Section~\ref{sec:syntheticexp} with exactly the same experimental setup. 
Values reported in the table are the test accuracy (in \%) of a linear predictive model tested on 1 million examples of clean, balanced synthetic test data. 
The linear model was learned by averaging models for 100 training examples over 100 runs. 
Such models were learned for different imbalance levels of the training data as shown in the table.
We found that the Bayes accuracy of this experiment was $92.14\%$.
In general, we find that $\alpha^{*} < 1$, which aligns with our theoretical intuition. 
This contrasts with the notable exception of $1\%$ imbalance, where $\alpha^* > 1$, which points towards the usefulness of \textit{class upweighting} in addition to employing $\alpha$-loss for such a highly imbalanced class.
Also of note, we find that smaller $\alpha$ is not always better (see <5\% imbalance), which hints at a trade-off between emphasizing the imbalanced class and computational infeasibility (e.g., exploding gradients) as discussed after Proposition~\ref{prop:SLQCpriortoevolution}.
Lastly, we note the closeness between $\alpha = 8$ and $10^{10}$ and $\infty$; this follows our theoretical intuition derived from the \textit{saturation effect} of $\alpha$-loss as depicted in~\eqref{eq:saturationeffect}.
}
\label{table:syntheticclassimbalanceaccuracy}
\end{center}
\end{table}

\begin{table}[]
\begin{center}
\begin{tabular}{cccccccccccccc}
             &          &     &            &           &           &   $\leftarrow$    &  $\alpha$'s   &   $\rightarrow$   &    &   &           & \\
             &          &     & .4         &  .5       & .65      & .8   & 1 & 2.5  & 4  & 8 & $10^{10}$ & $\infty$     \\ \clineB{2-13}{2}
             & 1        &     & 0.6261      & 0.6192     & 0.6231   & 0.6084 & 0.6081  & 0.6209 & 0.6338   & 0.6445    & 0.6517 & 0.6517  \\ \cline{2-13} 
             & 2        &     & 0.7446      & 0.7448     & 0.7280    & 0.7165 & 0.7007 & 0.6524 & 0.6607  & 0.6739 & 0.6807
 & 0.6807   \\ \cline{2-13} 
             & 5        &     & 0.8146  & 0.8083     & 0.8040    & 0.7938 & 0.7857  & 0.7619   & 0.7560   & 0.7534   & 0.7467
 & 0.7467  \\ \cline{2-13} 
$\uparrow$   & 10       &     & 0.8648    & 0.8605     & 0.8606    & 0.8573 & 0.8545   &  0.8341   &  0.8309  & 0.8270    & 0.8257 & 0.8257  \\ \cline{2-13} 
   Imb \%  & 15       &     & 0.8800     & 0.8797     & 0.8765    & 0.8755 & 0.8751  & 0.8710 & 0.8680 & 0.8687 & 0.8665 & 0.8665 \\ \cline{2-13}
$\downarrow$ & 20       &     & 0.8937   & 0.8940  & 0.8920 & 0.8910 & 0.8899 & 0.8876 & 0.8872 & 0.8892 & 0.8875 & 0.8875 \\ \cline{2-13}   
             & 30       &     & 0.9124      & 0.9100     & 0.9092 & 0.9089 & 0.9084  & 0.9074 & 0.9066 & 0.9040 & 0.9021 & 0.9021 \\ \cline{2-13}
             & 40       &     & 0.9187      & 0.9183 & 0.9184   & 0.9183 & 0.9183 & 0.9195 & 0.9199 & 0.9200    & 0.9201 & 0.9201 \\ \cline{2-13}
             & 50       &     & 0.9207  & 0.9207 & 0.9207  & 0.9208 & 0.9208 & 0.9208   & 0.9207 & 0.9206 & 0.9205 & 0.9205 \\ \clineB{2-13}{2}
\end{tabular}
\caption{A twin table of Table~\ref{table:syntheticclassimbalanceaccuracy}, except with $\text{F}_{1}$ scores reported. For a brief review of the $\text{F}_{1}$ score, see Appendix~\ref{sec:f1review}. Gains of $\alpha^{*} < 1$ over log-loss ($\alpha = 1$) are more exaggerated by the $\text{F}_{1}$ score, in particular see $2\%$ and $5\%$ imbalance.} 
\label{table:syntheticimbalancef1}
\end{center}
\end{table}

\begin{table}[]
\begin{center}
\begin{tabular}{cccccccccccccc}
             &          &     &            &           &           &   $\leftarrow$    &  $\alpha$'s   &   $\rightarrow$   &    &   &           & \\
             &          &     & .4         &  .5       & .65      & .8   & 1 & 2.5  & 4  & 8 & $10^{10}$ & $\infty$     \\ \clineB{2-13}{2}
             & 1        &     & 92.18      & 92.17     & 92.16    & 92.17 & 92.17  & 92.18    & 92.16   & 92.13    & 92.12 & 92.12  \\ \cline{2-13} 
             & 2        &     & 92.06      & 92.07     & 92.08    & 92.09 & 92.11 & 92.14 & 92.14   &  92.14 & 92.15 & 92.15   \\ \cline{2-13} 
             & 5        &     & 91.34  & 91.41     & 91.61    & 91.68 &  91.85  & 92.11    & 92.12   & 92.13    & 92.13 & 92.13   \\ \cline{2-13} 
$\uparrow$   & 10       &     & 90.41      & 90.34     & 90.53    & 90.89 & 91.29   &  92.01   &  92.04  & 92.05    & 92.06 & 92.06  \\ \cline{2-13} 
   Noise \%  & 15       &     & 88.45      & 88.72     & 89.03    & 89.53 & 90.14   & 91.95 & 92.02 & 92.02 & 92.03
 & 92.03 \\ \cline{2-13}
$\downarrow$ & 20       &     & 87.84      & 86.21     & 86.52    & 87.38 & 88.85   & 91.17    & 91.53  &  91.91   & 91.46 & 91.54   \\ \cline{2-13}   
             & 30       &     & 80.43      & 80.34     & 81.48    & 82.36 & 83.55   & 90.15    & 90.68   & 90.86 & 90.98 & 90.98 \\ \cline{2-13}
             & 40       &     & 75.02      & 75.20     & 75.11    & 75.38 & 75.89   & 83.00    & 84.51   & 85.59    & 85.82 & 85.82 \\ \cline{2-13}
             & 50       &     & 67.66      & 67.45     & 67.26    & 67.22 & 67.08   & 70.61    & 73.33   & 75.67    & 76.89 & 76.89   \\ \clineB{2-13}{2}
\end{tabular}
\caption{Further quantitative results associated with Figure~\ref{fig:planeplots}(b) in Section~\ref{sec:syntheticexp} with exactly the same experimental setup (training data with label noise). Values reported in the table are percent accuracy of averaged linear predictors, which were trained on noisy data, on 1 million examples of clean, balanced synthetic test data.
Similarly as in Table~\ref{table:syntheticclassimbalanceaccuracy}, we observe a saturation effect.
Further, note that $\alpha = \infty$ does not always outperform the smaller $\alpha$'s, in particular, see $20\%$ noise where $\alpha^{*} = 8$. 
This hints at a trade-off between $\alpha$ and computational feasibility in the large $\alpha$ regime ($\alpha 
> 1$), which also follows from our theoretical intuition as stated at the end of Section~\ref{sec:landscapelogisticmodel}.}
\label{table:syntheticnoisyaccuracy}
\end{center}
\end{table}

\subsubsection{Multiclass Symmetric Label Flip Experiments} \label{appen:multiclassexperiments}
In this section, we present multiclass symmetric noisy label experiments for the MNIST and FMNIST datasets.
Our goal is to evaluate the robustness of $\alpha$-loss over log-loss ($\alpha = 1$) to symmetric noisy labels in the training data.
We generate symmetric noisy labels in the multiclass training data as follows: 
\begin{enumerate}
    \item For each run of an experiment, we randomly select 0-40\% of the training data in increments of 10\%. 
    \item For each training sample in the selected group, we remove the true underlying label number from a list of the ten classes, then we roll a fair nine-sided die over the nine remaining classes in the list; once we have a new label, we replace the true label with the new randomly drawn label.
\end{enumerate}
Note that the test data is clean, i.e., we do not flip the labels of the test dataset. Thus, we consider the canonical scenario where the labels of the training data have been flipped, but the test data is clean.

The results of the multiclass symmetric noisy label experiments are presented in Tables~\ref{table:MNISTsymmetricnoisylabelsALL}~and~\ref{table:FMNISTsymmetricnoisylabelsALL}. 
Note that we use the same fixed learning rates as the binary symmetric noisy label experiments in Section~\ref{sec:experimentsbinarynoise}.
For the MNIST and FMNIST datasets with label flips, we find very strong gains in the test accuracy, which continue to improve as the percentage of label flips increases, through training $\alpha$-loss for $\alpha>1$ over log-loss ($\alpha = 1$). 
Once label flips are present in these two datasets, $\alpha^{*} = 7$ or $8$ for the CNN 2$+$2 architecture.

%
%

\begin{table}[H]
\begin{center}
\begin{tabular}{llccccc}
\hlineB{2}
\multicolumn{1}{c}{Dataset}                & \multicolumn{1}{c}{Architecture} & Label Flip \% & LL Acc & $\alpha$* Acc & $\alpha$*         & Rel Gain \% \\ \hlineB{2}
                                           &                                  & 0             & 99.16 & 99.16     & 0.9,0.99,1,1.1 & 0.00       \\ \cline{3-7} 
                                           &                                  & 10            & 94.15 & 99.00       & 8              & 5.15       \\ \cline{3-7} 
\multicolumn{1}{c}{\multirow{1}{*}{MNIST}} & \multicolumn{1}{c}{CNN 2$+$2}      & 20            & 85.90  & 98.84     & 8              & 15.06      \\ \cline{3-7} 
\multicolumn{1}{c}{}                       &                                  & 30            & 73.54 & 98.52     & 8              & 33.97      \\ \cline{3-7} 
                                           & \multicolumn{1}{c}{}             & 40            & 60.99 & 97.96     & 8              & 60.62      \\ \hlineB{2}
\end{tabular}
\caption{Multiclass symmetric noisy label experiment on \textbf{MNIST}.} 
\label{table:MNISTsymmetricnoisylabelsALL}
\end{center}
\end{table}

\begin{table}[H]
\begin{center}
\begin{tabular}{ccccccc}
\hlineB{2}
Dataset                  & Architecture & Label Flip \% & LL Acc & $\alpha$* Acc                  & $\alpha$* & Rel Gain \% \\ \hlineB{2}
                         &              & 0             & 90.45 & 90.45 & 1,1.1  & 0.00       \\ \cline{3-7} 
                         &              & 10            & 84.69 & 89.81                      & 8      & 6.05       \\ \cline{3-7} 
                         & CNN 2$+$2      & 20            & 77.51 & 89.27                      & 7      & 15.18      \\ \cline{3-7} 
\multirow{-3}{*}{FMNIST} &              & 30            & 67.94 & 88.10                       & 7      & 29.67      \\ \cline{3-7} 
                         &              & 40            & 68.28 & 88.20                      & 8      & 28.91      \\ \hlineB{2}
\end{tabular}
\caption{Multiclass symmetric noisy label experiment on \textbf{FMNIST}.}
\label{table:FMNISTsymmetricnoisylabelsALL}
\end{center}
\end{table}

\end{document}